\newcommand{\bfx}{\mathbf{x}}
\newcommand{\bfs}{\mathbf{s}}
\newcommand{\bfR}{\mathbf{R}}
\newcommand{\bfxi}{\boldsymbol{\xi}}
\newcommand{\radon}{\mathcal{R}}
\newcommand{\cramer}{Cram{\'e}r }
\DeclareMathOperator*{\argmin}{arg\,min}
\newcommand{\R}{\mathbb{R}}
\newcommand{\D}{\mathbb{D}}
\newcommand{\norm}[1]{\left\lVert#1\right\rVert}
\newcommand{\abs}[1]{\left\vert#1\right\vert}
\newcommand{\E}{\mathcal{E}}
\newcommand{\Radon}{\mathcal{R}}
\newcommand{\Proj}{P}
\newcommand{\Volterra}{\mathcal{V}}
\newcommand{\ICDF}[1]{(\Volterra #1)^{-1}}
\newcommand{\dswsq}{d_{SW_2}^2}
\newcommand{\dsw}{d_{SW_2}}
\newcommand{\dhatswsq}{\hat{d}_{SW_2}^2}
\newcommand{\dhatsw}{\hat{d}_{SW_2}}
\newcommand{\dwsq}{d_{W_2}^2}
\newtheorem{theorem}{Theorem}[section]
\newtheorem{lemma}[theorem]{Lemma}
\newtheorem{proposition}[theorem]{Proposition}
\begin{document}

\title{Fast rigid alignment of heterogeneous images in sliced Wasserstein distance}







\author[1]{Yunpeng Shi}
\author[2,3]{Amit Singer}
\author[3,*]{Eric J. Verbeke}

\affil[1]{Department of Mathematics, University of California at Davis, Davis CA, USA}
\affil[2]{Department of Mathematics, Princeton University,  Princeton, NJ, USA}
\affil[3]{Program in Applied and Computational Mathematics, Princeton University, Princeton, NJ, USA}

\affil[*]{Correspondence: ev9102@princeton.edu}

\date{}

\maketitle

\begin{abstract}
\noindent Many applications of computer vision rely on the alignment of similar but non-identical images. We present a fast algorithm for aligning heterogeneous images based on optimal transport. Our approach combines the speed of fast Fourier methods with the robustness of sliced probability metrics and allows us to efficiently compute the alignment between two $L \times L$ images using the sliced 2-Wasserstein distance in $\mathcal{O}(L^2 \log L)$ operations. We show that our method is robust to translations, rotations and deformations in the images.
\end{abstract}

\section{Introduction}

Image alignment is a central task in computer vision with diverse applications across many fields. The goal of rigid image alignment is to find a linear transformation, typically a rotation and a translation, that minimizes the discrepancy of one image to another. In many applications, the images being compared may be diffeomorphic or drawn from different underlying distributions. These images are commonly encountered in areas such as medical imaging~\cite{avants_symmetric_2008}, cryogenic electron microscopy (cryo-EM)~\cite{toader_methods_2023}, and astronomy~\cite{lintott_galaxy_2008}, and we refer to these as \textit{heterogeneous} images. Rigid alignment of heterogeneous images presents an additional challenge because standard correlation methods often fail to provide meaningful alignments as they are sensitive to deformations in the images. It has been shown that the landscape induced by the Euclidean distance~\cite{singer_alignment_2024} and the standard inner product~\cite{rangan_factorization_2020} are highly irregular in the context of density and image alignment.

A fundamental component of alignment algorithms is therefore the choice of metric used to compare the images. One category of metrics, and the basis of our approach, is derived from optimal transport theory. We refer the reader to the works of~\cite{villani_optimal_2009, santambrogio_optimal_2015, peyre_computational_2020} for more background on transport based metrics. Informally, optimal transport considers the images as probability measures and the distance between them is defined as the minimal cost of transporting the mass of one image into another. Thus, transport metrics are better equipped for handling some perturbations in the images. In particular, the Earth Mover's Distance (EMD) is well known as a robust metric for image retrieval~\cite{rubner_earth_2000}. More generally, the Wasserstein distance (also Kantorovich-Rubenstein metric) has been adopted for a wide range of image processing applications~\cite{kolouri_optimal_2017, bonneel_survey_2023}. 

Despite the favorable theoretical properties of the Wasserstein distance, it becomes prohibitively expensive to compute in high dimensions between two images, let alone a large set of images that require rigid alignment. Fast methods for approximating the Wasserstein distance include entropic regularization algorithms~\cite{cuturi_sinkhorn_2013, solomon_convolutional_2015}, but even these are too demanding for most image alignment tasks. To circumvent this high computational cost, an alternative approach known as the sliced Wasserstein distance was introduced in~\cite{rabin_wasserstein_2012, bonneel_sliced_2015}. In this approach, the average Wasserstein distance from 1-D projections of the input images are taken over the circle. This method takes advantage of the fact that in 1-D, there is a closed-form solution to the Wasserstein distance, making it easy to compute (see~\cite{santambrogio_optimal_2015} for a full derivation). Since its development, the sliced Wasserstein distance has been well studied and used in many applications~\cite{Kolouri_2016_CVPR, Deshpande_2018_generative, kolouri_generalized_2019}.

In this work, we develop and investigate a fast algorithm for rigid alignment of heterogeneous images based on a minimization of the sliced 2-Wasserstein distance. Our method is shown to be 1) robust to heterogeneous images, and 2) scalable to large numbers of images. We combine transport metrics with the speed of fast Fourier methods to develop an algorithm that aligns two $L \times L$ sized images in $\mathcal{O}(L^2 \log L)$ operations. We show theoretically and experimentally that our method gives accurate rotational alignments while being robust to translations and deformations of the images. First, we formalize the problem setup in~\Cref{sec:setup}. Then we present the related works in~\Cref{sec:related} and our algorithm in~\Cref{sec:algorithm}. We then describe the theory on the stability of the sliced Wasserstein distance in~\Cref{sec:theory} and provide experimental results on the MNIST digit dataset and cryo-EM data in~\Cref{sec:results}. In~\Cref{sec:discussion}, we discuss the implications of our method and potential extensions. The theory about the robustness of our method is presented in the Appendix. Our code is freely available at \url{github.com/EricVerbeke/fast_image_alignment_ot}.


\subsection{Problem setup}
\label{sec:setup}

We consider the problem of aligning images by rigid motion. Let $F$ be a digitized $L \times L$ image representing a function $f(\bfx) : \mathbb{R}^2 \rightarrow \mathbb{R}$ which is supported on the unit disk $\mathbb{D} = \{ \bfx \in \mathbb{R}^2 : \|\bfx\| < 1\}$. Our goal is to align a target image $g(\bfx)$ to a reference image $f(\bfx)$ by finding the parameters that minimize the following:

\begin{equation}
\label{eq:rigid_alignment}
\argmin_{\bfs, \theta} d(f, T_\bfs \bfR_\theta g) ,
\end{equation}
where $d: \mathbb{D} \times \mathbb{D} \rightarrow \mathbb{R}$ denotes a generic distance, $T_\bfs$ denotes a translation by $\bfs \in \mathbb{R}^2$, and $\bfR_\theta$ denotes a rotation by $\theta \in \mbox{SO}(2)$. Specifically:

\begin{equation}
\label{eq:shift}
T_\bfs g(\bfx) = g(\bfx - \bfs) ,
\end{equation}

\begin{equation}
\label{eq:rotation}
\bfR_\theta g(\bfx) = g(\bfR_\theta^\top \bfx) .
\end{equation}
From~\cref{eq:rigid_alignment}, it is clear that the alignment parameters depend on the choice of distance used. We represent each image by a probability measure supported on the unit disk. When convenient, and under the assumption of absolute continuity with respect to Lebesgue measure, we identify this measure with its density function. This imposes that the images are positive $f(\bfx) \geq 0 \; \forall \bfx \in \mathbb{D}$ and have equal mass $\int_\mathbb{D} f(\bfx) d\bfx = \int_\mathbb{D} g(\bfx) d\bfx = 1$. The assumption of equal mass holds for images that are 2-D tomographic projections of the same 3-D object. Importantly, we allow that $f\neq g$ up to a rigid transformation. Namely, one can apply additional deformations to $f$ and $g$. 

Alignment algorithms are often categorized as local optimization or exhaustive search methods. Local optimization approaches iteratively minimize with respect to rotation and translation separately but do not guarantee the global optimum. In contrast, exhaustive methods search a dense grid of rotations and translations parameters, but are much slower. Typically, exhaustive search methods work by accelerating either the rotation or translation while brute force searching the other parameter. In our work, we develop an algorithm for fast exhaustive search over rotations. That is, we seek to find $\bfR_\theta^*$, the rotation of $g$ that minimizes the distance between $f$ and $g$. Specifically:

\begin{equation}
\label{eq:optimal_rotation}
d(f, \bfR_\theta^* g) \coloneqq \min_{\theta}  d(f, \bfR_\theta g).
\end{equation}
We note that the algorithm we develop in this work can be used in either the local optimization or exhaustive search framework for rigid alignment, however we focus our results on the case of rotational alignment for simplicity.

\subsection{Related works}
\label{sec:related}


Many diverse algorithms exist for computing fast rigid alignment of images~\cite{reddy_fft_1996, rangan_factorization_2020,  rangan_radial_2023}. In general, these approaches utilize fast transforms or compression methods to compute standard correlations between images. The main difference between these works and our approach is the choice of metric used for comparison, specifically the sliced Wasserstein distance. While the original application of the sliced Wasserstein distance to images was texture mining and computing barycenters~\cite{rabin_wasserstein_2012, bonneel_sliced_2015}, similar metrics have also been adapted for applications such as image classification~\cite{kolouri_radon_2016, shifat-e-rabbi_invariance_2023}. Further developments also explore the sliced Wasserstein distance with signed images~\cite{gong_radon_2023}. 

In particular, our work is motivated by potential applications to cryo-EM. In the context of cryo-EM, there have recently been several applications of transport based approaches. The most relevant to this work is a K-means clustering algorithm for cryo-EM images~\cite{rao_wasserstein_2020}, which utilizes an approximation of the Earth mover's distance that scales linearly in the number of images~\cite{shirdhonkar_approximate_2008}. However, the computational complexity of this approach for rotational alignment of images is $O(L^3 \log L)$ operations. See~\cite{rao_wasserstein_2020} for more detail on the wavelet EMD alignment approach. We note that the Non-Uniform Fast Fourier Transfrom (NUFFT), which we use as part of our algorithm (see~\Cref{sec:computation}) is commonly used in combination with Euclidean distance for image alignment in cryo-EM~\cite{yang_cryo-em_2008}. Additionally, NUFFTs have also been used to boost the speed of the Sinkhorn distance~\cite{lakshmanan_nonequispaced_2023}. The Wasserstein distance has already been employed for the alignment of 3-D densities~\cite{riahi_alignot_2023, riahi_empot_2023, singer_alignment_2024} and for describing the space of molecular motions~\cite{zelesko_earthmover-based_2020}. Recently, the sliced Wasserstein distance was combined with generative adversarial networks for heterogeneous reconstruction in cryo-EM~\cite{zehni_cryoswd_2023}.

\section{This work}
\label{sec:approach}

We develop a fast image alignment algorithm based on a minimization of sliced probability metrics. Briefly, for our setting, sliced probability metrics refer to distances taken between linear transforms of 1-D line projections of 2-D images, which we describe in more detail below (see also~\cite{kolouri_generalized_2022}). In particular, we are motivated by the connection of the sliced Wasserstein to the Wasserstein distance~\cite{bonnotte2013unidimensional, Kolouri_2016_CVPR, kolouri_generalized_2019, park_geometry_2023}. For two probability measures $f$ and $g$ on the bounded convex set $\Omega \in \mathbb{R}^d$, the $p$-Wasserstein distance is defined as:

\begin{equation}
\label{eq:p_wasserstein}
d_{W_p}(f, g) = \left( \underset{\gamma \in \Gamma(f, g)}{\inf} \int_{\Omega\times \Omega} \|\mathbf{x_1} - \mathbf{x_2}\|^p d\gamma(\mathbf{x_1},\mathbf{x_2}) \right)^{1/p} ,
\end{equation}
where $\Gamma(f, g)$ is the set of joint distributions on $\Omega \times \Omega$ with marginals $f$ and $g$. When $p = 1$,~\cref{eq:p_wasserstein} is commonly referred to as the Earth mover's distance. While~\cref{eq:p_wasserstein} has proven useful in many applications, the na\"ive computation between two $L \times L$ images is $\mathcal{O}(L^6)$ operations, making it intractable for high-dimensional data.

An alternative approach for handling high-dimensional data based on~\cref{eq:p_wasserstein} was first proposed by Rabin, Peyr{\'e}, Delon, and Bernot in 2012~\cite{rabin_wasserstein_2012}, and is known as the sliced Wasserstein distance. In their approach, the average Wasserstein distance from 1-D projections of each measure is taken over the circle. In the context of two images, the sliced Wasserstein distance is defined as:

\begin{equation}
\label{eq:sliced_wasserstein}
d_{SW_p}(f, g) = \left(\frac{1}{2\pi}\int_0^{2\pi} d_{W_p}^p (P_\theta f, P_\theta g) d\theta \right)^{1/p} ,
\end{equation}
where $P_\theta$ is the projection operator defined as:

\begin{equation}
\label{eq:proj_operator}
P_\theta f(x) = \int f(\bfR_\theta^\top \bfx) dy,
\end{equation}
and $\bfx = (x, y)^\top$. This approach takes advantage of the fact that in 1-D, there is a closed-form solution to~\cref{eq:p_wasserstein}, making it easy to compute (see~\Cref{sec:distances} for more detail). 

Our approach for fast, heterogeneous image alignment is to combine the scalability of the sliced Wasserstein distance with well known acceleration techniques based on fast Fourier transforms. By combining these methods, we compute the rotational alignment of two $L \times L$ images in $\mathcal{O}(L^2 \log L)$ operations using the sliced Wasserstein distance. 
We summarize our contributions as follows:

\subsection{Contributions of our work }
\begin{itemize}
    \item We develop the first $O(L^2\log L)$ OT-based image alignment algorithm, that matches the complexity of the Euclidean-based methods.
    
    \item Motivated by potential applications in cryo-EM, we theoretically prove the stability of the sliced Wasserstein distance with respect to changes in viewing angles for tomographic images. Our analysis yields a tighter bound than that presented in \cite{leeb_metrics_2023}.
    
    \item To enhance the alignment of deformable images, we improve our algorithm by introducing the ramp-filtered sliced Wasserstein distance, and demonstrate empirically that it significantly improves the alignment of MNIST digits.
\end{itemize}
We expand on the details of the algorithm in the following section.

\subsection{Algorithm}
\label{sec:algorithm}

Our algorithm consists of two main steps, 1) a linear transformation of the images and 2) fast convolution of the transformed images. First, we describe how images are converted into 1-D probability density functions (PDFs) using a Radon transform and how they can be used to compute sliced probability metrics. We use the continuous setting for simplicity, then construct the discretized algorithm used for computation in~\Cref{sec:computation}.

\subsubsection*{Radon transform}
\label{sec:transform}

As noted in~\Cref{sec:approach}, we want to compute the distances between 1-D line projections of the images. Thus, the first step of our algorithm is to obtain the 1-D line projections from each image. This can be done by taking the 2-D Radon transform, which describes the projection of an image along lines at varying angles:

\begin{equation}
\label{eq:radon_transform}
\radon f(\theta, x) = P_\theta f(x) = \int f(\bfR_\theta^\top \bfx) dy.
\end{equation}
An alternative and fast approach to obtain the 1-D line projections, which we exploit later for our algorithm (see~\Cref{sec:computation}), is to make use of the Fourier slice theorem. Throughout, we model each image as a probability measure supported on the unit disk. We assume absolute continuity with respect to Lebesgue measure, with Lipschitz continuous 
densities $f,g$. Under these hypotheses, the Radon transform $\mathcal{R}f$ is well defined as a function.

The Fourier slice theorem states that a 1-D central slice of the 2-D Fourier transform of an image is equivalent to the 1-D line projection of the image orthogonal to the direction of the slice. That is, $\mathcal{S}_\theta \mathcal{F} f = \mathcal{F} P_\theta f$, where $\mathcal{F}$ is the 2-D Fourier transform of an image, defined as:

\begin{equation}
\label{eq:2d_ft}
\mathcal{F}f(\bfx) = \widehat{f}(\bfxi) = \iint_{\mathbb{R}^2} f(\bfx) e^{-2 \pi i \langle \bfxi , \bfx \rangle} d\bfx .
\end{equation}
Here, $\langle \cdot, \cdot \rangle$ is the standard inner product and $\bfxi \in \mathbb{R}^2$ are the spatial frequencies. The slicing operator $\mathcal{S}$ denotes the restriction of a function to a lower dimension hyperplane. In our setting, we restrict the 2-D Fourier transform of an image to a central 1-D slice. Specifically:

\begin{equation}
\label{eq:slice_operator}
(\mathcal{S}_\theta \hat{f})(\xi) = \hat{f}(\xi \cos\theta, \xi \sin\theta) .
\end{equation}
The 1-D Fourier transform is defined as:

\begin{equation}
\label{eq:1d_ft}
\mathcal{F}f(x) = \widehat{f}(\xi) = \int_{\mathbb{R}} f(x) e^{-2 \pi i  \xi  x} dx .
\end{equation}
Thus, to obtain the 1-D line projections from the central slices of the 2-D Fourier transformed images, one can simply take the inverse Fourier transform of each 1-D central slice:

\begin{equation}
\label{eq:1d_ift}
\mu_\theta(x):=P_\theta f (x) = \mathcal F^{-1} \left(\widehat{f}(\bfR_\theta^\top(\xi, 0)^\top)\right) = \frac{1}{2\pi} \int  \widehat{f}(\bfR_\theta^\top(\xi, 0)^\top) e^{2 \pi i \xi x} d\xi.
\end{equation}
We denote $P_\theta f = \mu_\theta$ and $P_\theta g = \nu_\theta$ to be the 1-D line projections of images $f$ and $g$ at angle $\theta$. 

\subsubsection*{High-pass filter (optional)}
We note that $\mu_\theta$ and $\nu_\theta$ are obtained by averaging $f$ along certain directions. This averaging process may suppress too many high frequency components, which may make $\mu_\theta$ and $\nu_\theta$ less distinguishable, and therefore not suitable for certain image alignment tasks. As an alternative to directly using the line projections $\mu_\theta$ and $\nu_\theta$, one may consider their sharpened version. That is, when computing the line projections, we optionally apply a high-pass filter to each central slice before taking the inverse 1-D Fourier transform. The intuition is to downweight the low frequency components which are oversampled on the polar Fourier grid used to obtain the 1-D central slices. 

A natural high-pass filter in our case is the ramp filter, whose frequency response is proportional to the absolute frequency. Indeed, in the inverse Radon transform, the ramp filter is applied to each slice in the Fourier domain. It exactly counteracts the smoothing effect of the integration, and thereby enables exact reconstruction of the original signal from its low dimensional projections without loss of information. Specifically, we take:

\begin{equation}
\label{eq:1d_ift_ramp}
\tilde \mu_\theta(x):=h*P_\theta f (x) = \mathcal F^{-1} \left(|\xi|\widehat{f}(\bfR_\theta^\top(\xi, 0)^\top)\right) = \frac{1}{2\pi} \int |\xi| \widehat{f}(\bfR_\theta^\top(\xi, 0)^\top) e^{2 \pi i \xi x} d\xi,
\end{equation}
where $h$ is the ramp filter in the spatial domain, whose Fourier transform is $\widehat h(\xi)=|\xi|$. However, applying this filter does not enforce positivity of the line projections, which is required for probability measures. Thus, transport metrics are not directly applicable to the filtered slices. We therefore split each line projection into the sum of its positive and negative parts $\mu_\theta^+ = \max(0, \tilde\mu_\theta)$ and $\mu_\theta^- = \min(0, \tilde\mu_\theta)$, such that:

\begin{equation}
\label{eq:line_decomp}
\tilde\mu_\theta = \mu_\theta^+ + \mu_\theta^- .
\end{equation}
We always assume that $\mu_\theta^+$ and $\mu_\theta^-$ are normalized to 1. 

\subsubsection*{Sliced Wasserstein distance}
\label{sec:distances}



Using the 1-D line projections, we next define the sliced Wasserstein distance between two images and a ramp-filtered variant. For each projection angle, the Radon transform produces a one-dimensional measure 
supported on $[-1,1]$. Under absolute continuity, this measure admits a density, which we denote by $\mu_\theta$. In the algorithmic implementation we normalize 
these line measures so that they are probability measures. The cumulative distribution function (CDF) of each PDF is then:
\begin{equation}
\label{eq:cdf}
(\Volterra \mu)(t) = \int_{-\infty}^t \mu(x) dx ,
\end{equation}
where $\Volterra$ is sometimes referred to as the Volterra operator. We note that while the \cramer distance is defined between two CDFs~\cite{cramer_composition_1928} and can be used in a similar algorithm to the one we develop, we do not investigate this metric in our study. For 1-D distributions, the Wasserstein distance in~\cref{eq:p_wasserstein} has a closed form solution which can be written as the norm between the functional inverse of two CDFs. That is:

\begin{equation}
\label{eq:1d_wasserstein}
d_{W_p}(\mu, \nu) = \left( \int_{\mathbb{R}} \left| (\Volterra \mu)^{-1}(z) - (\Volterra \nu)^{-1}(z) \right|^p dz \right)^{1/p} ,
\end{equation}
where $(\Volterra \mu)^{-1}$ is defined as:

\begin{equation}
\label{eq:inv_cdf}
(\Volterra \mu)^{-1}(z) = \inf \{ t : (\Volterra \mu)(t) = z \} .
\end{equation}
The functional inverse of the CDF (ICDF) is also referred to as the quantile function. Thus we can use~\cref{eq:1d_wasserstein} in place of~\cref{eq:p_wasserstein} to compute the sliced Wasserstein distance between two images in~\cref{eq:sliced_wasserstein}. When computing the sliced Wasserstein distance without ramp filter, we use the formula:

\begin{equation}
\label{eq:sliced_wasserstein_2}
d_{SW_p}^p(f, g) = \frac{1}{2\pi}\int_0^{2\pi} d_{W_p}^p (\mu_\theta, \nu_\theta) d\theta .
\end{equation}
When the ramp filter is applied and the line projections contain both positive and negative values, we also consider the following ramp-filtered sliced Wasserstein distance (RFSW). Let:

\begin{equation}
\tilde \mu_\theta(x):= h * P_\theta f (x), \quad \mu_\theta^+=\max(0, \tilde \mu_\theta),\quad \mu_\theta^-=\min(0, \tilde \mu_\theta);
\end{equation}

\begin{equation}
\tilde \nu_\theta(x):= h * P_\theta g (x), \quad \nu_\theta^+=\max(0, \tilde \nu_\theta),\quad \nu_\theta^-=\min(0, \tilde \nu_\theta).
\end{equation}
Then:
\begin{equation}
\label{eq:signed_sliced_wasserstein}
d_{RFSW_p}^p (f, g) = \frac{1}{2\pi}\int_0^{2\pi} d_{W_p}^p (\mu_\theta^+, \nu_\theta^+) d\theta + \frac{1}{2\pi}\int_0^{2\pi} d_{W_p}^p (\mu_\theta^-, \nu_\theta^-) d\theta.
\end{equation}
A visualization of the transforms used to compute the ramp-filtered sliced Wasserstein distance is shown in~\Cref{fig:transforms}.






\begin{figure}[!ht]
	\includegraphics[width=1\textwidth]{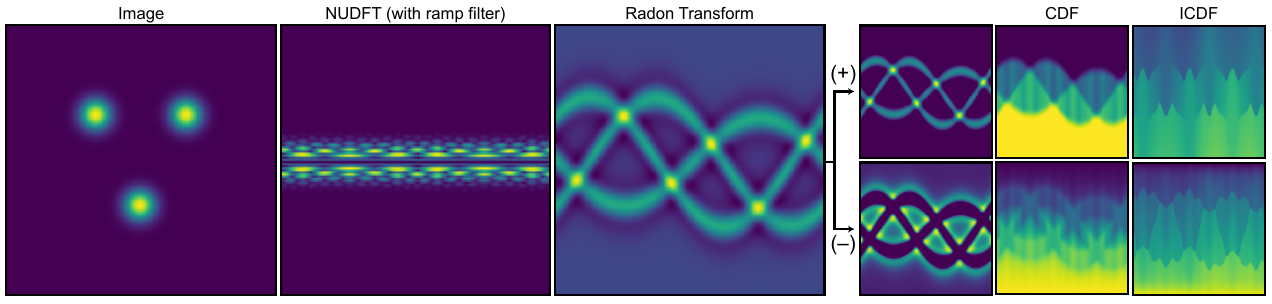}
	\centering
	\caption{Visualization of image transformations used to approximate the ramp-filtered sliced Wasserstein distance in~\cref{eq:signed_sliced_wasserstein}.}
	\label{fig:transforms}
\end{figure}





\subsection{Computation}
\label{sec:computation}

In application, $f$ and $g$ are not continuous functions, but matrices of size $L \times L$, and we therefore use the notation $F$ and $G$. 
The matrices $F$ and $G$ represent discrete cell averages of $f$ and $g$ on a uniform Cartesian grid. We are primarily interested in efficiently approximating~\cref{eq:optimal_rotation} using the sliced Wasserstein distance in~\cref{eq:sliced_wasserstein_2} and the ramp-filtered sliced Wasserstein distance in~\cref{eq:signed_sliced_wasserstein} for two images $F$ and $G$ over a fixed grid of rotations.

In the discrete implementation of the Radon transform, each projection at angle $\theta$ produces 
a finite set of values that we interpret as a discrete measure on $[-1,1]$. 
For consistency, we normalize these outputs so that they are probability 
measures. Under absolute continuity, we also identify them with their 
piecewise-constant densities on the grid.

To compute the sliced Wasserstein distance in the discrete case, we first need the discretized ICDFs from the 1-D line projections (PDFs) of each image. Let $U, V \in \mathbb{R}^{L \times n}$ be the matrices representing the Radon transforms of $f$ and $g$, whose columns are the discrete PDFs of $\mu_{\theta}$ and $\nu_{\theta}$, respectively. Here, $n$ is the number of sampled projection angles, which we choose to scale linearly with $L$ (i.e., $U$ and $V$ are typically of size $\mathbb{R}^{L \times L}$). Rather than rotating and projecting the images explicitly, we instead utilize the Fourier slice theorem by applying a NUFFT~\cite{dutt_fast_1993, barnett_parallel_2019}. Using the NUFFT, 1-D central slices of the 2-D Fourier transformed image can be obtained in $\mathcal{O}(L^2 \log L)$ operations. The discrete Radon transform of each image can then be obtained in the same complexity by applying a 1-D inverse discrete Fourier transform to the central slices of the 2-D Fourier image.

Next, we need to approximate the CDF and ICDF from each column of $U$ and $V$. We first convert the discrete PDFs to discrete CDFs by taking their respective cumulative sums. Then, the discrete ICDFs can be linearly interpolated from each discrete CDF. We remark that the computational complexity in these steps are negligible. We denote $U_C, V_C \in \mathbb{R}^{L \times n}$ to be the matrices whose columns are the discrete CDFs of the columns of $U$ and $V$. Similarly, $U_I, V_I \in \mathbb{R}^{L \times n}$ are the matrices whose columns are the functional inverse of the columns of $U_C$ and $V_C$. Then, for $p=2$, it is clear that the sliced 2-Wasserstein distance satisfies:

\begin{equation}
\label{eq:discrete_sw2}
d_{SW_2}^2(f, g) \approx \dhatsw^2(F,G):=\frac{1}{nL} \| U_I - V_I \|_F^2 ,
\end{equation}
and the ramp-filtered sliced 2-Wasserstein distance satisfies:

\begin{equation}
\label{eq:discrete_ssw2}
d_{RFSW_2}^2(f, g) \approx \hat d_{RFSW_2}^2(F,G):=\frac{1}{nL}\left( \| U_I^+ - V_I^+\|_F^2 + \| U_I^- - V_I^-\|_F^2\right),
\end{equation}
where $\| . \|_F$ denotes the Frobenius norm of a matrix, and $U_I^+ , U_I^-$ correspond to the signed ICDFs. The computational complexity for the distances in~\cref{eq:discrete_sw2} and~\cref{eq:discrete_ssw2} is therefore $\mathcal{O}(L^2 \log L)$ operations. We remark that the approximation error comes from the discretization error in Fourier transform, the linear interpolation and the sampling of projection angles. The approximation error in~\cref{eq:discrete_sw2} is controlled by the following theorem.

\begin{theorem}
\label{thm:main}
Let $f, g$ be probability measures strictly supported on the unit disk $\D$, with Lipschitz constant $K$. Assume that $f,g$ are absolutely continuous with respect to Lebesgue measure, with densities (also denoted by $f,g$ for simplicity). Let $F, G$ be their discretizations on an $L \times L$ grid using cell averages. Let $\dhatsw(F,G)$ be the sliced 2-Wasserstein distance computed using $n$ projection angles and the NUFFT-based algorithm with target precision $\epsilon$. Assume that for all $\theta$, the 1-D projected densities $\Proj_\theta f, \Proj_\theta g$ and their discrete counterparts are bounded below by a constant $\mu_{\min} > 0$ on their support. Then there exists a constant $C$, depending on $K, \mu_{\min}$, such that:
$$ \abs{\dsw^2(f,g) - \dhatsw^2(F,G)} \le C \left( \frac{1}{n} + \frac{1}{L} + \epsilon \right). $$
\\
Moreover, if the assumption $\mu_{\min}>0$ is removed, there exists a constant $C'$, depending only on $K$, such that:
\[
\bigl|\dsw^2(f,g)-\dhatsw^2(F,G)\bigr|
\;\le\;
C'\left(\frac{1}{n}\;+\;\frac{1}{\sqrt{L}}\;+\;\sqrt{\epsilon}\;\right).
\]
\end{theorem}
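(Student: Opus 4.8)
The plan is to split the total error into three additive contributions that can be analyzed independently: the angular sampling error ($1/n$), the spatial discretization error ($1/L$ or $1/\sqrt{L}$), and the NUFFT precision error ($\epsilon$ or $\sqrt{\epsilon}$). I would insert intermediate quantities and use the triangle inequality. Let $D_n := \frac{1}{n}\sum_{j=1}^n \dwsq(\mu_{\theta_j}, \nu_{\theta_j})$ denote the sum over the sampled angles using the \emph{exact} continuous projections, so that $\dswsq(f,g)-\dhatswsq(F,G)=\bigl(\dswsq(f,g)-D_n\bigr)+\bigl(D_n-\dhatswsq(F,G)\bigr)$. The first bracket is the angular error; the second collects the spatial and NUFFT errors, which I would treat angle-by-angle with a single perturbation estimate.

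For the angular term, the goal is to show that $\theta\mapsto \dwsq(\mu_\theta,\nu_\theta)$ is Lipschitz on $[0,2\pi]$, from which the equispaced Riemann-sum error is $O(1/n)$. Writing $\mu_\theta=(\pi_\theta)_\#f$ as the pushforward of $f$ under $\pi_\theta(\bfx)=\langle\bfx,(\cos\theta,\sin\theta)\rangle$, the coupling induced by the identity map on $f$ gives
\[
\dw(\mu_\theta,\mu_{\theta'})^2 \le \int_\D \abs{\pi_\theta(\bfx)-\pi_{\theta'}(\bfx)}^2 f(\bfx)\,d\bfx \le \abs{\theta-\theta'}^2,
\]
since $\norm{\bfx}<1$ on $\D$; thus $\theta\mapsto\mu_\theta$ is $1$-Lipschitz in $\dw$ (likewise for $\nu_\theta$), independently of $K$. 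Because all measures are supported on $[-1,1]$, $\dw(\mu_\theta,\nu_\theta)$ is uniformly bounded, so factoring $a^2-b^2=(a-b)(a+b)$ with $a=\dw(\mu_\theta,\nu_\theta)$ upgrades this to Lipschitz continuity of the squared integrand, and the $O(1/n)$ bound follows.

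The remaining term $\abs{D_n-\dhatswsq(F,G)}$ is controlled angle-by-angle. Fix $\theta$, let $\hat\mu_\theta,\hat\nu_\theta$ denote the discrete (cell-averaged, NUFFT-, and linear-interpolation-computed) surrogates, and write $Q_\mu=\ICDF{\mu_\theta}$, $Q_\nu=\ICDF{\nu_\theta}$ for the true quantiles and $\hat Q_\mu,\hat Q_\nu$ for their computed counterparts. Setting $a=Q_\mu-Q_\nu$ and $\hat a=\hat Q_\mu-\hat Q_\nu$,
\[
\abs{\dwsq(\mu_\theta,\nu_\theta)-\dwsq(\hat\mu_\theta,\hat\nu_\theta)}=\abs{\int_0^1 (a-\hat a)(a+\hat a)\,dz}\le \norm{a-\hat a}_{L^2}\,\norm{a+\hat a}_{L^2},
\]
and $\norm{a+\hat a}_{L^2}=O(1)$ since all quantiles take values in $[-1,1]$. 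Everything thus reduces to the $L^2$ quantile error $\norm{Q_\mu-\hat Q_\mu}_{L^2}=\dw(\mu_\theta,\hat\mu_\theta)$. The cell-averaging of the Lipschitz density, the cumulative-sum formation of the CDF, the NUFFT truncation, and the linear interpolation each perturb the computed CDF uniformly by $O(1/L)$ from discretization and $O(\epsilon)$ from the transform; I would bundle these into a single CDF perturbation $\delta=O(1/L+\epsilon)$ measured in $\norm{\cdot}_\infty$ (and in $L^1$).

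The two regimes of the theorem arise precisely from how $\delta$ propagates to the quantile, and this is the step I expect to be the main obstacle. When $\mu_{\min}>0$, the quantile $\ICDF{\mu_\theta}$ is $(1/\mu_{\min})$-Lipschitz, so a uniform CDF perturbation of size $\delta$ displaces the quantile by at most $\delta/\mu_{\min}$ pointwise, giving $\dw(\mu_\theta,\hat\mu_\theta)=O(\delta)=O(1/L+\epsilon)$ and hence the first bound. Without a lower bound the quantile can be arbitrarily steep and need not be Lipschitz; here I would instead use the dimension-free interpolation $\dwsq(\mu_\theta,\hat\mu_\theta)\le \norm{Q_\mu-\hat Q_\mu}_\infty\, d_{W_1}(\mu_\theta,\hat\mu_\theta)\le 2\,d_{W_1}(\mu_\theta,\hat\mu_\theta)$ together with the exact $1$-D identity $d_{W_1}(\mu_\theta,\hat\mu_\theta)=\norm{\Volterra\mu_\theta-\Volterra\hat\mu_\theta}_{L^1}=O(\delta)$, which yields only $\dw(\mu_\theta,\hat\mu_\theta)=O(\sqrt{\delta})=O(1/\sqrt{L}+\sqrt{\epsilon})$ and the second bound. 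The delicate point is this $W_1$-to-$W_2$ interpolation with no density floor, where the quantile stability genuinely degrades to a square-root rate. A secondary technicality is the boundary regularity of $\mu_\theta=\Proj_\theta f$, whose derivative carries an integrable square-root singularity at $x=\pm1$; I would verify that the $O(1/L)$ CDF estimates survive, as they rely only on $\mu_\theta$ being bounded and of bounded variation rather than smooth.
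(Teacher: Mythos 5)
Your proposal follows essentially the same route as the paper's proof: the same error decomposition (angular quadrature error, spatial/NUFFT perturbation, quantile-level error), the same Lipschitz-in-angle argument giving the $O(1/n)$ term, and the same two-regime quantile stability — a $(1/\mu_{\min})$-Lipschitz inverse CDF under the density floor versus the $W_1$-to-$W_2$ interpolation $d_{W_2}^2 \le \mathrm{diam}\cdot d_{W_1}$ without it, which is exactly how the paper also arrives at the $1/\sqrt{L}+\sqrt{\epsilon}$ rate. The only substantive difference is that you assert, rather than prove, that the NUFFT with relative precision $\epsilon$ perturbs the CDFs by $O(\epsilon)$ — this is the paper's most technical step, requiring Parseval on $[-1,1]$, an $H^1$ bound on $P_\theta f$, and a Lipschitz-based bound on $\|P_\theta f_L\|_{L^2}$ — and you leave implicit the final Riemann-sum-in-$z$ quadrature error (the computed quantity is a finite sum over quantile grid points, not an integral), which the paper bounds by $O(1/L)$ via the bounded variation of the squared quantile difference.
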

\noindent The proof of this theorem is left to~\Cref{app:thm_err}, where the constants $C$ and $C'$ are specified.

\paragraph{Remark.}
We note that the convergence rate in the absence of the density floor assumption $\mu_{\min}>0$ can likewise be extended to the ramp-filtered setting, which we also detail in~\Cref{app:thm_err}.

\subsubsection*{Fast computation over rotations}
Finally, to find $\bfR_\theta^*$, we need to compute the discretized distances over a fixed grid of rotations. Note that rotations of $F$ and $G$ correspond to cyclic shifts of the columns of $U$ and $V$, respectively. Thus, to approximate the sliced 2-Wasserstein distance over rotations of $G$, we compute:

\begin{equation}
\label{eq:discrete_rot_sw2}
nL\cdot\min_\theta d^2_{SW_2}(f, \bfR_\theta g) \approx \min_l \| U_I - T_l V_I\|_F^2 = \min_l \sum_{i=1}^n \| u_i - v_{i+l} \|^2 = \sum_i \| u_i \|^2 + \sum_i \| v_{i+l} \|^2 - 2 \sum_i \langle u_i , v_{i + l} \rangle ,
\end{equation}
where $T_l$ is a shift matrix. We use $u = (\Volterra \mu)^{-1}$ and $v = (\Volterra \nu)^{-1}$ to denote the discrete ICDFs from the line projections of image $F$ and $G$, respectively. Na\"ively, the distance over rotations in~\cref{eq:discrete_rot_sw2} can be computed in $\mathcal{O}(L^3)$ operations. However, it is well known that this can be accelerated using the convolution theorem~\cite{reddy_fft_1996}. Note that the first two terms in the RHS are unrelated to $l$ so we only need to compute them once. Then, we need a fast evaluation of the term $\sum_i \langle u_i , v_{i + l} \rangle$ for each $l$. We note that:

\begin{equation}
\sum_i \langle u_i , v_{i + l} \rangle=\sum_j \sum_i {U_I}_{ji} {V_I}_{j,i+l} .
\end{equation}
For each row $j$, we need to evaluate $\sum_i {U_I}_{ji} {V_I}_{j,i+l}$ for each $l$. This is exactly the cross correlation between $j$-th row of $U_I$ and $V_I$ which can be computed fast using the FFT in $O(L \log L)$ operations. For $1 \leq j \leq n$ where $n$ is linearly proportional to $L$, the total complexity for our alignment algorithm is therefore $O(L^2 \log L)$ operations. Similarly, if the ramp filter is applied, the discrete ramp-filtered sliced 2-Wasserstein distance over rotations is:

\begin{equation}
\label{eq:discrete_rot_signed_sw2}
nL\cdot d^2_{RFSW_2}(f, \bfR_\theta^* g) \approx \min_l \left( \| U_I^+ - T_l V_I^+\|_F^2 + \| U_I^- - T_l V_I^-\|_F^2 \right) .
\end{equation}
We note that many sampling schemes exist for approximating the sliced Wasserstein distance~\cite{sisouk_users_2025} and discuss them further in~\Cref{app:alignment_extended}. Timing result comparisons to other transport algorithms are shown in~\Cref{sec:timing}, and numerical results for the alignment of a rotated MNIST digit dataset are shown in~\Cref{sec:alignment}. Additionally, the alignment results are expanded in~\Cref{app:alignment_extended} to include comparisons to the Monte Carlo sliced 2-Wasserstein distance, max-sliced 2-Wasserstein distance, and Sinkhorn distance.

\section{Theory on the stability of the sliced Wasserstein distance}
\label{sec:theory}

In this section, we provide theory for some of the useful properties of the sliced Wasserstein distance. First, in~\Cref{thm:view}, we show how the sliced Wasserstein distance is stable to the change in viewing direction, which is particularly useful for the tomographic imaging setting. Then, in~\Cref{thm:shift}, we show the stability of the sliced Wasserstein distance to shifted images, which is useful when the images may be off-centered.

To begin, suppose we are given two viewing directions $a_1$ and $a_2$ where $\angle(a_1, a_2) = \theta$. Assume that the two corresponding projection images are $F$ and $G$ respectively, and let $Z$ be the volume (3-D function) defined on the compact support with radius $1$ and $\|Z\|_1=1$, WLOG.

\begin{theorem}[Stability to the change of viewing directions]
\label{thm:view}
Let $Z$ be any 3-D probability measure supported on the unit ball, and $f$ and $g$ be the projections of $Z$ whose projection directions differ by angle $\theta$. The rotation invariant sliced p-Wasserstein distance between the two projections satisfies:

\begin{equation}
\min_{\alpha}d_{SW_p}(f, \bfR_\alpha g)\leq \left(1-\frac{2p}{(p+1)\pi}\right)^{\frac{1}{p}}\theta .
\end{equation}
\end{theorem}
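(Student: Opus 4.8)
The plan is to exploit the fact that a one-dimensional line projection of a two-dimensional projection of $Z$ is itself a one-dimensional projection of the three-dimensional volume $Z$, so that the integrand of the sliced distance compares two \emph{one-dimensional} projections of a single measure taken in nearby directions. Writing $a_1^\perp, a_2^\perp$ for the two image planes, for each in-plane angle $\phi$ the operator $P_\phi$ composed with projection along $a_i$ integrates $Z$ over a two-plane, so that
\[
P_\phi f = \Pi_{u_1(\phi)} Z, \qquad P_\phi g = \Pi_{u_2(\phi)} Z ,
\]
where $\Pi_u Z$ denotes the $1$-D projection (Radon transform) of $Z$ onto a direction $u$ on the sphere, and $u_1(\phi)\in a_1^\perp$, $u_2(\phi)\in a_2^\perp$ are unit vectors. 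This identification is just Fubini (equivalently, the Fourier slice theorem applied twice) and is the conceptual heart of the argument.

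Next I would bound, for each $\phi$, the one-dimensional Wasserstein distance between these two projections by an explicit coupling. Since $\Pi_u Z = (\bfx\mapsto\langle \bfx,u\rangle)_\# Z$, the pushforward of $Z$ under $\bfx\mapsto(\langle \bfx,u_1(\phi)\rangle,\langle \bfx,u_2(\phi)\rangle)$ is an admissible coupling, giving
\[
d_{W_p}^p(P_\phi f,P_\phi g)\le\int \abs{\langle \bfx,u_1(\phi)-u_2(\phi)\rangle}^p\,dZ(\bfx)\le\norm{u_1(\phi)-u_2(\phi)}^p ,
\]
where the last step uses that $\mathrm{supp}\,Z$ lies in the unit ball, so $\abs{\langle \bfx,w\rangle}\le\norm{w}$, together with $\int dZ=1$. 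This reduces everything to the Euclidean geometry of the two projection directions.

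The geometric step is to compute $\norm{u_1(\phi)-u_2(\phi)}$ and to fix the in-plane rotation $\alpha$. Placing $a_1=(0,0,1)$ and $a_2$ tilted by $\theta$, the rotation axis is $n\propto a_1\times a_2$, and choosing aligned reference frames in the two image planes (this \emph{is} the choice of $\alpha$) gives $u_2(\phi)=Q_\theta u_1(\phi)$, where $Q_\theta$ rotates by $\theta$ about $n$. A short computation then yields
\[
\norm{u_1(\phi)-u_2(\phi)} = 2\sin(\theta/2)\,\abs{\cos\phi} ,
\]
the factor $\abs{\cos\phi}$ being the sine of the angle between $u_1(\phi)$ and the axis $n$. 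Crucially, since the statement is a $\min_\alpha$, I need only exhibit \emph{one} good $\alpha$, namely the aligned frame above; no optimality of $\alpha$ is required, which removes what would otherwise be the most delicate point.

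Finally I would integrate over $\phi$ and simplify. Combining the last two displays with the definition of the sliced distance,
\[
\min_\alpha d_{SW_p}^p(f,\bfR_\alpha g)\le\bigl(2\sin(\theta/2)\bigr)^p\,\frac{1}{2\pi}\int_0^{2\pi}\abs{\cos\phi}^p\,d\phi\le\theta^p\cdot\frac{1}{2\pi}\int_0^{2\pi}\abs{\cos\phi}^p\,d\phi ,
\]
using $2\sin(\theta/2)\le\theta$; taking $p$-th roots gives the sharper constant $\bigl(\tfrac{1}{2\pi}\int_0^{2\pi}\abs{\cos\phi}^p\,d\phi\bigr)^{1/p}$, of which $1-\frac{2p}{(p+1)\pi}$ is a convenient closed-form upper bound, yielding the claim. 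The main obstacle is the geometric identification in the first step together with pinning down the $\abs{\cos\phi}$ law (the correct bookkeeping of the image-plane frames and the rotation axis $n$); the coupling bound and the final angular integral are routine, and the $\min_\alpha$ is handled for free by substituting the aligned frame.
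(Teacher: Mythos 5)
Your proposal is correct, and it follows the same skeleton as the paper's proof: identify each 1-D slice of the two images as a 1-D projection of $Z$ along directions $u_1(\phi)$, $u_2(\phi)$ lying in the two image planes, fix the aligned in-plane frame (which, as you note, disposes of $\min_\alpha$ since only an upper bound is needed), bound the per-angle 1-D Wasserstein distance by the geometric discrepancy of the two directions, and average over the circle. Where you differ is in how the per-angle bound is obtained, and the difference works in your favor. The paper works with the \emph{angle} $g(\alpha;\theta)=2\sin^{-1}\bigl(\sin\alpha\,\sin(\theta/2)\bigr)$ between the two directions, derived from the spherical law of cosines, and bounds $d_{W_p}\le g(\alpha;\theta)$ via a two-step lemma (projection is a $W_p$-contraction under marginalization of couplings, then a Monge/rotation estimate $\le 2\sin(g/2)\le g$), stated in 2-D and implicitly invoked in its 3-D analogue. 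You instead couple the two 1-D projections directly through $Z$, via the pushforward of $Z$ under $\bfx\mapsto(\langle\bfx,u_1\rangle,\langle\bfx,u_2\rangle)$, which bounds the distance by the \emph{chord} $\|u_1(\phi)-u_2(\phi)\|=2\sin(\theta/2)\,|\cos\phi|$; this needs no spherical trigonometry, works in any dimension as stated, and since the chord never exceeds the angle, your per-angle bound is sharper. Consequently your averaged constant
\[
\Bigl(\tfrac{1}{2\pi}\int_0^{2\pi}|\cos\phi|^p\,d\phi\Bigr)^{1/p}
=\Bigl(\tfrac{\Gamma(\frac{p+1}{2})}{\sqrt{\pi}\,\Gamma(\frac{p}{2}+1)}\Bigr)^{1/p}
\]
is strictly smaller than the theorem's $\bigl(1-\tfrac{2p}{(p+1)\pi}\bigr)^{1/p}$ (and, pleasingly, coincides with the constant in the paper's translation-equivariance theorem). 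The one assertion you leave unproved is precisely that this Gamma ratio is $\le 1-\frac{2p}{(p+1)\pi}$; it follows in one line from $\sin\alpha\le\min\{\alpha,1\}$ on $[0,\pi/2]$, since $\frac{2}{\pi}\int_0^{\pi/2}\sin^p\alpha\,d\alpha\le\frac{2}{\pi}\bigl(\int_0^1\alpha^p\,d\alpha+\frac{\pi}{2}-1\bigr)=1-\frac{2p}{(p+1)\pi}$, which is exactly the paper's own final computation. With that line added, your argument is complete and in fact establishes a slightly stronger statement than the theorem claims.
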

\noindent We remark that our result is stronger than the those of~\cite{leeb_metrics_2023} and~\cite{rao_wasserstein_2020} by a factor of $\left(1-\frac{2p}{(p+1)\pi}\right)^{1/p}$. The proof can be found in~\Cref{app:thm_view_thm_shift}.

\subsubsection*{Stability to viewing directions for the ramp-filtered case}
The ramp-filtered sliced Wasserstein distance remains stable when the viewing direction changes slightly. If two projections $I_1,I_2$ of the same density $Z$ correspond to directions separated by angle $\theta$, then:

\begin{equation}
\min_{\alpha} d_{RF\!SW_p}(I_1,\bfR_\alpha I_2)
\ \le\
\Big( 2^{p+1}\sqrt{\tfrac{8\pi}{3}}\,\tfrac{\|\nabla Z\|_{L^2}}{m_0}\,
\Big(1-\tfrac{1}{\pi}\Big)\,\theta\Big)^{\!1/p},
\end{equation}
where $m_0$ is the minimal mass of the ramp-filtered signed projection. This is a factor that only depends on $Z$, and we refer the readers to~\Cref{sec:rfsw_stability} for details. Compared to the unfiltered case (Theorem~\ref{thm:view}), the order is \emph{worse}; instead of scaling linearly in~$\theta$, the bound grows only as $\theta^{1/p}$. This degradation is inherent to the high-pass nature of the ramp filter and the nonlinear renormalization of the positive and negative parts, which force us to pass through an $L^1$ estimate before controlling the Wasserstein metric. We state and prove the precise theorem (\Cref{thm:rf_view_general_p}) in the appendix.

\begin{theorem}[Translation equivariance]
\label{thm:shift}
Assume we have the 2-D functions $f$, and its shifted version $g = T_{\bfs}f$, where $\bfs \in \mathbb R^2$ is the translation vector. Namely, $g(\bfx)=f(\bfx-\bfs)$. We assume further that $\|f\|_1=\|g\|_1=1$ and both $f$ and $g$ are supported on the unit disk. Then:

\begin{equation}
d_{SW_p}(f, g) = \left(\frac{\Gamma\left(\frac{p+1}{2}\right)}{\sqrt{\pi}\Gamma\left(\frac{p}{2}+1\right)}\right)^{\frac1p}\|\bfs\| .
\end{equation}

\end{theorem}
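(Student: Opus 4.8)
The plan is to exploit the fact that the sliced Wasserstein distance decouples a rigid shift into a family of one-dimensional shifts, each of which has a trivial optimal transport cost. First I would record the elementary shift-equivariance of the projection operator: since $g(\bfx)=f(\bfx-\bfs)$, a change of variables in \cref{eq:radon_transform} gives
\[
P_\theta g(x) = P_\theta f\bigl(x - s_\theta\bigr), \qquad s_\theta := \langle \bfs, (\cos\theta,\sin\theta)^\top\rangle = \|\bfs\|\cos(\theta-\phi),
\]
where $\phi$ is the polar angle of $\bfs$. In other words, translating the 2-D image translates each 1-D line projection by the signed length of the projection of $\bfs$ onto the slice direction $\mathbf e_\theta=(\cos\theta,\sin\theta)^\top$; the integration variable $y$ is absorbed by a shift that leaves the Lebesgue measure invariant.

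The second step is to evaluate the 1-D Wasserstein distance between a density and its pure translate. Writing $\nu_\theta = P_\theta g$ and $\mu_\theta = P_\theta f$, the CDF of \cref{eq:cdf} satisfies $(\Volterra \nu_\theta)(t)=(\Volterra\mu_\theta)(t-s_\theta)$, hence the quantile functions in \cref{eq:inv_cdf} obey $(\Volterra\nu_\theta)^{-1}(z)=(\Volterra\mu_\theta)^{-1}(z)+s_\theta$ for every $z$. Substituting into the closed form \cref{eq:1d_wasserstein}, and using that $\mu_\theta,\nu_\theta$ are probability densities (so the quantile integration runs over $z\in[0,1]$, a consequence of the equal-mass and unit-disk support hypotheses), gives
\[
d_{W_p}^p(\mu_\theta,\nu_\theta)=\int_0^1 |s_\theta|^p\,dz = |s_\theta|^p,
\]
independently of $p$. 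This is the crux: a rigid translation incurs exactly the translation magnitude as transport cost in each slice.

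It then remains to assemble these slice-wise costs through the definition \cref{eq:sliced_wasserstein_2}. Substituting $|s_\theta|^p=\|\bfs\|^p|\cos(\theta-\phi)|^p$ and pulling the constant $\|\bfs\|^p$ out,
\[
d_{SW_p}^p(f,g)=\|\bfs\|^p\cdot\frac{1}{2\pi}\int_0^{2\pi}|\cos(\theta-\phi)|^p\,d\theta.
\]
By periodicity the angular average is independent of $\phi$, and I would evaluate it via the Wallis/Beta identity $\int_0^{\pi/2}\cos^p u\,du=\tfrac{\sqrt\pi}{2}\,\Gamma(\tfrac{p+1}{2})/\Gamma(\tfrac p2+1)$, which yields $\tfrac{1}{2\pi}\int_0^{2\pi}|\cos u|^p\,du=\Gamma(\tfrac{p+1}{2})/(\sqrt\pi\,\Gamma(\tfrac p2+1))$. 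Taking the $p$-th root produces exactly the claimed constant times $\|\bfs\|$.

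There is no serious obstacle here; the result is an exact identity and the argument is essentially bookkeeping. The only points demanding care are (i) confirming the sign-and-direction bookkeeping in the shift formula $s_\theta=\langle\bfs,\mathbf e_\theta\rangle$ so that the correct cosine appears, and (ii) justifying that the projected measures remain probability densities so that the quantile integral is taken over the unit interval, both of which follow directly from the equal-mass and compact-support assumptions. Recognizing the angular integral as a Beta function is the last routine step that pins down the Gamma-function constant.
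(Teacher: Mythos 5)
Your proposal is correct and follows essentially the same route as the paper's proof: decompose the translation into per-slice 1-D shifts of magnitude $\|\bfs\||\cos(\cdot)|$, use translation equivariance of the 1-D $p$-Wasserstein distance (which the paper asserts as obvious and you justify explicitly via the quantile functions), and evaluate the angular average $\int |\cos|^p$ by the Wallis/Beta identity. The only cosmetic difference is that you integrate over $[0,2\pi)$ while the paper uses symmetry to reduce to $\frac{2}{\pi}\int_0^{\pi/2}$, which is equivalent.
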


\subsubsection*{Translation equivariance for ramp-filtered case} 
Translation equivariance holds \emph{exactly} for $d_{RF\!SW_p}$:

\begin{equation}
d^p_{RF\!SW_p}(f, g)=\tfrac{2}{\pi}\|\mathbf s\|^p\int_0^{\pi/2}\cos^p\varphi\,d\varphi
=\tfrac{2\,\Gamma(\frac{p+1}{2})}{\sqrt{\pi}\,\Gamma(\frac{p}{2}+1)}\|\mathbf s\|^p ,
\end{equation}
when $g(\bfx)=f(\bfx-\bfs)$, since convolution, taking $(\cdot)_\pm$, and normalization all commute with 1-D shifts along each line. The precise theorem and proof can be found in~\Cref{thm:rfsw_translation}\newline

\noindent\textbf{Remark.}~
\Cref{thm:view} shows that the sliced Wasserstein distance is not very sensitive to changes in 
projection direction for tomographic images, while \Cref{thm:shift} implies that it is translation 
equivariant up to scaling. Since we could not find a suitable reference for this property in the 
literature, we provide a complete proof in \Cref{app:thm_view_thm_shift}. Neither property holds 
for the Euclidean distance.

In~\Cref{tab:view} we compare the \emph{stability bounds} of different distances with respect to 
changes in viewing direction, expressed as their order of growth in the angle difference $\theta$. 
By ``stability bound’’, we mean the rate at which the upper bounds of these 
distances decrease as $\theta\to 0$. As summarized in the table, the \emph{stability rates} of the sliced Wasserstein distance, Wasserstein distance, 
and the Euclidean distance are all linear in $\theta$. Among them, the sliced Wasserstein distance achieves a sharper constant than Wasserstein distance, 
while the Euclidean distance has the weakest constant, since it depends on the boundedness of 
$\nabla Z$ for the underlying 3-D density function. In contrast, the constants for the sliced Wasserstein distance and Wasserstein distance are absolute. 
This ordering is consistent with the numerical results in~\Cref{fig:viewing_stability}. 

The ramp-filtered sliced Wasserstein distance exhibits weaker asymptotic stability. 
For $p>1$ its rate is $\theta^{1/p}$ rather than linear, although when $p=1$ the bound is linear. 
We nevertheless use $p=2$ in practice since it allows for efficient computation. 
Interestingly, in~\Cref{fig:viewing_stability}, $d_{RF\!SW_2}$ performs slightly better than 
the Euclidean distance. This suggests that our current analysis may be conservative: 
a sharper argument might recover a linear rate, though we cannot confirm this at present. 
We note that if one applies a frequency cutoff to the ramp filter, the dependence on $\nabla Z$ 
disappears (making it theoretically stronger than the Euclidean distance in this sense), 
but the $\theta^{1/p}$ rate remains.

Although the ramp-filtered sliced Wasserstein distance is less stable to changes in viewing direction than the original sliced Wasserstein distance, 
both distances behave similarly under image translation. 
In practice, the ramp filter tends to amplify angular variation: it produces 
more distinguishable 1-D projections and hence makes it more sensitive to small in-plane 2-D rotations. 
For this reason, we expect the ramp-filtered sliced Wasserstein distance to perform better for alignment tasks as it responds strongly to small 2-D rotations while being less sensitive to other deformations such as shifts. Indeed we observe this in practice in~\Cref{sec:alignment} for alignment of MNIST digits. We emphasize that MNIST consists of general 2-D images, not tomographic data.


\begin{table}[h]
\centering
\small
\caption{Comparison of stability bounds with respect to changes in viewing direction for 
tomographic images (\Cref{thm:view} and~\Cref{sec:rfsw_stability}). We report the order of growth 
in the viewing angle difference $\theta$. For the ramp-filtered version, $C_Z$ is a constant involving the 3-D density function $Z$.}
\label{tab:view}
\begin{tabular}{lcc}
\hline
\textbf{Metric} & \textbf{Stability rate in $\theta$} & \textbf{Remarks} \\
\hline
Euclidean Distance & $\leq 2\sqrt \pi \|\nabla Z\|_\infty \theta$ & 
Proposition~2 in \cite{rao_wasserstein_2020};\\
& & linear order when $Z$ has bounded gradient
\\[6pt]
Wasserstein Distance $d_{W_p}$ & $\leq \theta$ & 
Proposition~1 in \cite{rao_wasserstein_2020}; \\
& & linear order \\[6pt]
Sliced Wasserstein Distance $d_{SW_p}$ & $\leq \bigl(1-\tfrac{2p}{(p+1)\pi}\bigr)^{1/p}\theta$ & 
\Cref{thm:view}; same order as $d_{W_p}$, \\
& & but with a sharper constant \\[6pt]
Ramp-Filtered Sliced Wasserstein Distance $d_{RF\!SW_p}$ & $\leq 2^{1+1/p}(C_Z)^{1/p}\,\theta^{1/p}$ & 
\Cref{thm:rf_view_general_p}; order $\theta^{1/p}$ \\
&& order is weaker for $p>1$,
equal for $p=1$;\\ && constant depends on $Z$ and its gradient\\[6pt]
\hline
\end{tabular}
\end{table}

\section{Numerical results}
\label{sec:results}

In this section, we provide numerical results to demonstrate the benefits of the sliced Wasserstein distance in image alignment, as well as show the performance of our algorithm. The results are organized as follows. First, in~\Cref{sec:timing}, we show timing results for computing various transport distances on images of increasing size as well as their timing for rotational alignment. In~\Cref{sec:stability}, we demonstrate the favorable properties of the sliced Wasserstein distance for image registration. Next, in~\Cref{sec:viewing}, we demonstrate that the distance in sliced Wasserstein is stable to the change in viewing angle when comparing tomographic projection images of 3-D objects. Lastly, in~\Cref{sec:alignment}, we show the experimental results of our algorithm for the alignment of a shifted and rotated MNIST digit dataset. We additionally test alignment for the MNIST dataset with additive noise. In our main results section, we focus on comparing the Euclidean distance, sliced 2-Wasserstein distance, ramp-filtered sliced 2-Wasserstein distance and 2-Wasserstein distance for simplicity; however, we include comparisons to several other metrics in~\Cref{app:alignment_extended}.

\subsection{Complexity and timing}
\label{sec:timing}

We first compare the computational complexities and timing results for computing various Wasserstein-type distances between two $L \times L$ images, which is summarized in~\Cref{tab:dense_timing}. The images are randomly generated with pixel values drawn uniformly from $[0, 1)$. The distances tested are the sliced 2-Wasserstein, ramp-filtered sliced 2-Wasserstein, convolutional Wasserstein, Sinkhorn distance and 2-Wasserstein distance. For more detail on the Sinkhorn distance and convolutional Wasserstein distance~\cite{cuturi_sinkhorn_2013, solomon_convolutional_2015}, see~\Cref{app:alignment_extended}. All timing results were carried out on a computer with a 2.6 GHz Intel Skylake processor and 256 GB of memory, and are reported as the average of $n=3$ trials. 

\begin{table}[!ht]
    \centering
    \small
    \caption{\textbf{Distance timing}. Computational complexities and timing results for calculating the distance between two random and dense $L \times L$ images using various optimal transport metrics. The Sinkhorn distance and convolutional Wasserstein distance were computed with a regularization term $\lambda = 0.01$ and $H = 3$ iterations. Time values are reported in seconds.}
    \begin{tabular}{llcccc}
        \hline
        \textbf{Metric} & \textbf{Complexity} & $32 \times 32$ & $64 \times 64$ & $96 \times 96$ & $128 \times 128$   \\ 
        \hline
        Sliced 2-Wasserstein Distance & $\mathcal{O}(L^2 \log L)$ & $ 0.0012 $ & $ 0.0025 $ & $ 0.0059 $ & $ 0.0092 $\\ 
        Ramp-Filtered Sliced 2-Wasserstein Distance & $\mathcal{O}(L^2 \log L)$ & $ 0.0016 $ & $ 0.0035 $ & $ 0.0075 $ & $ 0.0117 $\\
        Convolutional Wasserstein Distance & $\mathcal{O}(HL^2 \log L)$ & $ 0.6718 $ & $ 1.2565 $ & $ 1.312 $ & $ 1.81 $\\
        Sinkhorn Distance & $\mathcal{O}(H L^4)$ & $ 0.0378 $ & $ 0.6007 $ & $ 2.8619 $ & $ 9.0176 $\\
        2-Wasserstein Distance & $\mathcal{O}(L^6)$ & $ 0.1167 $ & $ 4.3374 $ & $ 6.5344 $ & $ 17.554 $\\ \hline
    \end{tabular}
    \label{tab:dense_timing}
\end{table}

Additionally, we report the computational complexities and timing results for computing the distances over $L$ rotations in~\Cref{tab:dense_timing_rotations}. Notably, because our formulation of the distance over rotations can be solved using FFTs, and does not require explicitly rotating the images before computing the distances (see~\Cref{sec:algorithm}), we achieve significantly faster run times compared to other transport methods. We note that the time for solving the linear program used for the 2-Wasserstein distance can vary substantially depending on the sparsity of the image. Meanwhile, the Sinkhorn distance based algorithms do not depend on the sparsity. Similarly, the convolutional Wasserstein distance only becomes advantageous over the Sinkhorn distance for larger images. These results show the advantage of our approach in terms of computational complexity and runtime.

\begin{table}[!ht]
    \centering
    \small
    \caption{\textbf{Rotational distance timing}. Computational complexities and timing results for calculating the distance between two random and dense $L \times L$ images over $L$ rotations using various optimal transport metrics. The Sinkhorn distance and convolutional Wasserstein distance were computed with a regularization term $\lambda = 0.01$ and $H = 3$ iterations. Time values are reported in seconds.}
    \begin{tabular}{llcccc}
        \hline
        \textbf{Metric} & \textbf{Complexity} & $32 \times 32$ & $64 \times 64$ & $96 \times 96$ & $128 \times 128$   \\ 
        \hline
        Sliced 2-Wasserstein Distance & $\mathcal{O}(L^2 \log L)$ & $ 0.0013 $ & $ 0.0027 $ & $ 0.0063 $ & $ 0.0097 $\\ 
        Ramp-Filtered Sliced 2-Wasserstein Distance & $\mathcal{O}(L^2 \log L)$ & $ 0.0018 $ & $ 0.0037 $ & $ 0.0081 $ & $ 0.0131 $\\
        Convolutional Wasserstein Distance & $\mathcal{O}(HL^3 \log L)$ & $ 22.0296 $ & $ 82.8461 $ & $ 127.3182 $ & $ 259.1755 $\\
        Sinkhorn Distance & $\mathcal{O}(H L^5)$ & $ 0.451 $ & $ 19.8372 $ & $ 143.2256 $ & $ 602.3988 $\\
        2-Wasserstein Distance & $\mathcal{O}(L^7)$ & $ 2.5099 $ & $ 113.381 $ & $ 576.6284 $ & $ 1985.2168 $\\ \hline
    \end{tabular}
    \label{tab:dense_timing_rotations}
\end{table}

\subsection{Stability to translation and rotation}
\label{sec:stability}

Next, to demonstrate the favorable properties of the sliced Wasserstein metrics, we construct a simplified example using Gaussian blob images. First, we demonstrate the stability of the sliced Wasserstein metrics to translations in~\Cref{fig:sw_properties}a. In this test, a reference image of size $85 \times 85$ pixels is generated and then shifted to the right by 20-pixels in 1-pixel increments. At every shift, the distance is taken to the reference under each metric. As expected from the theory in~\Cref{sec:theory}, the Wasserstein distance and sliced Wasserstein metrics increase proportionally to the magnitude of the shift, while the Euclidean distance does not. In~\Cref{fig:sw_properties}a, we plot the square root of each distance. For the ramp-filtered sliced Wasserstein distance, we use:

\begin{equation}
\label{eq:discrete_ssw_root}
d_{RFSW_2}(f, g) \approx \frac{1}{\sqrt{nL}}\left( \| U_I^+ - V_I^+\|_F^2 + \| U_I^- - V_I^-\|_F^2 \right)^{1/2} .
\end{equation}
Next, we demonstrate the stability of the sliced Wasserstein metrics to rotations in the images (see~\Cref{fig:sw_properties}b). In this test, we use the same reference image as before but compute the rotational distances of the maximally shifted image with an additional 180 degree rotation. Notably, as the Gaussian blobs no longer overlap in this experiment, the Euclidean distance does not change regardless of the rotation. However, all transport based metrics show a clear minimum at 180 degrees. These results demonstrate that the sliced Wasserstein distances provide useful metrics for heterogeneous images, while maintaining comparable behavior to the 2-Wasserstein distance.

\begin{figure}[!ht]
	\includegraphics[width=1\textwidth]{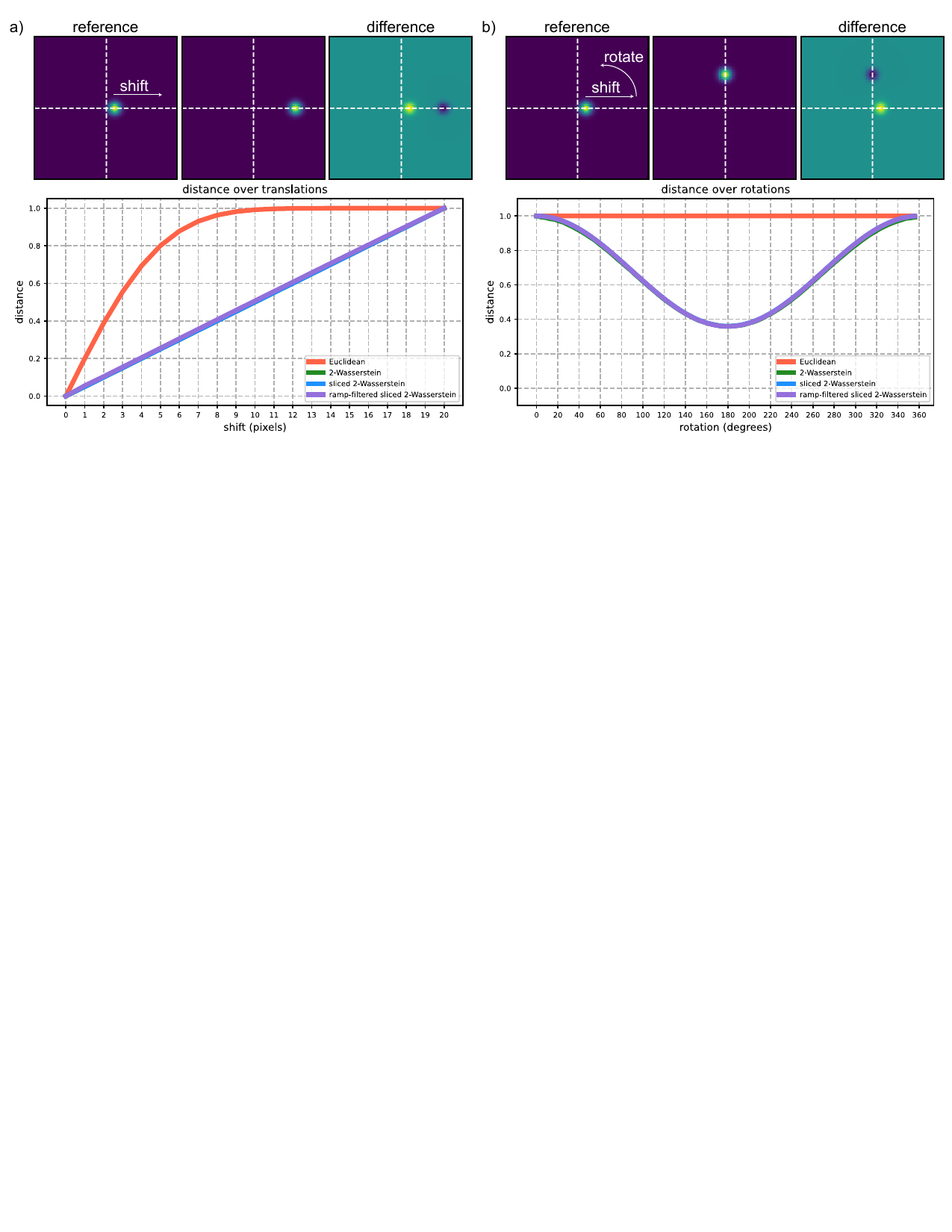}
	\centering
	\caption{Robustness of the sliced 2-Wasserstein metrics to translations and rotations in the images. Images are of size $85 \times 85$ pixels. (\textbf{a}) Stability to translation. The images show a Gaussian blob at the reference state, at the final state after being shifted by 20 pixels, and the difference between the reference and final sate.  Distances are computed at each shift. (\textbf{b}) Stability to rotation. The image to be aligned has been shifted by 20 pixels and rotated 180 degrees. The images show the reference state, after being shifted 20 pixels and rotated 90 degrees, and the difference between the two. Distances are computed over rotations. Each distance is normalized by its maximum value for scaling, and so all Wasserstein metrics appear to overlap.}
	\label{fig:sw_properties}
\end{figure}

\subsection{Stability to viewing angle}
\label{sec:viewing}

In several biomedical imaging settings, the image data is generated as a 2-D tomographic projection of a 3-D object. One prominent instance of this task arises in cryo-EM. Briefly, cryo-EM is a method used to reconstruct high-resolution 3-D structures of biological molecules from many tomographic projection images at unknown viewing angles~\cite{singer_computational_2020}. Cryo-EM 3-D reconstruction depends heavily on image alignment algorithms. Ideally, the distance between projection images with similar viewing angles would be small and increase proportionally to the difference in viewing angle. We show in~\Cref{thm:view} that similar to the results of~\cite{rao_wasserstein_2020, leeb_metrics_2023}, the sliced Wasserstein distance also satisfies this property. 

To illustrate the results on stability to viewing angle in~\Cref{tab:view}, we show how the Euclidean, sliced 2-Wasserstein, ramp-filtered sliced 2-Wasserstein and 2-Wasserstein distance change between tomographic projection images as a 3-D density is rotated out-of-plane (or equivalently, viewed from a different direction) in~\Cref{fig:viewing_stability}. The images are of size $171 \times 171$ pixels, with pixel size of $0.8117 \text{\r{A}}$/pixel, and are generated from viewing angles of $\theta \in [0, \pi/4]$. We show that while the sliced 2-Wasserstein increases with the viewing angle, the Euclidean distance plateaus rapidly. These results suggest that the sliced 2-Wasserstein distance is a more stable metric to changes in viewing angle.



\begin{figure}[!ht]
	\includegraphics[width=1\textwidth]{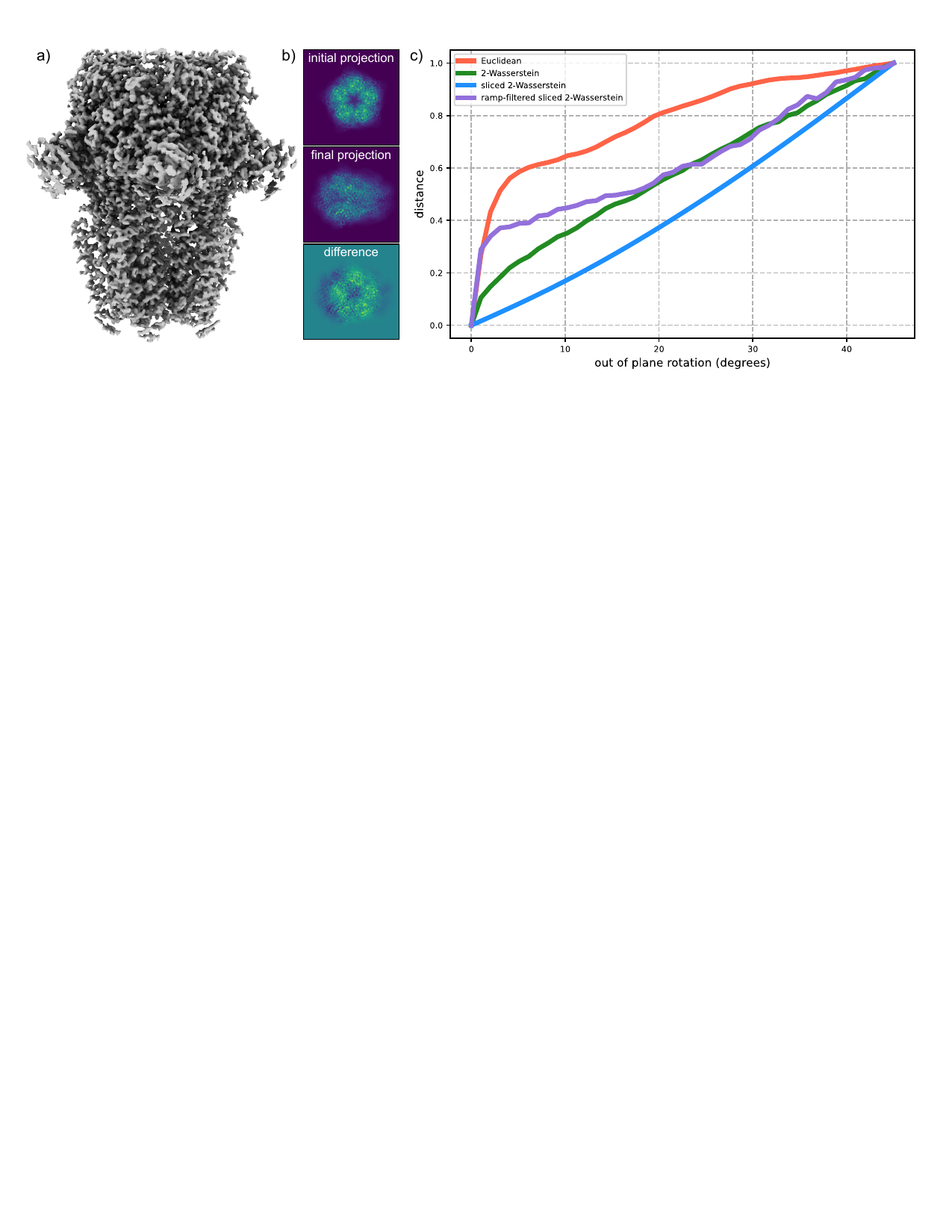}
	\centering
	\caption{Distance stability for tomographic projection images of a 3-D density (or volume) as it is rotated out-of-plane. (\textbf{a}) Cryo-EM volume (EMDB-11657). (\textbf{b}) Tomographic projection images of the cryo-EM volume at the initial viewing angle ($0$ degrees), the final viewing angle ($45$ degrees), and the difference between them. Images are size $171 \times 171$ pixels with a pixel size of $0.8117 \text{\r{A}}$/pixel. (\textbf{c}) Distances are computed between the initial projection image and projection images of the 3-D volume as it is rotated out-of-plane up to 45 degrees in 1 degree increments. Following from the theory in~\Cref{sec:theory} (summarized in~\Cref{tab:view}), these results show that the Wasserstein metrics, and in particular the sliced 2-Wasserstein distance, are stable to changes in the viewing angle between tomographic images.}
	\label{fig:viewing_stability}
\end{figure}

\subsection{Rotational alignment of heterogeneous images}
\label{sec:alignment}


We demonstrate the accuracy of our alignment algorithm for heterogeneous images using the MNIST digit data testset. This corresponds to approximately $1000$ images for each digit. We use this dataset because the inherent variability of how each handwritten digit is drawn is reflective of the heterogeneous image setting. For each non-symmetric digit $\{2, 3, 4, 5, 6, 7, 9\}$, our goal is to align the randomly rotated and translated images to a single reference of the corresponding digit. The reference for each digit is chosen to be the image that minimizes the Euclidean distance to the mean image of that digit. The alignment results should therefore favor Euclidean distance. We use the digit ``2'' for visualization because it is the digit with the largest variance.

Each image is zero-padded to size $L \times L = 39 \times 39$ pixels and rotated from a uniform distribution $\theta \sim \mathcal{U}[0, 2\pi)$. We then test the accuracy of rotational alignment at increasing translations of the images. That is, each image is shifted by $\bfs \sim \mathcal{U}(\{\pm x , \pm y \})$ for separate experiments of $(x, y) \in \{0, 2, 4, 6\}$ pixels. For rotational alignment, we set the discretization of rotation angles proportional to the image size, as described in~\Cref{sec:computation}. 

The alignments are computed using the Euclidean distance, sliced 2-Wasserstein distance, ramp-filtered sliced 2-Wasserstein distance and 2-Wasserstein distance. We omit plotting the approximations of the 2-Wasserstein distance for better visualization, but include the Sinkhorn distance in~\Cref{app:alignment_extended}. To compare the alignment results, we plot the cumulative percent of digits aligned to the reference up to $\pm 45 $ degrees in~\Cref{fig:alignment}. We use this metric for analysis since aligning non-identical images is an ill-defined problem. The exact alignment is subjective but the images are generally considered to be oriented correctly within $\pm 45$ degrees. Our results show that, in nearly all cases, the ramp-filtered sliced 2-Wasserstein provides the most accurate alignment. In particular, the transport metrics are substantially more robust when there are rotations and translations in the images.
 
\begin{figure}[!ht]
	\includegraphics[width=1\textwidth]{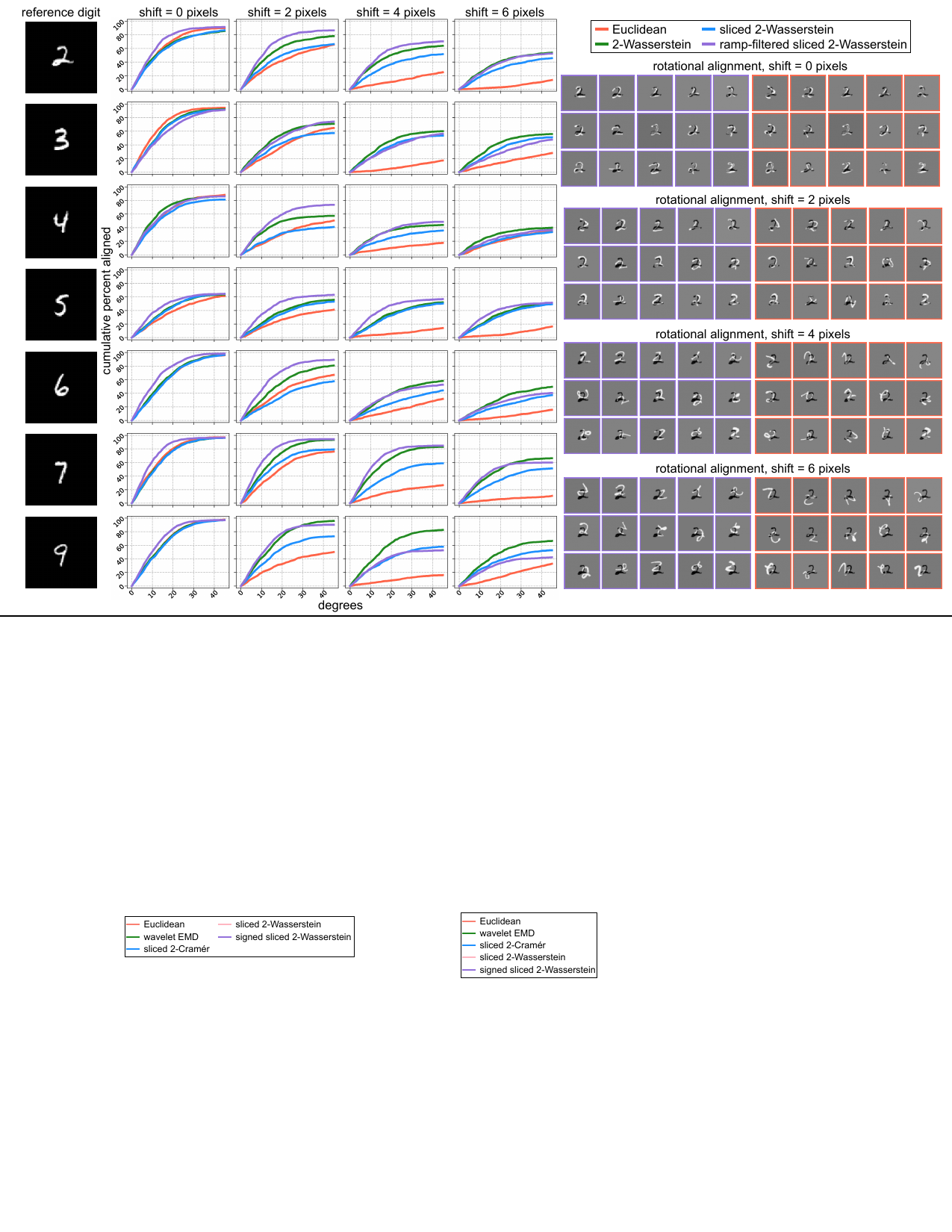}
	\centering
	\caption{Alignment of rotated and translated MNIST digits under different metrics. The reference image used for alignment of each digit is shown on the left. The line plots show the cumulative percent of digits aligned to the reference. A visualization of the first fifteen ``2'' digits aligned to the reference is shown on the right for both ramp-filtered sliced 2-Wasserstein distance and Euclidean distance at shifts of $0, 2, 4$ and $6$ pixels. The ramp-filtered sliced 2-Wasserstein distance demonstrates superior perfomance for alignment, especially in the cases where translations are also present in the images.}
	\label{fig:alignment}
\end{figure}

We report the timing results for the rotational alignment of the MNIST digit dataset in~\Cref{tab:timing}. The sliced 2-Wasserstein and ramp-filtered sliced 2-Wasserstein are $\approx 1.7 \times$ and $\approx 2.3 \times$ slower than the Euclidean distance, respectively, but still more than $20 \times$ faster than the wavelet EMD (see~\Cref{tab:all_timing}). The 2-Wasserstein distance and Sinkhorn distance were computed using the Python Optimal Transport package~\cite{flamary2021pot}. We note that in this experiment, the timing of the 2-Wasserstein outperforms the Sinkhorn distance due to the sparsity of the MNIST images. These results demonstrate that our algorithm provides fast and meaningful alignments for heterogeneous images.

\begin{table}[!ht]
    \centering
    \small
    \caption{Timing results for the rotational alignment of the MNIST dataset for the digit ``2'' using different metrics over $3$ trials. Images are size $39 \times 39$ pixels and the number of images to be aligned is $N = 1031$. The timings were carried out on a computer with a 2.6 GHz Intel Skylake processor and 32 GB of memory. (See also~\Cref{tab:all_timing} for a comparison of these results to other transport metrics).}
    \begin{tabular}{lll}
        \hline
        \textbf{Metric} & \textbf{Complexity} & \textbf{Time (seconds)} \\ 
        \hline
        Euclidean Distance & $\mathcal{O}(L^2 \log L)$ & $ 0.355 \pm 0.003 $  \\ 
        Sliced 2-Wasserstein Distance & $\mathcal{O}(L^2 \log L)$ & $ 0.586 \pm 0.006 $ \\ 
        Ramp-Filtered Sliced 2-Wasserstein Distance & $\mathcal{O}(L^2 \log L)$ & $ 0.831  \pm 0.009 $ \\ 
        2-Wasserstein Distance & $\mathcal{O}(L^7)$ & $ 1038.765 \pm 26.669 $ \\
        \hline
    \end{tabular}
    \label{tab:timing}
\end{table}

\subsubsection{Rotational alignment of noisy images}

Next, we test the effect of additive noise on rotational alignment using the MNIST digit ``2'' data. The signal-to-noise ratio (SNR) for each image is set by adding white Gaussian noise to the desired noise variance. We show alignment for images with SNR = 100, 10, 1, and 0.1, with no shift and with $\pm 3$ pixel shifts in~\Cref{fig:noise}. The reference image for alignment is the clean image described in~\Cref{sec:alignment}. Since additive noise causes the normalized images to become signed, the images are first split into positive and negative parts, where $g = g^+ + g^-$. However, since the clean image does not contain negative pixel values from noise, the alignment is computed between $f$ and $g^+$. The $g^+$ images are normalized again such that $\int_\mathbb{D} g^+(\bfx) d\bfx = 1$. Our results show that the ramp-filtered sliced 2-Wasserstein performs well at modest noise levels (SNR $>$ 1) even when shifts are present. However, the Euclidean distance outperforms all transport metrics in the case of high noise (SNR = 0.1) and no shift.

\begin{figure}[!ht]
	\includegraphics[width=1\textwidth]{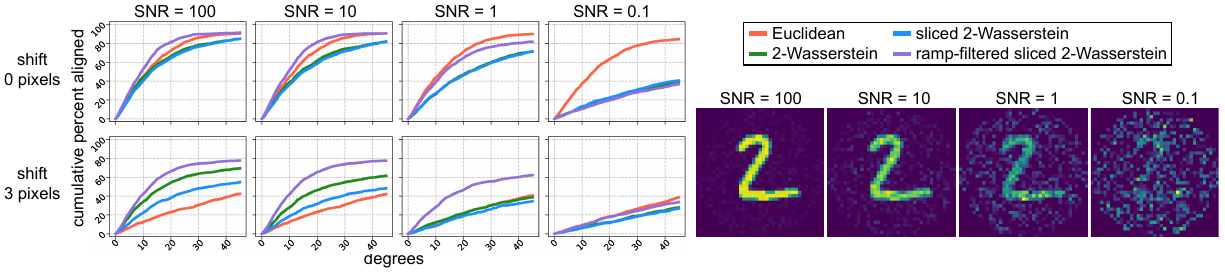}
	\centering
	\caption{Alignment of rotated, translated and noisy MNIST digits to a clean reference image under different metrics. The alignment is performed for the digit ``2'' at shifts of 0 and 3 pixels (rows) with additive white Gaussian noise at SNR = 100, 10, 1, and 0.1 (columns). Images are size $39 \times 39$ pixels and the number of images to be aligned is $N = 1031$. A representative image at each SNR is shown. These results show that the ramp-filtered sliced 2-Wasserstein distance handles moderate levels of noise, while the Euclidean distance works best at high noise and no shift.}
	\label{fig:noise}
\end{figure}



\section{Discussion}
\label{sec:discussion}

We present a fast algorithm for aligning heterogeneous images based on optimal transport and show both theoretically and experimentally that it is robust to rotation, translation and deformations in the images. As demonstration, we show alignment results for a rotated and translated MNIST digit dataset. Our approach gives increased accuracy when compared to standard correlation methods, especially when the images are not centered, without incurring much computational expense. In particular, the ramp-filtered sliced 2-Wasserstein distance exhibits the benefits of transport metrics while also being fast to compute in high-dimensions.  Additionally, our algorithm requires no hyperparameters, is easily parallelizable and can be implemented on GPUs.

Our work is motivated by the challenging task of heterogeneous 3-D reconstruction in single particle cryo-EM, where thousands to millions of high-dimensional, noisy and heterogeneous images need to be aligned~\cite{singer_computational_2020, tang_conformational_2023}. Additionally, the images are often not centered which is particularly difficult in the cryo-EM setting~\cite{heimowitz_centering_2021}. Image alignment is fundamental to heterogeneity analysis in cryo-EM, yet many methods give different results~\cite{sorzano_bias_2022}. Our results provide a novel framework for estimating image alignments that appears to be more robust against the shifts and deformations encountered in cryo-EM and other imaging modalities. However, several gaps remain before the method can be fully applied to cryo-EM data. For example, experimental images exhibit distinct amplitude contrast~\cite{contrast} and are affected by contrast transfer functions that require correction, as well as by extremely high noise levels~\cite{fastPCA} that our current approach cannot robustly handle. Potential extensions include explicitly sampling translations, investigating distance measures that are robust to both deformations and noise, and developing a three-dimensional version of the algorithm.






\clearpage
\appendix

\section{Proof of Theorem \ref{thm:main}}
\label{app:thm_err}
\subsection{Preliminaries and Notation}

We begin by formally defining the continuous and discrete quantities involved in the computation.

\subsubsection{Continuous Setting}
Let $f, g: \R^2 \to \R^+$ be two probability measures, normalized such that $\int_{\R^2} f(\bfx) d\bfx = \int_{\R^2} g(\bfx) d\bfx = 1$. 
\begin{itemize}
    \item \textbf{Support:} The functions are assumed to be strictly supported within the open unit disk, $\text{supp}(f), \, \text{supp}(g) \subset \D = \{\bfx \in \R^2 : \norm{\bfx}_2 < 1\}$. This is a common assumption for normalized images.
    \item \textbf{Lipschitz Continuity:} We assume $f$ and $g$ are Lipschitz continuous on $\D$ with Lipschitz constants $K_f$ and $K_g$, respectively. This regularity condition is essential for bounding discretization errors. Let $K=\max\{K_f, K_g\}$.
    \item \textbf{Radon Transform:} For an angle $\theta \in [0, 2\pi)$ and a scalar offset $x \in [-1, 1]$, the Radon transform $\Radon$ projects a 2-D function onto a 1-D function $\Proj_\theta f(x)$~\cite{Natterer2001}:
    $$\Proj_\theta f(x) = \int_{-\infty}^{\infty} f(\bfR_\theta^\top(x,y)^\top) dy,$$
    where $\bfR_\theta$ is the standard 2-D rotation matrix. Since $f$ is supported on $\D$, $\Proj_\theta f(x)$ is supported on $[-1, 1]$.
    \item \textbf{1-D Wasserstein Distance:} For two 1-D probability measures $\mu$ and $\nu$ on $[-1, 1]$, the squared 2-Wasserstein distance has a closed-form solution in terms of their inverse CDFs (or quantile functions)~\cite{peyre_computational_2020}:
    $$\dwsq(\mu, \nu) = \int_0^1 \abs{\ICDF{\mu}(z) - \ICDF{\nu}(z)}^2 dz,$$
    where $\ICDF{\mu}(z) = \inf\{t : (\Volterra\mu)(t) \ge z\}$ and $(\Volterra\mu)(t) = \int_{-1}^t \mu(x) dx$ is the CDF. 
    \item \textbf{Sliced 2-Wasserstein Distance:} The continuous sliced 2-Wasserstein distance between $f$ and $g$ is the mean squared 1-D Wasserstein distance over all projection angles~\cite{bonneel_sliced_2015, kolouri_generalized_2019}:
    $$\dswsq(f, g) = \frac{1}{2\pi} \int_0^{2\pi} \dwsq(\Proj_\theta f, \Proj_\theta g) d\theta.$$
\end{itemize}
Throughout, we regard 
$f,g$ as probability measures supported on the unit disk. Since they are assumed Lipschitz and hence absolutely continuous, we also work with their densities, which we denote by the same symbols for simplicity.

\subsubsection{Discrete Setting}
The continuous objects are approximated by discrete computational counterparts.
\begin{itemize}
    \item \textbf{Pixel Grid and Discrete Image:} The domain is discretized into an $L \times L$ grid of pixels, each of size $h \times h$ where $h=2/L$. Tile the square $[-1,1]^2$ by the $L\times L$ closed squares (cells) of side $h$. Define the cell-average approximation $f_L$ by
\[
f_L(x)\Big|_{C}=\frac{1}{h^2}\int_C f(y)\,dy\qquad\text{for each cell }C.
\] The matrix $F \in \R^{L \times L}$ contains these pixel values, normalized to sum to 1.
    \item \textbf{Discretization Parameters:} The analysis depends on the image resolution $L$ and the number of angular projections $n$. The projection angles are sampled uniformly: $\theta_k = 2\pi k / n$ for $k=0, \dots, n-1$.
    \item \textbf{Computed Quantities:} Let $U, V \in \R^{L \times n}$ be the matrices of computed Radon transforms, where column $U_{\cdot, k}$ is the discrete 1-D projection of $F$ at angle $\theta_k$, sampled at $L$ points. Let $U_I, V_I \in \R^{L \times n}$ be the matrices of discrete inverse CDFs, where each column is computed from the corresponding column of $U$ or $V$ via cumulative sum and linear interpolation.
    \item \textbf{Discrete Sliced Wasserstein Distance:} The final quantity computed by the algorithm is an approximation of $\dsw(f,g)$:
    $$ \dhatsw(F,G) = \sqrt{\frac{1}{n} \frac{1}{L} \norm{U_I - V_I}_F^2} = \sqrt{\frac{1}{nL} \sum_{k=0}^{n-1} \sum_{j=0}^{L-1} (U_{I,jk} - V_{I,jk})^2}. $$
    Note that the squared quantity is $\dhatswsq(F,G) = \frac{1}{nL} \norm{U_I - V_I}_F^2$. The normalization factor $1/L$ in the sum approximates the integral $\int dz$ over $[0,1]$.
\end{itemize}

\subsection{Decomposition of the Total Error}
Our objective is to bound the total approximation error $\E_{\text{total}} = \abs{\dsw(f,g) - \dhatsw(F,G)}$. A direct analysis is intractable. We therefore decompose the total error by introducing intermediate quantities and applying the triangle inequality. This standard technique allows us to isolate and bound the error from each approximation step individually. Let us define the following intermediate quantities for the \emph{squared} distances, as they are more analytically tractable:

\begin{enumerate}
    \item $d_{\text{cont}}^2 = \dswsq(f, g) = \frac{1}{2\pi} \int_0^{2\pi} \dwsq(\Proj_\theta f, \Proj_\theta g) d\theta$ (The true squared distance)
    \item $d_{\text{ang}}^2 = \frac{1}{n} \sum_{k=0}^{n-1} \dwsq(\Proj_{\theta_k} f, \Proj_{\theta_k} g)$ (Angularly discretized)
    \item $d_{\text{spat}}^2 = \frac{1}{n} \sum_{k=0}^{n-1} \dwsq(\Proj_{\theta_k} f_L, \Proj_{\theta_k} g_L)$ (Spatially discretized and NUFFT-approximated projections)
    \item $d_{\text{comp}}^2 = \dhatswsq(F,G) = \frac{1}{nL} \norm{U_I - V_I}_F^2$ (Fully discrete computed value)
\end{enumerate}
The total error on the squared distance can then be bounded as:
$$ \abs{d_{\text{cont}}^2 - d_{\text{comp}}^2} \le \underbrace{\abs{d_{\text{cont}}^2 - d_{\text{ang}}^2}}_{\E_1} + \underbrace{\abs{d_{\text{ang}}^2 - d_{\text{spat}}^2}}_{\E_2} + \underbrace{\abs{d_{\text{spat}}^2 - d_{\text{comp}}^2}}_{\E_3}. $$
This gives rise to four principal error terms to analyze:
\begin{itemize}
    \item $\E_1$: \textbf{Angular Discretization Error}, from approximating the integral over $\theta$ with a finite sum.
    \item $\E_2$: \textbf{Spatial Discretization and NUFFT Error}, from replacing continuous functions $f, g$ with their pixelated versions $f_L, g_L$ and from using the NUFFT to compute the Radon transform.
    \item $\E_3$: \textbf{1-D Quantile Function Error}, from the discrete computation of the 1-D Wasserstein distances.
\end{itemize}



\subsection{Bounding the Angular Discretization Error ($\E_1$)}
The first error source, $\E_1$, arises from approximating the continuous integral over projection angles with a discrete sum. This is a problem of numerical quadrature. The error of such an approximation is determined by the smoothness of the integrand. We will show that the integrand, $H(\theta) = \dwsq(\Proj_\theta f, \Proj_\theta g)$, is Lipschitz continuous, which guarantees an error decay rate of $O(1/n)$.

\subsubsection{Lipschitz Continuity of the Radon Transform with Respect to Angle}
The smoothness of the integrand $H(\theta)$ originates from the smoothness of the underlying functions $f$ and $g$. A change in the projection angle $\theta$ corresponds to a rotation of the function being projected.


\begin{proposition}[Lipschitz integrand for the sliced Wasserstein distance]
\label{prop:integrand_lipschitz}
Let $f,g:\R^{2}\to\R$ be probability measures
supported in $\D$. Assume that $f,g$ are absolutely continuous with respect to Lebesgue measure, 
with densities (also denoted by $f,g$ for simplicity).
Define 
\[H(\theta):=W_2^{2}(\Proj_\theta f,\Proj_\theta g).\]
Then
\[
  |H(\theta_1)-H(\theta_2)|
     \;\le\;
     8
     |\theta_1-\theta_2|
     \qquad
     \forall\theta_1,\theta_2\in[0,2\pi).
\]
\end{proposition}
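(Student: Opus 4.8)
The plan is to exploit the pushforward representation of the Radon transform together with the triangle inequality for $\dw$, never using the Lipschitz regularity of $f,g$ at all. Writing $\mathbf e_\theta$ for the unit projection direction associated with angle $\theta$, the projection $\Proj_\theta f$ is exactly the law of the scalar quantity $\langle \bfx, \mathbf e_\theta\rangle$ when $\bfx$ is distributed according to $f$, and similarly for $g$. Setting $a = \dw(\Proj_{\theta_1} f, \Proj_{\theta_1} g)$ and $b = \dw(\Proj_{\theta_2} f, \Proj_{\theta_2} g)$, I would first record the elementary factorization $|H(\theta_1) - H(\theta_2)| = |a^2 - b^2| = |a-b|\,(a+b)$ and then bound the two factors by separate mechanisms: $|a-b|$ by how fast a single projected measure moves with the angle, and $a+b$ by the bounded support of the projections.

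For the first factor, the triangle inequality for the $\dw$ metric gives $|a-b| \le \dw(\Proj_{\theta_1} f, \Proj_{\theta_2} f) + \dw(\Proj_{\theta_1} g, \Proj_{\theta_2} g)$, so it suffices to control the displacement of one projected measure. The key step is to use the \emph{suboptimal} coupling that transports $\langle \bfx, \mathbf e_{\theta_1}\rangle$ to $\langle \bfx, \mathbf e_{\theta_2}\rangle$ along the \emph{same} underlying point $\bfx\sim f$. This plan has quadratic cost
\[
\dwsq(\Proj_{\theta_1} f, \Proj_{\theta_2} f) \le \int_\D \bigl|\langle \bfx, \mathbf e_{\theta_1} - \mathbf e_{\theta_2}\rangle\bigr|^2 f(\bfx)\, d\bfx \le \Bigl(\sup_{\bfx \in \D}\|\bfx\|\Bigr)^2 \,\|\mathbf e_{\theta_1} - \mathbf e_{\theta_2}\|^2 .
\]
Since $f$ is supported in the unit disk we have $\|\bfx\|\le 1$, and since $\mathbf e_\theta$ is a unit vector, $\|\mathbf e_{\theta_1} - \mathbf e_{\theta_2}\| = 2\bigl|\sin\tfrac{\theta_1-\theta_2}{2}\bigr| \le |\theta_1 - \theta_2|$. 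Hence $\dw(\Proj_{\theta_1} f, \Proj_{\theta_2} f) \le |\theta_1 - \theta_2|$, and likewise for $g$, so that $|a - b| \le 2|\theta_1 - \theta_2|$.

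For the second factor, both $\Proj_\theta f$ and $\Proj_\theta g$ are supported on $[-1,1]$, whose diameter is $2$, so every $W_2$ distance between such measures is at most $2$; thus $a + b \le 4$. Multiplying the two bounds yields $|H(\theta_1) - H(\theta_2)| \le (2|\theta_1-\theta_2|)(4) = 8|\theta_1 - \theta_2|$, which is the claim. I expect the only genuinely delicate point to be the justification of the coupling estimate, namely recognizing that projecting a single $\bfx$ at two angles furnishes a valid (if suboptimal) transport plan whose cost is governed by the chord length $\|\mathbf e_{\theta_1} - \mathbf e_{\theta_2}\|$ weighted by $\|\bfx\|$; once this is in place, the bounded-support bound and the factorization $a^2-b^2=(a-b)(a+b)$ are entirely routine, and no smoothness of $f,g$ enters the argument.
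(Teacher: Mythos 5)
Your proof is correct, and its skeleton is exactly that of the paper's proof: the factorization $|a^{2}-b^{2}|=(a+b)\,|a-b|$, the bound $a+b\le 4$ coming from the projections being supported in $[-1,1]$, and the two-sided triangle inequality $|a-b|\le \dw(\Proj_{\theta_1}f,\Proj_{\theta_2}f)+\dw(\Proj_{\theta_1}g,\Proj_{\theta_2}g)$, yielding the same constant $8$. Where you genuinely differ is in the key displacement estimate $\dw(\Proj_{\theta_1}f,\Proj_{\theta_2}f)\le|\theta_1-\theta_2|$. The paper isolates this as Lemma~\ref{lem:wtheta} and proves it by a detour through two dimensions: first it shows that projecting onto a line contracts the Wasserstein distance (a marginalization/Fubini argument on couplings), and then it bounds $d_{W_p}(f,\bfR_\theta f)\le\theta$ by using the rotation itself as a Monge map, since points of the unit disk move by at most $2\sin(\theta/2)\le\theta$. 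You instead construct the one-dimensional coupling directly, as the law of $\bigl(\langle\bfx,\mathbf e_{\theta_1}\rangle,\langle\bfx,\mathbf e_{\theta_2}\rangle\bigr)$ for $\bfx\sim f$, and bound its cost via Cauchy--Schwarz and the chord length $\|\mathbf e_{\theta_1}-\mathbf e_{\theta_2}\|=2\bigl|\sin\tfrac{\theta_1-\theta_2}{2}\bigr|\le|\theta_1-\theta_2|$. These are in fact the same coupling in disguise---yours is precisely the projection of the paper's rotation coupling onto the two lines---but your presentation is more self-contained and more elementary: it never invokes the contraction-under-projection step or the 2-D Wasserstein distance at all. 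What the paper's longer route buys is a reusable intermediate fact (projections contract $d_{W_p}$, and $d_{W_p}(f,\bfR_\theta f)\le\theta$), which it exploits again in the proof of Theorem~\ref{thm:view}; your argument is the shorter path if one only wants the proposition. Your closing observation that no Lipschitz regularity of the densities is needed is also consistent with the paper: its proof of this proposition never uses the Lipschitz constant either, only bounded support and absolute continuity.
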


\begin{proof}



\begin{align}
  &|H(\theta_1)-H(\theta_2)|
     = |W_2(u_{\theta_1}, v_{\theta_1})+W_2(u_{\theta_2}, v_{\theta_2})||W_2(u_{\theta_1}, v_{\theta_1})-W_2(u_{\theta_2}, v_{\theta_2})|
     \leq 4 |W_2(u_{\theta_1}, v_{\theta_1})-W_2(u_{\theta_2}, v_{\theta_2})|\\
     \leq & 4 (W_2(u_{\theta_1}, u_{\theta_2})+W_2(v_{\theta_1}, v_{\theta_2}))\leq 8|\theta_1-\theta_2|.
\end{align}
where the last inequality follows from Lemma \ref{lem:wtheta}.
\end{proof}

\begin{theorem}[Uniform angular quadrature error]
\label{thm:angular_error}
Let $f,g:\R^{2}\to\R$ be any probability measures
supported in $\D$. Assume that $f,g$ are absolutely continuous with respect to Lebesgue measure, 
with densities (also denoted by $f,g$ for simplicity).
Define
\(H(\theta):=W_2^{2}(u_\theta,v_\theta)\).
For the uniform grid  
\(
   \theta_k=\tfrac{2\pi k}{n},\;k=0,\dots,n-1,
\)
set  
\(
   \Delta\theta=2\pi/n.
\)
Then the discretisation error satisfies
\[
  \mathcal E_1
     :=\Bigl|
           \frac1{2\pi}\int_{0}^{2\pi}H(\theta)\,d\theta
           \;-\;
           \frac1{n}\sum_{k=0}^{n-1}H(\theta_k)
        \Bigr|
     \;\le\;\frac{8\pi\,}{n}
\]
\end{theorem}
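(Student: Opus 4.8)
The plan is to treat this as a standard quadrature estimate for a Lipschitz integrand, where all the analytic difficulty has already been absorbed into Proposition~\ref{prop:integrand_lipschitz}. That proposition tells us $H$ is Lipschitz with constant $8$ on $[0,2\pi)$, and since the projections are $2\pi$-periodic in the angle, $H$ extends to a $2\pi$-periodic Lipschitz function on $\R$. First I would rewrite the discrete sum as a left-endpoint Riemann sum by noting that $\tfrac{1}{n}=\tfrac{\Delta\theta}{2\pi}$ with $\Delta\theta=2\pi/n$, so that
$$
  \frac{1}{n}\sum_{k=0}^{n-1} H(\theta_k)
  \;=\;
  \frac{1}{2\pi}\sum_{k=0}^{n-1} H(\theta_k)\,\Delta\theta ,
$$
which is exactly the left Riemann sum approximation of $\tfrac{1}{2\pi}\int_0^{2\pi} H(\theta)\,d\theta$ on the uniform partition $\theta_k=2\pi k/n$.

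Next I would localize the error to each subinterval. Writing the integral as a sum of integrals over $[\theta_k,\theta_{k+1}]$ and subtracting the matching sum term, the error contributed by the $k$-th cell is
$$
  \left| \int_{\theta_k}^{\theta_{k+1}} \bigl(H(\theta)-H(\theta_k)\bigr)\,d\theta \right|
  \;\le\;
  \int_{\theta_k}^{\theta_{k+1}} \bigl|H(\theta)-H(\theta_k)\bigr|\,d\theta .
$$
Applying the Lipschitz bound $|H(\theta)-H(\theta_k)|\le 8(\theta-\theta_k)$ for $\theta\in[\theta_k,\theta_{k+1}]$ and integrating gives a per-cell bound of $8\cdot\tfrac{(\Delta\theta)^2}{2}=4(\Delta\theta)^2$.

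Finally I would sum over the $n$ cells and reinstate the $\tfrac{1}{2\pi}$ prefactor:
$$
  \mathcal E_1
  \;\le\;
  \frac{1}{2\pi}\sum_{k=0}^{n-1} 4(\Delta\theta)^2
  \;=\;
  \frac{1}{2\pi}\,\cdot 4n\,\cdot\frac{4\pi^2}{n^2}
  \;=\;
  \frac{8\pi}{n},
$$
which is precisely the claimed bound. The only point requiring a word of care is the final subinterval $[\theta_{n-1},2\pi]$: here I would invoke periodicity, $H(2\pi)=H(0)$, so the left Riemann sum genuinely covers the whole period with no boundary defect (and in any case the single endpoint $2\pi$ is Lebesgue-null in the integral). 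I do not expect any real obstacle at this stage: the substance of the theorem lives entirely in the Lipschitz-in-angle estimate of Proposition~\ref{prop:integrand_lipschitz} (which itself rests on Lemma~\ref{lem:wtheta}); once that is in hand, the quadrature bound is elementary, and the explicit constant $8\pi$ drops out by directly tracking $\Delta\theta=2\pi/n$ through the computation.
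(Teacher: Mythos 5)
Your proposal is correct and follows essentially the same argument as the paper: both reduce the claim to a left-endpoint Riemann-sum error estimate over the uniform partition, invoke the Lipschitz bound $|H(\theta)-H(\theta_k)|\le 8|\theta-\theta_k|$ from Proposition~\ref{prop:integrand_lipschitz}, integrate to get the per-cell bound $4(\Delta\theta)^2$, and sum to obtain $8\pi/n$. Your remark about periodicity at the last subinterval is harmless but unnecessary, since the intervals $[\theta_k,\theta_{k+1}]$ with $\theta_n:=2\pi$ already cover $[0,2\pi]$ exactly and only left endpoints are evaluated.
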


\begin{proof}
Partition $[0,2\pi]$ into the $n$ sub-intervals
$I_k=[\theta_k,\theta_{k+1}]$ (with $\theta_n:=2\pi$).
Write the mean-value integral as a sum over these pieces:
\[
  \frac1{2\pi}\int_{0}^{2\pi}H(\theta)\,d\theta
     =\frac1{2\pi}\sum_{k=0}^{n-1}
        \int_{I_k}H(\theta)\,d\theta .
\]
On each $I_k$ we approximate the integral by the left-endpoint value
$H(\theta_k)$:
\[
  \int_{I_k}\!\!H(\theta)\,d\theta
    - H(\theta_k)\,\Delta\theta
    = \int_{I_k}\!\!\bigl(H(\theta)-H(\theta_k)\bigr)\,d\theta .
\]
Because $H$ is Lipschitz,
\(
  |H(\theta)-H(\theta_k)|
     \le 8|\theta-\theta_k|
     \le 8\Delta\theta
\)
for every $\theta\in I_k$,
so
\[
  \Bigl|
    \int_{I_k}\!\!\bigl(H(\theta)-H(\theta_k)\bigr)\,d\theta
  \Bigr|
    \le \int_{I_k}8|\theta-\theta_k|\,d\theta
    =   8\!\int_{0}^{\Delta\theta}\!\!s\,ds
    =   4(\Delta\theta)^{2}.
\]
Summing these errors and dividing by $2\pi$ gives
\[
  \mathcal E_1
     \le\frac1{2\pi}\,
          n\cdot 4\Delta\theta^{2}
     =\frac{2n}{\pi}\,
          \Bigl(\frac{2\pi}{n}\Bigr)^{\!2}
     =\frac{8\pi\,}{n},
\]
as claimed.
\end{proof}

\subsection{Bounding the Spatial Discretization and NUFFT Error ($\E_2$)}
The second error source, $\E_2$, results from replacing the continuous functions $f$ and $g$ with their pixelated approximations $f_L$ and $g_L$.

\begin{lemma}[$W_2$ Lipschitz w.r.t.\ $L^1$ under a positive density floor]
\label{lem:w2_lipschitz_positive}
Let $\mu,\nu$ be two probability \emph{densities} on the interval
$[a,b]$ such that
\(
  \mu(x),\,\nu(x)\ge\mu_{\min}>0
\)
for all $x\in[a,b]$. 
Then:
\[
  W_{2}(\mu,\nu)
     \;\le\;
     \frac{1}{\mu_{\min}}\,
     \|\mu-\nu\|_{L^{1}([a,b])},
\]
\end{lemma}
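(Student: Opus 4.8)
The plan is to work directly with the one-dimensional quantile representation of the $2$-Wasserstein distance, $W_2^2(\mu,\nu)=\int_0^1 \bigl|\ICDF{\mu}(z)-\ICDF{\nu}(z)\bigr|^2\,dz$, and to reduce the claim to a \emph{pointwise} (uniform in $z$) bound on the gap between the two quantile functions. The density floor $\mu,\nu\ge\mu_{\min}>0$ is exactly what is needed: it forces each CDF $\Volterra\mu,\Volterra\nu$ to be strictly increasing and continuous on $[a,b]$, so the quantile functions are genuine inverses mapping $[0,1]$ back onto $[a,b]$, and it makes those inverses Lipschitz with constant $1/\mu_{\min}$.

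For the key step I would fix $z\in(0,1)$ and set $t=\ICDF{\mu}(z)$ and $s=\ICDF{\nu}(z)$, so that $(\Volterra\mu)(t)=z=(\Volterra\nu)(s)$. Writing $(\Volterra\nu)(t)-(\Volterra\nu)(s)=\int_s^t \nu(x)\,dx$ and using $\nu\ge\mu_{\min}$ gives $\mu_{\min}\,|t-s|\le\bigl|(\Volterra\nu)(t)-(\Volterra\nu)(s)\bigr|$. Since $(\Volterra\nu)(s)=z=(\Volterra\mu)(t)$, the right-hand side equals $\bigl|(\Volterra\mu)(t)-(\Volterra\nu)(t)\bigr|$, so $|t-s|\le \mu_{\min}^{-1}\,\bigl|(\Volterra\mu)(t)-(\Volterra\nu)(t)\bigr|$. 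This converts a difference of quantiles into a difference of CDFs evaluated at the single common point $t$.

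It then remains to bound the CDF gap by the $L^1$ distance. Because $(\Volterra\mu)(t)-(\Volterra\nu)(t)=\int_a^t(\mu(x)-\nu(x))\,dx$, I get the uniform estimate $\sup_t\bigl|(\Volterra\mu)(t)-(\Volterra\nu)(t)\bigr|\le\|\mu-\nu\|_{L^1([a,b])}$. Combining with the previous step yields $\bigl|\ICDF{\mu}(z)-\ICDF{\nu}(z)\bigr|\le\mu_{\min}^{-1}\|\mu-\nu\|_{L^1}$ for \emph{every} $z\in(0,1)$. Squaring and integrating over the unit-length set $z\in[0,1]$ gives $W_2^2(\mu,\nu)\le\mu_{\min}^{-2}\|\mu-\nu\|_{L^1}^2$, and taking square roots is exactly the claimed bound.

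I expect the only delicate point to be the bookkeeping that makes the quantile step rigorous: one must confirm that the density floor guarantees the CDFs are genuine homeomorphisms of $[a,b]$ onto $[0,1]$, so that $t$ and $s$ are well defined and the integration $\int_s^t\nu$ is legitimate regardless of the sign of $t-s$ (handled by keeping absolute values throughout), and that the symmetric roles of $\mu$ and $\nu$ let either density's floor be invoked. The one genuine ``loss'' in the argument is replacing the integral over $z$ by a uniform supremum; this is harmless since $[0,1]$ has unit length, though it suggests the constant could be sharpened by instead carrying the CDF gap inside the integral before bounding it.
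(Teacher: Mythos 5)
Your proof is correct and follows essentially the same route as the paper's: both rest on the quantile representation $W_2^2(\mu,\nu)=\int_0^1|\ICDF{\mu}(z)-\ICDF{\nu}(z)|^2\,dz$, the uniform bound $\|\Volterra\mu-\Volterra\nu\|_{L^\infty}\le\|\mu-\nu\|_{L^1}$, and the density floor to convert a CDF gap into a quantile gap with factor $1/\mu_{\min}$. The only cosmetic difference is that you verify the $(1/\mu_{\min})$-Lipschitz property of the inverse CDF by hand via the identity $(\Volterra\nu)(t)-(\Volterra\nu)(s)=\int_s^t\nu$, whereas the paper invokes it directly; the substance is identical.
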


\begin{proof}
Let $\mathcal V\mu$ and $\mathcal V \nu$ be the CDFs of $\mu$ and $\nu$, and
$(\mathcal V\mu)^{-1},(\mathcal V\nu)^{-1}$ be their quantile functions.
Because $\mu\ge\mu_{\min}$, the map $\mathcal V\mu$ is strictly increasing with
$(\mathcal V\mu)'(t)\ge\mu_{\min}$, so $(\mathcal V\mu)^{-1}$ is $(1/\mu_{\min})$–Lipschitz;
the same holds for~$(\mathcal V\nu)^{-1}$.
Moreover
\(
  \|\mathcal V\mu-\mathcal V\nu\|_{L^{\infty}}
     \le\|\mu-\nu\|_{L^{1}}
\)
(by integrating the density difference).  Consequently
\[
  \|(\mathcal V\mu)^{-1} - (\mathcal V\nu)^{-1}\|_{L^{\infty}}
     \le\frac{1}{\mu_{\min}}\|\mathcal V\mu-\mathcal V\nu\|_{L^{\infty}}
     \le\frac{1}{\mu_{\min}}\|\mu-\nu\|_{L^{1}}.
\]
Since
\(
  W_{2}^{2}(\mu,\nu)
      =\int_{0}^{1}|(\mathcal V\mu)^{-1}(z) - (\mathcal V\nu)^{-1}(z)|^{2}\,dz
\)
and the integrand is bounded by the square of the $L^\infty$ norm,
the stated inequalities follow.
\end{proof}

\begin{lemma}[$W_2$ H\"older continuous w.r.t.\ $L^1$]
\label{lem:w2_holder}
Let $\mu,\nu$ be two probability densities on the interval $[a,b]$, then 
\[
  d_{W_2}(\mu,\nu) \;\le\; \frac{\sqrt2}{2}(b-a)\, \|\mu-\nu\|_{L^{1}([a,b])}^{\frac12}.
\]
\end{lemma}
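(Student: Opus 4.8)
The plan is to route the estimate through the $1$-Wasserstein distance: I would first use the bounded support to downgrade the $L^2$ quantile integral to an $L^1$ one, and then relate $W_1$ to $\|\mu-\nu\|_{L^1}$ through the CDFs. Throughout write $D=b-a$, let $F_\mu=\Volterra\mu$, $F_\nu=\Volterra\nu$ denote the two CDFs, and set $M=\|\mu-\nu\|_{L^1([a,b])}$.

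First I would exploit that, because both measures live on $[a,b]$, the quantile functions $F_\mu^{-1},F_\nu^{-1}$ take values in $[a,b]$, so $|F_\mu^{-1}(z)-F_\nu^{-1}(z)|\le D$ for every $z\in[0,1]$. Combining the closed form $W_2^2(\mu,\nu)=\int_0^1|F_\mu^{-1}(z)-F_\nu^{-1}(z)|^2\,dz$ with the elementary inequality $t^2\le D\,t$ for $0\le t\le D$ gives
\[
  W_2^2(\mu,\nu)\;\le\; D\int_0^1|F_\mu^{-1}(z)-F_\nu^{-1}(z)|\,dz \;=\; D\,W_1(\mu,\nu),
\]
where the final equality is the standard one-dimensional identity $W_1(\mu,\nu)=\int_0^1|F_\mu^{-1}-F_\nu^{-1}|\,dz=\int_a^b|F_\mu(x)-F_\nu(x)|\,dx$.

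The crux is the second step, bounding $W_1(\mu,\nu)=\int_a^b|g(x)|\,dx$ for $g:=F_\mu-F_\nu$. Here I would use that equal total mass forces $g$ to vanish at both endpoints, $g(a)=0$ and $g(b)=1-1=0$, while $g'=\mu-\nu$ satisfies $\int_a^b|g'|=M$. Integrating from the left gives $|g(x)|\le\int_a^x|g'|=:G(x)$ and integrating from the right gives $|g(x)|\le\int_x^b|g'|=M-G(x)$, hence $|g(x)|\le\min\{G(x),M-G(x)\}\le M/2$ pointwise. Integrating over $[a,b]$ yields $W_1(\mu,\nu)\le\tfrac{M}{2}(b-a)=\tfrac{DM}{2}$. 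Feeding this into the first step gives $W_2^2(\mu,\nu)\le\tfrac12 D^2 M$, and taking square roots produces exactly $W_2(\mu,\nu)\le\tfrac{1}{\sqrt2}D\,M^{1/2}=\tfrac{\sqrt2}{2}(b-a)\|\mu-\nu\|_{L^1}^{1/2}$.

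The only non-routine point — and the reason for the sharp constant — is the factor $\tfrac12$ in the second step. The cruder bound $\|F_\mu-F_\nu\|_{L^\infty}\le M$ used in Lemma~\ref{lem:w2_lipschitz_positive} would only yield $W_1\le DM$ and therefore the worse constant $1$ in place of $\tfrac{\sqrt2}{2}$. Recovering $\tfrac12$ hinges on the two-sided estimate $|g|\le\min\{G,M-G\}$, which requires that $g$ vanish at both endpoints (equal masses) and that $G$ be monotone so that $\min\{G,M-G\}\le M/2$ holds everywhere; I would take care to state the endpoint conventions for the CDFs precisely so this is airtight.
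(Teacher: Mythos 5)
Your proposal is correct and follows essentially the same route as the paper's proof: both pass through $d_{W_2}^2(\mu,\nu)\le (b-a)\,d_{W_1}(\mu,\nu)$, invoke the one-dimensional identity $d_{W_1}(\mu,\nu)=\int_a^b|\mathcal{V}\mu-\mathcal{V}\nu|\,dx$, and then bound the CDF difference by $\tfrac12\|\mu-\nu\|_{L^1}$ in sup norm. The only difference is that you explicitly justify the factor $\tfrac12$ (via $g$ vanishing at both endpoints and the two-sided bound $|g|\le\min\{G,M-G\}$), a step the paper's proof asserts without spelling out, so your write-up is if anything more complete.
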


\begin{proof}
By definition, it is obvious that
\[
d_{W_2}^2(\mu, \nu) \le (b-a) d_{W_1}(\mu, \nu).
\]
A fundamental identity for one-dimensional measures is that the $1$-Wasserstein distance is equal to the Cramer's distance 
\[
d_{W_1}(\mu, \nu) = \int_a^b |\mathcal{V}\mu(x) - \mathcal{V}\nu(x)| dx.
\]
Substituting this into our previous inequality, we get:

\begin{align*}
d_{W_2}^2(\mu, \nu) &\le (b-a) \int_a^b |\mathcal{V}\mu(x) - \mathcal{V}\nu(x)| dx \le (b-a) \int_a^b \left( \sup_{y \in [a,b]} |\mathcal{V}\mu(y) - \mathcal{V}\nu(y)| \right) dx \le \frac{(b-a)^2}{2} \|\mu-\nu\|_{L^{1}([a,b])}.
\end{align*}

\end{proof}
We bound the $L^1$ difference between the continuous function and its piecewise constant approximation.

\begin{theorem}[fixed-angle $L^2$ error, with NUFFT precision]\label{thm:proj_nufft}
Let $f:\mathbb{R}^2\to[0,\infty)$ be a probability measure supported in the closed unit disk
$\mathbb{D}=\{x\in\mathbb{R}^2:\,|x|\le 1\}$ and globally Lipschitz with constant $K$, i.e.
$|f(x)-f(y)|\le K\|x-y\|$ for all $x,y\in\mathbb{R}^2$. 
Fix an even integer $L=2M\ge 2$ and set $h=2/L$.

For any unit vector $\theta\in S^1$, fix a unit $\theta^\perp$ with $\theta\cdot\theta^\perp=0$, and define,
for every $g:\mathbb{R}^2\to\mathbb{R}$,
\[
(P_\theta g)(s)=\int_{\mathbf{R}} g\big(s\theta+t\,\theta^\perp\big)\,dt\qquad(s\in\mathbf{R}).
\]
If $g$ is supported in $\mathbb{D}$ then $(P_\theta g)(s)=0$ for $|s|>1$, so $P_\theta g$ is identified with an element of $L^2(-1,1)$.
Let $\widehat{h}(\omega)=\int_{\mathbf{R}} h(s)e^{-i\omega s}\,ds$ denote the one-dimensional Fourier transform and
$\widehat{H}(\xi)=\int_{\mathbf{R}^2} H(x)e^{-i\xi\cdot x}\,dx$ the two-dimensional transform.
By the Fourier slice theorem, $\widehat{P_\theta G}(\omega)=\widehat{G}(\omega\theta)$ holds for all integrable $G$ and all $\omega\in\mathbb{R}$ \cite{Natterer2001}.
Set the $L$ equispaced slice-frequencies $\omega_k=k\pi$ for $k=-M,\dots,M-1$.
Suppose a NUFFT routine returns approximations
\[
\widetilde{y}_k=\widehat{f_L}(\omega_k\theta)+\eta_k
=\widehat{P_\theta f_L}(\omega_k)+\eta_k\qquad(k=-M,\dots,M-1),
\]
where $\eta_k$ are complex errors and $\bm{\eta}=(\eta_k)_{k=-M}^{M-1}$.
Form the inverse-FFT trigonometric polynomial
\[
\widetilde{g}_L(s)=\sum_{k=-M}^{M-1}\widetilde{c}_k\,e^{ik\pi s}\qquad\text{with}\qquad
\widetilde{c}_k=\tfrac12\,\widetilde{y}_k=\tfrac12\,\widehat{P_\theta f_L}(\omega_k)+\tfrac12\,\eta_k.
\]
Then, with $g=P_\theta f$, one has the error bound
\[
\|g-\widetilde{g}_L\|_{L^2(-1,1)}\ \le\ \frac{2\sqrt{2/\pi}}{L}\,K\ +\ \frac{2\sqrt{2/\pi}}{L}\,K\ +\ \frac{1}{\sqrt{2}}\ \|\bm{\eta}\|_{\ell^2},
\]
that is,
\[
\boxed{\ \ \|P_\theta f-\widetilde{g}_L\|_{L^2(-1,1)}\ \le\ \frac{4\sqrt{2/\pi}}{L}\,K\ +\ \frac{1}{\sqrt{2}}\ \|\bm{\eta}\|_{\ell^2}\ .\ }
\]
If, moreover, the NUFFT satisfies the \emph{relative $\ell^2$ precision model}
$\|\bm{\eta}\|_{\ell^2}\le \varepsilon\Big(\sum_{k=-M}^{M-1}\big|\widehat{P_\theta f_L}(\omega_k)\big|^2\Big)^{1/2}$, then
\[
\boxed{\ \ \|P_\theta f-\widetilde{g}_L\|_{L^2(-1,1)}\ \le\ \frac{4\sqrt{2/\pi}}{L}\,K\ +\ \varepsilon\,\|P_\theta f_L\|_{L^2(-1,1)}\ .\ }
\]
\end{theorem}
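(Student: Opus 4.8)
The plan is to reconstruct $\Proj_\theta f$ on $(-1,1)$ from its $L=2M$ frequency samples by exact Fourier analysis, and to split the error into a spectral truncation part, a pixelization part, and the NUFFT part. The starting point is that on $L^2(-1,1)$ the exponentials $\{e^{ik\pi s}\}_{k\in\mathbb Z}$ are orthogonal with $\|e^{ik\pi s}\|_{L^2(-1,1)}^2=2$, and that for any $G$ supported in $(-1,1)$ its $k$-th Fourier coefficient equals exactly $\tfrac12\widehat G(k\pi)$. Under the strict-support hypothesis (and for $L$ large enough that the grid resolves the support margin, so that $f_L$ is also supported inside $\D$) both $\Proj_\theta f$ and $\Proj_\theta f_L$ lie in $L^2(-1,1)$ and vanish at $s=\pm1$; combined with the Fourier slice theorem this identifies the samples $\widehat{\Proj_\theta f}(\omega_k)=\widehat f(\omega_k\theta)$ and $\widehat{\Proj_\theta f_L}(\omega_k)$ with twice the Fourier coefficients of $\Proj_\theta f$ and $\Proj_\theta f_L$. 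Writing $\Pi_M$ for the orthogonal projection onto $\operatorname{span}\{e^{ik\pi s}:-M\le k\le M-1\}$, the noiseless reconstruction is precisely $\Pi_M[\Proj_\theta f_L]$, and the NUFFT errors contribute the extra trigonometric polynomial $E_\eta=\tfrac12\sum_{k=-M}^{M-1}\eta_k e^{ik\pi s}$. I would then decompose
\[
\Proj_\theta f-\widetilde g_L=\underbrace{\bigl(\Proj_\theta f-\Pi_M\Proj_\theta f\bigr)}_{T_1}+\underbrace{\Pi_M\bigl[\Proj_\theta(f-f_L)\bigr]}_{T_2}-\underbrace{E_\eta}_{T_3},
\]
and bound the three pieces by the triangle inequality.

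The term $T_3$ is exact: by orthogonality, $\|E_\eta\|_{L^2(-1,1)}=\Bigl(2\sum_{k=-M}^{M-1}\bigl|\tfrac12\eta_k\bigr|^2\Bigr)^{1/2}=\tfrac1{\sqrt2}\|\bm\eta\|_{\ell^2}$, which is the claimed NUFFT contribution; the relative-precision refinement then follows from Bessel's inequality $\sum_{k=-M}^{M-1}|\widehat{\Proj_\theta f_L}(\omega_k)|^2=2\|\Pi_M\Proj_\theta f_L\|_{L^2(-1,1)}^2\le 2\|\Proj_\theta f_L\|_{L^2(-1,1)}^2$, which turns $\tfrac1{\sqrt2}\|\bm\eta\|_{\ell^2}$ into $\varepsilon\,\|\Proj_\theta f_L\|_{L^2(-1,1)}$. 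For $T_2$ I would use that $\Pi_M$ is an $L^2$-contraction, so $\|T_2\|\le\|\Proj_\theta(f-f_L)\|_{L^2(-1,1)}$, and then pass from the line projection to the plane: Cauchy--Schwarz in the $t$-variable with chord length $\le 2$ gives $\|\Proj_\theta u\|_{L^2(-1,1)}\le\sqrt2\,\|u\|_{L^2(\R^2)}$. A cell-wise Poincar\'e--Wirtinger inequality (Poincar\'e constant $h/\pi$ on a square of side $h=2/L$) together with $\|\nabla f\|_{L^2(C)}^2\le K^2h^2$ and $\approx\pi/h^2$ occupied cells yields $\|f-f_L\|_{L^2(\R^2)}\le Kh/\sqrt\pi$, hence $\|T_2\|\le\sqrt{2/\pi}\,Kh=\tfrac{2\sqrt{2/\pi}}{L}K$.

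The main obstacle is $T_1$, the spectral truncation error of $\Proj_\theta f$. Treating $\Proj_\theta f$ as merely Lipschitz gives Fourier coefficients decaying like $1/|k|$ and only an $O(1/\sqrt L)$ tail, which is too weak for the stated rate. The key is that the line projection is smoother and vanishes at the endpoints: since $f$ is supported strictly inside $\D$, $g:=\Proj_\theta f$ satisfies $g(\pm1)=0$ and is weakly differentiable with $|g'(s)|\le 2K\sqrt{1-s^2}$ (differentiating under the integral and using the chord length). I would integrate by parts in $\widehat g(k\pi)=\int_{-1}^1 g(s)e^{-ik\pi s}\,ds$, where the boundary terms drop precisely because $g(\pm1)=0$, to obtain $|\widehat g(k\pi)|\le\tfrac{1}{|k|\pi}\,|\widehat{g'}(k\pi)|$. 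Summing the tail and applying Parseval to $g'$ gives $\|T_1\|^2=\tfrac12\sum_{|k|\ge M}|\widehat g(k\pi)|^2\le\tfrac{\|g'\|_{L^2}^2}{\pi^2M^2}$, and the explicit estimate $\|g'\|_{L^2(-1,1)}^2\le 4K^2\int_{-1}^1(1-s^2)\,ds=\tfrac{16}{3}K^2$ yields $\|T_1\|\le\tfrac{2\|g'\|_{L^2}}{\pi L}\le\tfrac{2\sqrt{2/\pi}}{L}K$. The delicate points are that sampling at exactly the natural frequencies $\omega_k=k\pi$ makes the Fourier-coefficient identification and Parseval identity hold \emph{exactly} on $(-1,1)$, and that the endpoint vanishing of $g$ is what upgrades the rate from $1/\sqrt L$ to $1/L$. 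Adding $\|T_1\|+\|T_2\|+\|T_3\|$ reproduces the three-term intermediate bound, and merging the two spatial terms gives the boxed estimate $\tfrac{4\sqrt{2/\pi}}{L}K+\tfrac1{\sqrt2}\|\bm\eta\|_{\ell^2}$.
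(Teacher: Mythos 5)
Your proposal is correct and takes essentially the same route as the paper's proof: the identical three-term decomposition (spectral truncation of $P_\theta f$, pixelization error passed through the $L^2$-contraction $\Pi_M$, and the exact NUFFT noise term $\tfrac{1}{\sqrt{2}}\|\bm{\eta}\|_{\ell^2}$), the same cell-wise Poincar\'e--Wirtinger bound $\|f-f_L\|_{L^2}\le Kh/\sqrt{\pi}$ with the $\sqrt{2}$ projection operator bound, and the same Parseval/Bessel argument for the relative-precision refinement. The only deviation is cosmetic: you control the truncation term via the pointwise chord estimate $|g'(s)|\le 2K\sqrt{1-s^2}$, giving $\|g'\|_{L^2}\le 4K/\sqrt{3}$, whereas the paper uses the operator bound $\|P_\theta((\nabla f)\cdot\theta)\|_{L^2}\le\sqrt{2}\,\|\nabla f\|_{L^2(\mathbb{R}^2)}\le\sqrt{2\pi}\,K$; your constant is marginally sharper and both yield the stated bound (you also make explicit the endpoint vanishing $g(\pm 1)=0$ that justifies the integration by parts, which the paper uses implicitly).
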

\medskip
\noindent\textit{Proof.}
The inequality is proved by three contributions: (a) truncation of high 1-D Fourier modes of $P_\theta f$ beyond $|k|\ge M$; (b) spatial discretization $f\mapsto f_L$; and (c) NUFFT coefficient inaccuracy.

First, for any $f\in L^2(\mathbb{R}^2)$ supported in $\mathbb{D}$,
\[
\|P_\theta f\|_{L^2(-1,1)}\le \sqrt{2}\,\|f\|_{L^2(\mathbb{R}^2)}\qquad\text{uniformly in }\theta.
\]
To verify this, note that for each fixed $s\in[-1,1]$ the line $\{s\theta+t\theta^\perp:\,t\in\mathbb{R}\}$ intersects $\mathbb{D}$ in a single chord of length
$L(s)=2\sqrt{1-s^2}\le 2$. Thus
\[
|P_\theta f(s)|=\left|\int f(s\theta+t\theta^\perp)\,dt\right|
\le \big(L(s)\big)^{1/2}\left(\int |f(s\theta+t\theta^\perp)|^2\,dt\right)^{1/2}
\]
by Cauchy--Schwarz in the $t$-variable. Squaring and integrating $s\in[-1,1]$ gives
\[
\int_{-1}^1 |P_\theta f(s)|^2\,ds
\le \int_{-1}^1 L(s)\int |f(s\theta+t\theta^\perp)|^2\,dt\,ds
\le 2\int_{\mathbf{R}^2} |f(x)|^2\,dx,
\]
where the change of variables $x=s\theta+t\theta^\perp$ has Jacobian determinant equal to $1$.
This proves the $L^2$ operator bound.

Second, $f$ is globally Lipschitz, hence $f\in W^{1,\infty}(\mathbb{R}^2)$ with $\|\nabla f\|_{L^\infty}\le K$ and $\nabla f$ exists almost everywhere (Rademacher; see \cite{Evans2010}).
Define $G(s)=P_\theta f(s)$.
For each $s\in(-1,1)$ and any $h\to 0$,
\[
\frac{G(s+h)-G(s)}{h}
=\int_{\mathbf{R}} \frac{f\big((s+h)\theta+t\theta^\perp\big)-f\big(s\theta+t\theta^\perp\big)}{h}\,dt.
\]
The integrand is bounded in absolute value by $K$ for all $t$ by the Lipschitz property, and it vanishes whenever both arguments lie outside $\mathbb{D}$; for each fixed $s$ the relevant $t$-set has length at most $2$, so the dominating function $t\mapsto K\,\mathbf{1}_{[-2,2]}(t)$ is integrable and independent of $h$.
By dominated convergence (together with the chain rule at points where $\nabla f$ exists) the difference quotient converges in $L^1_{\mathrm{loc}}$ to
\[
G'(s)=\int_{\mathbf{R}} (\theta\cdot\nabla f)\big(s\theta+t\theta^\perp\big)\,dt
= P_\theta\big((\nabla f)\cdot\theta\big)(s)
\quad\text{for almost every }s\in(-1,1).
\]
Applying the previously proved $L^2$ bound to $u=(\nabla f)\cdot\theta$ yields
\[
\|G'\|_{L^2(-1,1)}=\|P_\theta((\nabla f)\cdot\theta)\|_{L^2(-1,1)}
\le \sqrt{2}\,\|(\nabla f)\cdot\theta\|_{L^2(\mathbf{R}^2)}
\le \sqrt{2}\,\|\nabla f\|_{L^2(\mathbf{R}^2)}\le \sqrt{2}\,K\,\sqrt{\pi},
\]
since $\nabla f=0$ a.e.\ outside $\mathbb{D}$ and $|\mathbb{D}|=\pi$.

Third, write the Fourier series of $G$ on $[-1,1]$ using the orthonormal system $\{\phi_k(s)\}_{k\in\mathbf{Z}}$ with $\phi_k(s)=e^{ik\pi s}/\sqrt{2}$:
for $c_k=(1/2)\int_{-1}^1 G(s)e^{-ik\pi s}\,ds=(1/2)\,\widehat{G}(k\pi)$ one has
$G(s)=\sum_{k\in\mathbf{Z}} c_k e^{ik\pi s}$ in $L^2(-1,1)$ and Parseval identities (see \cite[Ch.~I]{Zygmund2002})
\[
\|G\|_{L^2(-1,1)}^2=2\sum_{k\in\mathbf{Z}}|c_k|^2,
\qquad
\|G'\|_{L^2(-1,1)}^2=2\pi^2\sum_{k\in\mathbf{Z}} k^2 |c_k|^2.
\]
Let $\Pi_M$ denote the orthogonal projector onto the span of $\{e^{ik\pi s}:\,|k|\le M-1\}$.
The truncation (or tail) satisfies
\[
\|G-\Pi_M G\|_{L^2(-1,1)}^2
=2\sum_{|k|\ge M}|c_k|^2
\le \frac{2}{M^2}\sum_{k\in\mathbf{Z}} k^2 |c_k|^2
=\frac{1}{M^2\pi^2}\,\|G'\|_{L^2(-1,1)}^2.
\]
Taking square roots and inserting the bound for $\|G'\|_{L^2}$ gives
\[
\|G-\Pi_M G\|_{L^2(-1,1)}\le \frac{1}{M\pi}\,\|G'\|_{L^2(-1,1)}
\le \frac{1}{M\pi}\,\sqrt{2}\,K\sqrt{\pi}=\sqrt{2/\pi}\,\frac{K}{M}
=\frac{2\sqrt{2/\pi}}{L}\,K.
\]

Fourth, control the spatial discretization error. On each cell $C$ of side $h$, the Poincar\'e--Wirtinger inequality with sharp constant (inverse of the first nonzero Neumann eigenvalue $(\pi/h)^2$) gives
\[
\int_C |f-f_C|^2\,dx \le \frac{h^2}{\pi^2}\int_C |\nabla f|^2\,dx
\qquad\text{for }f_C=\frac{1}{h^2}\int_C f.
\]
Summing over all cells in $[-1,1]^2$,
\[
\|f-f_L\|_{L^2(\mathbf{R}^2)}^2\le \frac{h^2}{\pi^2}\int_{\mathbf{R}^2} |\nabla f|^2\,dx
\le \frac{h^2}{\pi^2}\,K^2\,|\mathbb{D}|=\frac{h^2}{\pi}\,K^2.
\]
Therefore $\|f-f_L\|_{L^2(\mathbf{R}^2)}\le hK/\sqrt{\pi}$, and by the $L^2$ bound for $P_\theta$,
\[
\|P_\theta f - P_\theta f_L\|_{L^2(-1,1)}\le \sqrt{2}\,\|f-f_L\|_{L^2(\mathbf{R}^2)}
\le \sqrt{2}\,\frac{h}{\sqrt{\pi}}\,K
=\frac{2\sqrt{2/\pi}}{L}\,K.
\]

Fifth, identify precisely what the NUFFT+IFFT returns.
Using the slice theorem \cite{Natterer2001}, $\widehat{P_\theta f_L}(\omega_k)=\widehat{f_L}(\omega_k\theta)=:y_k$.
For any $H\in L^2(-1,1)$,
\[
\int_{-1}^1 H(s)e^{-ik\pi s}\,ds=\widehat{H}(k\pi)\qquad\text{since }H=0\text{ outside }[-1,1].
\]
Hence the Fourier coefficient of $P_\theta f_L$ at frequency $k$ is exactly $c_k^{(L)}=(1/2)\widehat{P_\theta f_L}(\omega_k)=(1/2)y_k$,
and the orthogonal projector onto the span of $\{e^{ik\pi s}\}_{|k|\le M-1}$ is
\[
\Pi_M(P_\theta f_L)(s)=\sum_{k=-M}^{M-1} c_k^{(L)}\,e^{ik\pi s}=\sum_{k=-M}^{M-1} \frac12\,\widehat{P_\theta f_L}(\omega_k)\,e^{ik\pi s}.
\]
Therefore the IFFT reconstruction with inexact samples decomposes as
\[
\widetilde{g}_L(s)=\Pi_M(P_\theta f_L)(s)\;+\;\sum_{k=-M}^{M-1}\delta_k\,e^{ik\pi s},
\qquad \delta_k=\tfrac12\,\eta_k.
\]
By orthonormality of $\{e^{ik\pi s}/\sqrt{2}\}$ on $[-1,1]$ (see \cite[Ch.~I]{Zygmund2002}),
\[
\left\|\sum_{k=-M}^{M-1}\delta_k e^{ik\pi s}\right\|_{L^2(-1,1)}^2
=2\sum_{k=-M}^{M-1}|\delta_k|^2
=\frac{1}{2}\sum_{k=-M}^{M-1}|\eta_k|^2
=\frac{1}{2}\,\|\bm{\eta}\|_{\ell^2}^2,
\]
so
\[
\left\|\sum_{k=-M}^{M-1}\delta_k e^{ik\pi s}\right\|_{L^2(-1,1)}=\frac{1}{\sqrt{2}}\,\|\bm{\eta}\|_{\ell^2}.
\]

Finally, combine the three contributions using the triangle inequality and the fact that $\Pi_M$ is an $L^2$-contraction:
\[
\|P_\theta f-\widetilde{g}_L\|_{L^2(-1,1)}
\le \underbrace{\|P_\theta f-\Pi_M(P_\theta f)\|_{L^2(-1,1)}}_{\le\ \frac{2\sqrt{2/\pi}}{L}K}
+\underbrace{\|\Pi_M(P_\theta f-P_\theta f_L)\|_{L^2(-1,1)}}_{\le\ \frac{2\sqrt{2/\pi}}{L}K}
+\underbrace{\left\|\sum_{k=-M}^{M-1}\delta_k e^{ik\pi s}\right\|_{L^2(-1,1)}}_{=\ \frac{1}{\sqrt{2}}\|\bm{\eta}\|_{\ell^2}},
\]
which proves the first boxed bound.

For the second boxed bound, assume the relative $\ell^2$ model
$\|\bm{\eta}\|_{\ell^2}\le \varepsilon\left(\sum_{k=-M}^{M-1}|\widehat{P_\theta f_L}(\omega_k)|^2\right)^{1/2}$.
With $c_k^{(L)}=(1/2)\widehat{P_\theta f_L}(\omega_k)$,
\[
\sum_{k=-M}^{M-1}\big|\widehat{P_\theta f_L}(\omega_k)\big|^2
=4\sum_{k=-M}^{M-1}|c_k^{(L)}|^2
\le 4\sum_{k\in\mathbf{Z}}|c_k^{(L)}|^2
=2\,\|P_\theta f_L\|_{L^2(-1,1)}^2
\]
by Parseval. Hence $\|\bm{\eta}\|_{\ell^2}\le \varepsilon\sqrt{2}\,\|P_\theta f_L\|_{L^2(-1,1)}$, so the NUFFT term contributes at most
$(1/\sqrt{2})\cdot \varepsilon\sqrt{2}\,\|P_\theta f_L\|_{L^2(-1,1)}=\varepsilon\,\|P_\theta f_L\|_{L^2(-1,1)}$, establishing the second bound.
This completes the proof. \hfill$\square$

\bigskip
\noindent\textbf{Discussion of the NUFFT precision $\varepsilon$.}
The additive term $(1/\sqrt{2})\,\|\bm{\eta}\|_{\ell^2}$ is exact: it follows from the orthonormality of the exponential basis on $[-1,1]$ and requires no modeling assumptions beyond the definition $\eta_k=\widetilde{y}_k-\widehat{P_\theta f_L}(\omega_k)$.  
If a library guarantees \emph{pointwise} relative accuracy $|\eta_k|\le \varepsilon\,|\widehat{P_\theta f_L}(\omega_k)|$ for all $k$
(as is typical when a user tolerance \texttt{eps} is enforced via kernel width and oversampling \cite{BarnettMagland2019}), then summing yields
$\|\bm{\eta}\|_{\ell^2}\le \varepsilon\left(\sum_k |\widehat{P_\theta f_L}(\omega_k)|^2\right)^{1/2}$, hence the same
$\varepsilon\,\|P_\theta f_L\|_{L^2(-1,1)}$ contribution as in the boxed bound.
If only an \emph{absolute} $\ell^2$ guarantee $\|\bm{\eta}\|_{\ell^2}\le \varepsilon_{\mathrm{abs}}$ is available, then the additive term is $\le \varepsilon_{\mathrm{abs}}/\sqrt{2}$.

In practice there is also roundoff from the inverse FFT.  If the inverse FFT is computed in floating-point arithmetic with unit roundoff $u$, a backward-stable FFT algorithm introduces a perturbation to the coefficient vector whose $\ell^2$ norm is bounded by a factor $\gamma_{c\log L}\sim c\,u\log L$ times the coefficient norm, for a modest constant $c$ depending on implementation details \cite[Ch.~22]{Higham2002}.  This can be folded into the $\bm{\eta}$ vector (i.e., treat the effective NUFFT$+$IFFT error as $\bm{\eta}_{\mathrm{eff}}$), so the same $(1/\sqrt{2})\,\|\bm{\eta}_{\mathrm{eff}}\|_{\ell^2}$ bound applies.  Finally, note that if a different Fourier normalization (e.g.\ $2\pi$-periodic conventions) is used in software, only the harmless $1/2$ scaling of coefficients changes, while the $L$-dependence and the $\varepsilon$-dependence in the displayed inequalities remain the same.

\begin{lemma}\label{thm:norm_bound} Let $f:\mathbb{R}^2\to[0,\infty)$ 
be a probability measure supported in the unit disk 
$\mathbb{D}=\{x\in\mathbb{R}^2:\ |x|\le 1\}$. For any unit vector $\theta\in\mathbb{S}^1$ and any fixed $\theta^\perp$ with $\theta\cdot\theta^\perp=0$, define
\[
(P_\theta f)(s):=\int_{\mathbb{R}} f\big(s\theta+t\,\theta^\perp\big)\,dt,\qquad s\in\mathbb{R}.
\]
Then:
\[
\|P_\theta f\|_{L^1(\mathbb{R})}=1,
\qquad
\|P_\theta f\|_{L^2(\mathbb{R})}\le \sqrt{2}\,\|f\|_{L^2(\mathbb{R}^2)}.
\]
Moreover, since $\operatorname{supp}(P_\theta f)\subset[-1,1]$ and 
$0\le P_\theta f(s)\le 2\|f\|_{L^\infty(\mathbb{R}^2)}$, one also has
\[
\|P_\theta f\|_{L^2(\mathbb{R})}^2\le \|P_\theta f\|_{L^1(\mathbb{R})}\,\|P_\theta f\|_{L^\infty(\mathbb{R})}
\le 2\,\|f\|_{L^\infty(\mathbb{R}^2)}.
\]
If, in addition, $f$ is globally Lipschitz with constant $K$ and $\int f=1$, then 
$\|f\|_{L^\infty(\mathbb{R}^2)}\le (3K^2/\pi)^{1/3}$ (cone lower bound argument), hence
\[
\|P_\theta f\|_{L^2(\mathbb{R})}\le \sqrt{2}\,\|f\|_{L^2(\mathbb{R}^2)}
\le \sqrt{2}\,\|f\|_{L^\infty(\mathbb{R}^2)}^{1/2}
\le \sqrt{2}\,(3/\pi)^{1/6}K^{1/3}.
\]
\end{lemma}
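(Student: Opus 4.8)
The plan is to establish the stated assertions in sequence, each resting on elementary estimates for the line projection $P_\theta$. First I would prove the mass identity $\|P_\theta f\|_{L^1(\mathbb{R})}=1$ by Fubini's theorem combined with the orthogonal change of variables $x=s\theta+t\,\theta^\perp$, whose Jacobian equals $1$ since $\{\theta,\theta^\perp\}$ is orthonormal; because $f\ge 0$ forces $P_\theta f\ge 0$, the $L^1$ norm is simply $\int_{\mathbb{R}^2}f=1$.

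For the $L^2$ operator bound $\|P_\theta f\|_{L^2}\le\sqrt{2}\,\|f\|_{L^2}$, I would reuse verbatim the chord argument already carried out in the first step of the proof of Theorem~\ref{thm:proj_nufft}: for each fixed $s\in[-1,1]$ the line $s\theta+t\,\theta^\perp$ meets $\mathbb{D}$ in a chord of length $L(s)=2\sqrt{1-s^2}\le 2$, so Cauchy--Schwarz in the $t$-variable gives $|P_\theta f(s)|^2\le L(s)\int|f(s\theta+t\,\theta^\perp)|^2\,dt$. Integrating in $s$, bounding $L(s)\le 2$, and applying the same unit-Jacobian change of variables produces the factor $2$.

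Next I would record the pointwise bound $0\le P_\theta f(s)\le \|f\|_{L^\infty}\,L(s)\le 2\|f\|_{L^\infty}$, obtained by estimating the chord integral by $\|f\|_{L^\infty}$ times the chord length. Combined with the interpolation inequality $\|P_\theta f\|_{L^2}^2\le \|P_\theta f\|_{L^1}\,\|P_\theta f\|_{L^\infty}$, which follows from $(P_\theta f)^2\le \|P_\theta f\|_{L^\infty}\,P_\theta f$ integrated over the line, and with the mass identity from the first step, this yields $\|P_\theta f\|_{L^2}^2\le 2\|f\|_{L^\infty}$.

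The one genuinely nontrivial ingredient, and the step I expect to be the main obstacle, is the cone lower bound $\|f\|_{L^\infty}\le (3K^2/\pi)^{1/3}$. Here I would set $M=\|f\|_{L^\infty}=f(x_0)$, attained at some $x_0\in\overline{\mathbb{D}}$ by continuity and compactness. Global $K$-Lipschitz continuity together with nonnegativity gives the pointwise lower bound $f(x)\ge\max\{0,\,M-K\|x-x_0\|\}$ on all of $\mathbb{R}^2$; the delicate point is that this remains valid even when $x_0$ lies near $\partial\mathbb{D}$, since any zero of $f$ must sit at distance at least $M/K$ from $x_0$, so no conflict with $\operatorname{supp}f\subset\mathbb{D}$ arises and one may integrate the full cone. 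Evaluating in polar coordinates, $\int_0^{M/K}(M-Kr)\,2\pi r\,dr=\pi M^3/(3K^2)$, and comparing with $\int f=1$ gives $M^3\le 3K^2/\pi$. Finally I would combine $\|f\|_{L^2}^2\le \|f\|_{L^\infty}\int f=\|f\|_{L^\infty}$ with the $L^2$ operator bound to conclude $\|P_\theta f\|_{L^2}\le\sqrt{2}\,\|f\|_{L^\infty}^{1/2}\le\sqrt{2}\,(3/\pi)^{1/6}K^{1/3}$.
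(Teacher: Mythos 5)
Your proposal is correct and follows essentially the same route as the paper's own proof: Fubini with the unit-Jacobian orthogonal change of variables for the $L^1$ identity, the chord-length Cauchy--Schwarz argument for the $\sqrt{2}$ operator bound, the pointwise chord estimate plus $\|P_\theta f\|_{L^2}^2\le\|P_\theta f\|_{L^1}\|P_\theta f\|_{L^\infty}$ interpolation, the cone lower bound integrated in polar coordinates to get $\|f\|_{L^\infty}\le(3K^2/\pi)^{1/3}$, and finally $\|f\|_{L^2}^2\le\|f\|_{L^\infty}\|f\|_{L^1}$. Your extra remark that the cone bound needs no truncation near $\partial\mathbb{D}$ (because global Lipschitz continuity makes the lower bound valid on all of $\mathbb{R}^2$) is a point the paper passes over silently, but it changes nothing in substance.
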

\paragraph{Proof.}
For the $L^1$ norm,
\[
\int_{\mathbb{R}} (P_\theta f)(s)\,ds
=\int_{\mathbb{R}}\int_{\mathbb{R}} f(s\theta+t\theta^\perp)\,dt\,ds
=\int_{\mathbb{R}^2} f(x)\,dx
=1,
\]
where we used Fubini and the change of variables $x=s\theta+t\theta^\perp$ (Jacobian $=1$).

For the $L^2$ bound, fix $s\in[-1,1]$. The set of $t\in\mathbb{R}$ such that $s\theta+t\theta^\perp\in \mathbb{D}$ is a single interval (the chord of the unit disk at offset $s$) with length
$L(s)=2\sqrt{1-s^2}\le 2$. By Cauchy--Schwarz in $t$,
\[
|P_\theta f(s)|=\left|\int f(s\theta+t\theta^\perp)\,dt\right|
\le \big(L(s)\big)^{1/2}\left(\int |f(s\theta+t\theta^\perp)|^2\,dt\right)^{1/2}.
\]
Squaring and integrating in $s\in[-1,1]$ yields
\[
\int_{-1}^1 |P_\theta f(s)|^2\,ds
\le \int_{-1}^1 L(s)\int |f(s\theta+t\theta^\perp)|^2\,dt\,ds
\le 2\int_{\mathbb{R}^2} |f(x)|^2\,dx,
\]
where we again used $x=s\theta+t\theta^\perp$ with unit Jacobian. Taking square roots gives
$\|P_\theta f\|_{L^2(\mathbb{R})}\le \sqrt{2}\,\|f\|_{L^2(\mathbb{R}^2)}$.

For the stated $L^\infty$ control: since $f$ is $K$-Lipschitz, let $H=\|f\|_{L^\infty}$ and take a maximizer $x_0$ with $f(x_0)=H$. Then 
$f(x)\ge \max\{H-K\|x-x_0\|,0\}$ for all $x$. Integrating this cone lower bound over the disk $\{x:\ \|x-x_0\|\le H/K\}$ gives
\[
1=\int f \ \ge\ 2\pi\int_0^{H/K} (H-Kr)\,r\,dr
= \frac{\pi}{3}\,\frac{H^3}{K^2},
\]
hence $H\le (3K^2/\pi)^{1/3}$. Using $\|f\|_{L^2}^2\le \|f\|_{L^\infty}\|f\|_{L^1}=\|f\|_{L^\infty}$ then yields the final displayed bound.
\hfill$\square$



\begin{theorem}[Spatial discretization error]
\label{thm:spatial_error_fixed}
Assume $f,g\in L^{1}(\D)$ are both $K$-Lipschitz, and that a projection–density floor $\mu(t) \ge \mu_{\min} \geq 0$ holds for all projections. Let $f_L,g_L$ be their $L\times L$ pixelizations. Define
\[
  d_{\text{ang}}^{2}:=\frac1n\sum_{k=0}^{n-1}
     d_{W_2}^{2}\!\bigl(\Proj_{\theta_k}f,\Proj_{\theta_k}g\bigr),
  \quad
  d_{\text{spat}}^{2}:=\frac1n\sum_{k=0}^{n-1}
     d_{W_2}^{2}\!\bigl(\Proj_{\theta_k}f_L,\Proj_{\theta_k}g_L\bigr),
\]
where for $d_{\text{spat}}^{2}$ the projections are computed by NUFFT.
Then the spatial discretization error $\E_2:=|d_{\text{ang}}^{2}-d_{\text{spat}}^{2}|$ is bounded by
\[
  \E_2 \le 16\min\left\{\frac{4K}{\sqrt \pi \mu_{\min} L} + \,\frac{(3/\pi)^{\frac16}K^{\frac13
 }}{\mu_{\min}}\varepsilon\,,\, \quad \frac{2K^{\frac12}}{\pi^{\frac14} \sqrt L}\ +\,(3/\pi)^{\frac{1}{12}}K^{\frac16}\varepsilon^{\frac12}\right\}
\]
\end{theorem}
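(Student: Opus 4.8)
The plan is to reduce everything to a single projection angle and then chain the elementary estimates already in hand. Since both $d_{\text{ang}}^{2}$ and $d_{\text{spat}}^{2}$ are arithmetic means over the $n$ angles $\theta_k$, the triangle inequality gives
\[
\E_2 \;\le\; \max_{0\le k\le n-1}\bigl| W_2^2(\Proj_{\theta_k}f,\Proj_{\theta_k}g) - W_2^2(\widetilde f_L,\widetilde g_L)\bigr|,
\]
where $\widetilde f_L,\widetilde g_L$ denote the NUFFT-computed projections of $f_L,g_L$ at that angle. Fixing $\theta$ and writing $a=W_2(\Proj_\theta f,\Proj_\theta g)$ and $b=W_2(\widetilde f_L,\widetilde g_L)$, I would factor $a^2-b^2=(a+b)(a-b)$. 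Because all four measures live on $[-1,1]$ and are normalized, every pairwise $W_2$ distance is at most the diameter $2$, so $a+b\le 4$. Together with the reverse triangle inequality for the $W_2$ metric this yields
\[
\bigl|a^2-b^2\bigr| \;\le\; 4\,|a-b| \;\le\; 4\bigl(W_2(\Proj_\theta f,\widetilde f_L)+W_2(\Proj_\theta g,\widetilde g_L)\bigr),
\]
reducing the whole estimate to the single quantity $W_2(\Proj_\theta f,\widetilde f_L)$ and its $g$-analogue, which is exactly what Theorem~\ref{thm:proj_nufft} controls in $L^2$.

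Next I would convert the $L^2$ projection bound into a $W_2$ bound, and here the two branches of the minimum appear. On $[-1,1]$, Cauchy--Schwarz gives $\|\,\cdot\,\|_{L^1}\le\sqrt2\,\|\,\cdot\,\|_{L^2}$. In the regime with a strictly positive density floor I apply Lemma~\ref{lem:w2_lipschitz_positive}, giving $W_2(\Proj_\theta f,\widetilde f_L)\le \mu_{\min}^{-1}\|\Proj_\theta f-\widetilde f_L\|_{L^1}\le \sqrt2\,\mu_{\min}^{-1}\|\Proj_\theta f-\widetilde f_L\|_{L^2}$, which preserves the $O(1/L)$ and $O(\varepsilon)$ rates of Theorem~\ref{thm:proj_nufft}. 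Without the floor I instead invoke Lemma~\ref{lem:w2_holder} with $b-a=2$, obtaining $W_2(\Proj_\theta f,\widetilde f_L)\le \sqrt2\,\|\Proj_\theta f-\widetilde f_L\|_{L^1}^{1/2}\le 2^{3/4}\|\Proj_\theta f-\widetilde f_L\|_{L^2}^{1/2}$, which after $\sqrt{x+y}\le\sqrt x+\sqrt y$ only yields the degraded rates $O(1/\sqrt L)$ and $O(\sqrt\varepsilon)$.

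In both branches the $L^2$ error is supplied by Theorem~\ref{thm:proj_nufft} as $\|\Proj_\theta f-\widetilde f_L\|_{L^2(-1,1)}\le \tfrac{4\sqrt{2/\pi}}{L}K+\varepsilon\,\|\Proj_\theta f_L\|_{L^2(-1,1)}$, and the residual amplitude $\|\Proj_\theta f_L\|_{L^2}$ is bounded uniformly by Lemma~\ref{thm:norm_bound}. That lemma is stated for $f$, but it transfers verbatim to the pixelization $f_L$ since cell averaging is an $L^\infty$-contraction, $\|f_L\|_{L^\infty}\le\|f\|_{L^\infty}$, and preserves unit mass, so $\|\Proj_\theta f_L\|_{L^2}\le\sqrt2\,(3/\pi)^{1/6}K^{1/3}$. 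Substituting these into the per-angle estimate, adding the $f$ and $g$ contributions (a further factor $2$), and collecting constants reproduces the stated bound with overall prefactor $16$; the two expressions inside the $\min$ are precisely the floor and floor-free branches, and taking their minimum is legitimate because both are valid whenever a floor exists, while for $\mu_{\min}=0$ the first argument is $+\infty$ and the min automatically selects the Hölder branch.

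The main obstacle is conceptual rather than computational: Theorem~\ref{thm:proj_nufft} measures the projection error in $L^2$, whereas the target is a Wasserstein distance, and no Lipschitz estimate of $W_2$ by $L^2$ holds uniformly over all densities. The density-floor hypothesis is exactly what licenses the linear bound $W_2\lesssim\|\,\cdot\,\|_{L^1}$ via the $(1/\mu_{\min})$-Lipschitz quantile map of Lemma~\ref{lem:w2_lipschitz_positive}; dropping the floor forces a retreat to the dimension-free H\"older-$\tfrac12$ estimate of Lemma~\ref{lem:w2_holder}, which is the source of the weaker $1/\sqrt L$ and $\sqrt\varepsilon$ rates. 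A secondary point to verify is that $\widetilde f_L,\widetilde g_L$ may be treated as genuine probability densities on $[-1,1]$ satisfying the floor, which is precisely where the normalization and density-floor assumptions on the discrete projections are invoked.
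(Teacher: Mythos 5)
Your proposal is correct and follows essentially the same route as the paper's proof: the same per-angle reduction, the factorization $|a^2-b^2|=(a+b)|a-b|$ with the diameter bound and reverse triangle inequality, the same two lemmas (the $\mu_{\min}$-Lipschitz bound and the H\"older-$\tfrac12$ bound) producing the two branches of the minimum, and the same NUFFT $L^2$ estimate combined with the projection-norm lemma, yielding identical constants. Your explicit justification that the norm bound transfers to $f_L$ (cell averaging is an $L^\infty$-contraction preserving unit mass) is a small point the paper leaves implicit, but it does not change the argument.
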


\begin{proof}
Fix an angle $\theta_k$ and abbreviate
\(u:=\Proj_{\theta_k}f,\;u_L:=\Proj_{\theta_k}f_L,\;
  v:=\Proj_{\theta_k}g,\;v_L:=\Proj_{\theta_k}g_L\).

\smallskip
First of all, because every 1-D projection lives on the interval $[-1,1]$, which has diameter 2, any Wasserstein-2 distance is bounded by $W_2(\cdot,\cdot) \le 2$. Using the identity $|a^{2}-b^{2}|=(a+b)|a-b|$ and the triangle inequality for the $W_2$ metric, we have
\begin{align}\label{eq:split}
  |d_{W_2}^{2}(u,v)-d_{W_2}^{2}(u_L,v_L)|
     &\le \bigl(d_{W_2}(u,v) + d_{W_2}(u_L,v_L)\bigr) |d_{W_2}(u,v) - d_{W_2}(u_L,v_L)| \\
     &\le 4\bigl(d_{W_2}(u,u_L)+d_{W_2}(v,v_L)\bigr). \nonumber
\end{align}

\smallskip 
By combining Lemma \ref{lem:w2_lipschitz_positive} and \ref{lem:w2_holder} we obtain that for densities supported on $[-1, 1]$,
\[
  d_{W_2}(\mu,\nu) \le \min\left\{\frac{\|\mu-\nu\|_{1}}{\mu_{\min}}, \sqrt2 \|\mu-\nu\|_{1}^{\frac12}\right\}.
\]

Applying this, followed by the $L^1$-contraction from the Proposition and the pixel-error bound from the Lemma, we get
\begin{align}
 & d_{W_2}(u,u_L) \le \min\left\{\frac{\|u-u_L\|_{1}}{\mu_{\min}},  \sqrt2\|u-u_L\|_{1}^{\frac12} \right\} \leq  \min\left\{\sqrt 2\frac{\|u-u_L\|_{2}}{\mu_{\min}},  2^{\frac34}\|u-u_L\|_{2}^{\frac12} \right\}
  \\
 \le & \min\left\{\frac{8K}{\sqrt \pi \mu_{\min} L} +\ \frac{\sqrt 2}{\mu_{\min}}\varepsilon\,\|P_\theta f_L\|_{L^2(-1,1)},\,\quad  \frac{4K^{\frac12}}{\pi^{\frac14} \sqrt L}\ +2^{\frac34}\varepsilon^{\frac12}\,\|P_\theta f_L\|^{\frac12}_{L^2(-1,1)}\right\}\\
 \le & \min\left\{\frac{8K}{\sqrt \pi \mu_{\min} L} +\ \frac{2}{\mu_{\min}}\,(3/\pi)^{1/6}K^{1/3}\varepsilon\,,\, \quad \frac{4K^{\frac12}}{\pi^{\frac14} \sqrt L}\ +\ 2\,(3/\pi)^{1/12}K^{1/6}\varepsilon^{\frac12}\right\}
\end{align}
An identical chain of inequalities holds for $d_{W_2}(v,v_L)$. 
Therefore
\begin{align}
 &|d_{W_2}^{2}(u,v)-d_{W_2}^{2}(u_L,v_L)| \le 4 (d_{W_2}(u,u_L)+d_{W_2}(v,v_L))\\
 \le &  16\min\left\{\frac{4K}{\sqrt \pi \mu_{\min} L} + \,\frac{(3/\pi)^{\frac16}K^{\frac13
 }}{\mu_{\min}}\varepsilon\,,\, \quad \frac{2K^{\frac12}}{\pi^{\frac14} \sqrt L}\ +\,(3/\pi)^{\frac{1}{12}}K^{\frac16}\varepsilon^{\frac12}\right\}
\end{align}
This provides a uniform bound for the error term of each projection. Averaging over the $n$ directions does not change the bound, so the same constant bounds the total error $\E_2$.
\end{proof}

\subsection*{Error $\boldsymbol{\mathcal{E}_3}$: Inversion by linear interpolation}
We bound $\mathcal{E}_3 = |d_{\text{spat}}^2 - d_{\text{comp}}^2|$ under the actual pipeline where, for each angle $k$, the inverse CDFs are obtained by \emph{linear interpolation} of a CDF defined on a uniform $t$–grid. Fix an angle $k$ and write $u(t),v(t)$ for the corresponding 1D projection densities on $[-1,1]$, with CDFs
\[
F_u(t):=\int_{-1}^t u(s)\,ds,\qquad F_v(t):=\int_{-1}^t v(s)\,ds.
\]
Let $t_j=-1+jh$, $j=0,\dots,L$, with $h=2/L$. Define the piecewise-linear CDFs $\widetilde F_u,\widetilde F_v$ by linearly interpolating the \emph{exact} node values $F_u(t_j)$ and $F_v(t_j)$. Let $\widetilde u,\widetilde v$ be the corresponding absolutely continuous measures (they have piecewise-constant densities equal to the cell averages of $u$ and $v$):
\[
\widetilde u'(t)=\frac{F_u(t_{j+1})-F_u(t_j)}{h}=\frac{1}{h}\int_{t_j}^{t_{j+1}}u(s)\,ds\quad(t\in[t_j,t_{j+1}]),
\]
and similarly for $\widetilde v$. Denote by $U^{-1},V^{-1}$ the (right-continuous) inverse CDFs of $u,v$ and by $\widetilde U^{-1},\widetilde V^{-1}$ those of $\widetilde u,\widetilde v$. Set
\[
A(z):=|U^{-1}(z)-V^{-1}(z)|^2,\qquad
\widetilde A(z):=|\widetilde U^{-1}(z)-\widetilde V^{-1}(z)|^2,\qquad z\in[0,1].
\]
By definition,
\[
d_{\text{spat},k}^2=\int_0^1 A(z)\,dz,\qquad
d_{\text{comp},k}^2=\frac{1}{L}\sum_{j=0}^{L-1}\widetilde A(z_j),\quad z_j:=\frac{j}{L}.
\]
Thus
\[
|d_{\text{spat},k}^2-d_{\text{comp},k}^2|
\;\le\; \underbrace{\Big|\int_0^1 A-\widetilde A\Big|}_{\text{ICDF interpolation term}}
\;+\; \underbrace{\Big|\int_0^1 \widetilde A - \tfrac{1}{L}\sum_{j=0}^{L-1}\widetilde A(z_j)\Big|}_{\text{quadrature term}}.
\]

\begin{lemma}[ICDF interpolation term]\label{lem:icdf_interp_term}
With the notation above,
\[
\Big|\int_0^1 A-\widetilde A\Big|
\;\le\; 8h \;=\; \frac{16}{L}.
\]
\end{lemma}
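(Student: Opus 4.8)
The plan is to reduce the claimed bound to a one-dimensional Wasserstein-$1$ estimate between each exact projection density and its cell-averaged (piecewise-linear-CDF) surrogate, and then to control that estimate by a telescoping argument that needs \emph{no} regularity of $u,v$ beyond the fact that their CDFs are monotone and run from $0$ to $1$.

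First I would linearize the squared quantile differences pointwise. Writing $a(z)=|U^{-1}(z)-V^{-1}(z)|$ and $\widetilde a(z)=|\widetilde U^{-1}(z)-\widetilde V^{-1}(z)|$, so that $A=a^2$ and $\widetilde A=\widetilde a^2$, I use the factorization $|a^2-\widetilde a^2|=(a+\widetilde a)\,|a-\widetilde a|$ together with the reverse triangle inequality $|a-\widetilde a|\le |U^{-1}-\widetilde U^{-1}|+|V^{-1}-\widetilde V^{-1}|$. Since all four quantile functions take values in the support $[-1,1]$, both $a$ and $\widetilde a$ are bounded by $2$, whence $a+\widetilde a\le 4$. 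This yields the pointwise estimate
\[
|A(z)-\widetilde A(z)|\le 4\bigl(|U^{-1}(z)-\widetilde U^{-1}(z)|+|V^{-1}(z)-\widetilde V^{-1}(z)|\bigr),
\]
and integrating over $z\in[0,1]$ reduces the lemma to bounding $\int_0^1|U^{-1}-\widetilde U^{-1}|\,dz$ and the identical term for $v$.

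Next I would identify each of these integrals with a $W_1$ distance and then with an $L^1$ distance of CDFs. By the one-dimensional quantile formula, $\int_0^1|U^{-1}(z)-\widetilde U^{-1}(z)|\,dz=d_{W_1}(u,\widetilde u)$, and by the Cram\'er identity already invoked in Lemma~\ref{lem:w2_holder}, $d_{W_1}(u,\widetilde u)=\int_{-1}^1|F_u(t)-\widetilde F_u(t)|\,dt$. The key step is then a telescoping bound for this CDF interpolation error: on each cell $[t_j,t_{j+1}]$ the exact CDF $F_u$ is monotone from $F_u(t_j)$ to $F_u(t_{j+1})$, and the interpolant $\widetilde F_u$ is the straight line joining the same two endpoint values, so both functions remain inside the interval $[F_u(t_j),F_u(t_{j+1})]$ of length $\Delta_j:=F_u(t_{j+1})-F_u(t_j)$. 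Hence $|F_u-\widetilde F_u|\le\Delta_j$ on that cell, its integral contributes at most $h\Delta_j$, and summing telescopes to $\sum_j\Delta_j=F_u(1)-F_u(-1)=1$. Therefore $\int_{-1}^1|F_u-\widetilde F_u|\,dt\le h$, likewise for $v$, and combining with the reduction gives $\bigl|\int_0^1 A-\widetilde A\bigr|\le 4(h+h)=8h=16/L$.

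I expect the only delicate point to be the justification of the quantile-to-CDF identity $d_{W_1}(u,\widetilde u)=\int_{-1}^1|F_u-\widetilde F_u|\,dt$ for the surrogate measure $\widetilde u$, whose piecewise-constant density may vanish on some cells, producing flat stretches of $\widetilde F_u$ and corresponding jumps in the quantile function. This is handled by the standard layer-cake/Fubini proof of the identity, which is valid for arbitrary probability measures on $\mathbb{R}$ and does not require strict positivity. The appealing feature of this argument is that the final constant emerges from monotonicity of the CDFs alone, with no appeal to a density floor $\mu_{\min}$ or to Lipschitz regularity of the projections, which is precisely why it survives the $\mu_{\min}=0$ regime of Theorem~\ref{thm:main}.
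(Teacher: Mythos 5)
Your proof is correct and reaches the paper's constant $8h=16/L$ exactly. The opening reduction is the same as the paper's: factor $|A-\widetilde A|=(a+\widetilde a)|a-\widetilde a|$, bound $a+\widetilde a\le 4$ using that all quantile functions take values in $[-1,1]$, and apply the reverse triangle inequality to reduce everything to $\int_0^1|U^{-1}-\widetilde U^{-1}|\,dz$ and its analogue for $v$. Where you diverge is in how this quantile $L^1$ error is bounded by $h$. The paper goes through $W_2$: it applies Cauchy--Schwarz to get $\int_0^1|U^{-1}-\widetilde U^{-1}|\le d_{W_2}(u,\widetilde u)$, then constructs an explicit coupling that transports mass only within each cell $[t_j,t_{j+1}]$ (legitimate because $F_u$ and $\widetilde F_u$ agree at the nodes, so the cell masses coincide), giving $d_{W_2}^2(u,\widetilde u)\le h^2$. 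You instead identify the quantile $L^1$ error with $d_{W_1}(u,\widetilde u)$, convert it to the CDF $L^1$ distance via the Cram\'er identity (already used in the paper's Lemma on H\"older continuity of $W_2$), and bound $|F_u-\widetilde F_u|\le \Delta_j$ pointwise on each cell by monotonicity of both curves between shared endpoints, so that the integral telescopes to $h\sum_j\Delta_j=h$. Both arguments hinge on the same structural fact --- node agreement of the exact and interpolated CDFs --- but yours is slightly more elementary: it avoids Cauchy--Schwarz and the coupling formalism entirely, needing only monotonicity and the one-dimensional $W_1$ identities. The paper's route has the minor advantage of producing the stronger intermediate statement $d_{W_2}(u,\widetilde u)\le h$ (which controls $W_1$ as well); your route produces only the $W_1$ bound, but that is all the lemma requires. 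Your closing observation that no density floor or Lipschitz regularity is needed is accurate, though it is not a point of contrast: the paper's coupling argument is equally assumption-free at this stage.
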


\begin{proof}
Use $|a^2-b^2|\le(|a|+|b|)|a-b|$ with $a=U^{-1}-V^{-1}$ and $b=\widetilde U^{-1}-\widetilde V^{-1}$, and the fact $|U^{-1}|,|V^{-1}|,|\widetilde U^{-1}|,|\widetilde V^{-1}|\le 1$:
\[
\Big|\int_0^1 A-\widetilde A\Big| \le \int_0^1 (|a|+|b|)\,|a-b|
\le 4 \int_0^1 \big(|U^{-1}-\widetilde U^{-1}| + |V^{-1}-\widetilde V^{-1}|\big).
\]
By Cauchy–Schwarz,
$\int_0^1 |U^{-1}-\widetilde U^{-1}|\le \big(\int_0^1 |U^{-1}-\widetilde U^{-1}|^2\big)^{1/2}=W_2(u,\widetilde u)$, and similarly for $v$.
It remains to show $d_{W_2}(u,\widetilde u)\le h$ (and the same for $v$). Since $F_u$ and $\widetilde F_u$ agree at all nodes $t_j$, $u$ and $\widetilde u$ assign the \emph{same mass} to each cell $I_j=[t_j,t_{j+1}]$. Couple the two measures so that all mass in $I_j$ is transported within $I_j$; then $|x-y|\le h$ on that event. Hence
\[
d_{W_2}^2(u,\widetilde u)\ \le\ \sum_{j} \big(\text{mass in }I_j\big)\cdot h^2\ =\ h^2\sum_j \text{mass in }I_j\ =\ h^2.
\]
Therefore $d_{W_2}(u,\widetilde u)\le h$ and likewise $d_{W_2}(v,\widetilde v)\le h$, giving
$\big|\int_0^1 A-\widetilde A\big|\le 4(h+h)=8h=16/L$.
\end{proof}

\begin{lemma}[Quadrature term (BV bound)]\label{lem:BV_quad}
If $B\in BV([0,1])$, then for the left Riemann sum at $z_j=j/L$,
\[
\left|\int_0^1 B(z)\,dz - \frac{1}{L}\sum_{j=0}^{L-1}B(z_j)\right|
\;\le\; \frac{\operatorname{TV}(B)}{L}.
\]
\end{lemma}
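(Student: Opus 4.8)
The plan is to exploit the additivity of the left-endpoint quadrature error over the uniform partition and to control each local error by the \emph{local} total variation of $B$. First I would set $z_L:=1$ and partition $[0,1]$ into the $L$ cells $I_j:=[z_j,z_{j+1}]$, each of length $1/L$. Since $\int_{I_j}dz=1/L$, the local quadrature error on $I_j$ can be rewritten as
\[
\int_{I_j}B(z)\,dz-\tfrac{1}{L}B(z_j)
=\int_{I_j}\bigl(B(z)-B(z_j)\bigr)\,dz .
\]

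The key pointwise estimate is that for every $z\in I_j$ one has $|B(z)-B(z_j)|\le \operatorname{TV}(B;I_j)$, because the variation accumulated by $B$ between the left endpoint $z_j$ and any interior point $z$ cannot exceed its total variation over the whole cell. Integrating this bound over $I_j$ immediately yields
\[
\left|\int_{I_j}\bigl(B(z)-B(z_j)\bigr)\,dz\right|
\le \tfrac{1}{L}\operatorname{TV}(B;I_j).
\]

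Finally I would sum over $j=0,\dots,L-1$, apply the triangle inequality, and invoke the additivity of total variation over adjacent subintervals, $\sum_{j=0}^{L-1}\operatorname{TV}(B;I_j)=\operatorname{TV}(B;[0,1])$, to obtain
\[
\left|\int_0^1 B(z)\,dz-\tfrac{1}{L}\sum_{j=0}^{L-1}B(z_j)\right|
\le \tfrac{1}{L}\sum_{j=0}^{L-1}\operatorname{TV}(B;I_j)
=\frac{\operatorname{TV}(B)}{L}.
\]

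There is no substantial obstacle in this argument; the only point requiring care is the handling of the node values $B(z_j)$ for a $BV$ function, which may have jump discontinuities, together with the integrability of $B$. I would therefore fix a (say right-continuous) representative of $B$ at the outset so that both the sampled values and the local variation bound $|B(z)-B(z_j)|\le\operatorname{TV}(B;I_j)$ are unambiguous; the inequality is insensitive to the choice of representative since it only affects $B$ on a set of measure zero in the integral.
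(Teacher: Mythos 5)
Your proof is correct and follows essentially the same route as the paper: decompose the error cell by cell, bound $|B(z)-B(z_j)|$ on each cell, and sum. The only cosmetic difference is that the paper bounds the local deviation by the oscillation $\operatorname{osc}_{I_j}(B)$ and uses $\sum_j \operatorname{osc}_{I_j}(B)\le\operatorname{TV}(B)$, whereas you use the local variation $\operatorname{TV}(B;I_j)$ with its additivity over adjacent cells; both yield the identical bound.
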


\begin{proof}
Let $I_j=[z_j,z_{j+1}]$. For $z\in I_j$, $|B(z)-B(z_j)|\le \operatorname{osc}_{I_j}(B)$, so
\[
\left|\int_{I_j}B - B(z_j)\tfrac{1}{L}\right|
=\left|\int_{I_j}(B(z)-B(z_j))\,dz\right|
\le \tfrac{1}{L}\operatorname{osc}_{I_j}(B).
\]
Sum over $j$ and use $\sum_j \operatorname{osc}_{I_j}(B)\le \operatorname{TV}(B)$.
\end{proof}

\begin{lemma}[Bounding $\operatorname{TV}(\widetilde A)$]\label{lem:TV_Atilde}
Let $\widetilde D(z):=\widetilde U^{-1}(z)-\widetilde V^{-1}(z)$. Then
$\|\widetilde D\|_{L^\infty}\le 2$ and
$\operatorname{TV}(\widetilde D)\le \operatorname{TV}(\widetilde U^{-1})+\operatorname{TV}(\widetilde V^{-1})\le 2+2=4$.
Therefore
\[
\operatorname{TV}(\widetilde A)=\operatorname{TV}(\widetilde D^2)
\ \le\ 2\,\|\widetilde D\|_{L^\infty}\,\operatorname{TV}(\widetilde D)\ \le\ 16.
\]
\end{lemma}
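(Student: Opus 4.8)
The plan is to treat the three assertions in sequence, each reducing to an elementary property of quantile functions and of total variation. First I would record that, since the measures $\widetilde u,\widetilde v$ are supported on $[-1,1]$, their generalized inverse CDFs $\widetilde U^{-1}(z)=\inf\{t:\widetilde F_u(t)\ge z\}$ and $\widetilde V^{-1}$ map $[0,1]$ into $[-1,1]$. Hence $|\widetilde D(z)|=|\widetilde U^{-1}(z)-\widetilde V^{-1}(z)|\le 1-(-1)=2$ pointwise, which is exactly $\|\widetilde D\|_{L^\infty}\le 2$.

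For the total-variation bound on $\widetilde D$, the key observation is that each quantile function is monotone nondecreasing. For a monotone function the total variation equals the difference between its supremum and infimum, so the range bound gives $\operatorname{TV}(\widetilde U^{-1})\le 2$ and identically $\operatorname{TV}(\widetilde V^{-1})\le 2$. Subadditivity of total variation under differences, $\operatorname{TV}(\widetilde U^{-1}-\widetilde V^{-1})\le\operatorname{TV}(\widetilde U^{-1})+\operatorname{TV}(\widetilde V^{-1})$, then yields $\operatorname{TV}(\widetilde D)\le 4$. I would emphasize that no density-floor hypothesis enters here: only monotonicity and the bounded range are used, so the estimate remains valid even when $\widetilde F_u$ or $\widetilde F_v$ has flat segments and the corresponding quantile function has jumps.

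The final step is the square rule $\operatorname{TV}(\widetilde D^2)\le 2\|\widetilde D\|_{L^\infty}\operatorname{TV}(\widetilde D)$, which I would prove straight from the definition. For any partition $z_0<\cdots<z_N$ of $[0,1]$, the telescoping identity $|a^2-b^2|=|a+b|\,|a-b|$ together with $|\widetilde D(z_{i+1})+\widetilde D(z_i)|\le 2\|\widetilde D\|_{L^\infty}$ gives
\[
\sum_{i}\bigl|\widetilde D(z_{i+1})^2-\widetilde D(z_i)^2\bigr|
\le 2\|\widetilde D\|_{L^\infty}\sum_i\bigl|\widetilde D(z_{i+1})-\widetilde D(z_i)\bigr|
\le 2\|\widetilde D\|_{L^\infty}\operatorname{TV}(\widetilde D),
\]
and taking the supremum over partitions establishes the claim. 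Combining the three pieces produces $\operatorname{TV}(\widetilde A)=\operatorname{TV}(\widetilde D^2)\le 2\cdot 2\cdot 4=16$, as stated.

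I do not anticipate a genuine obstacle in this lemma; the only point requiring a little care is confirming that the quantile functions are honestly $[-1,1]$-valued and monotone even when the interpolated CDFs $\widetilde F_u,\widetilde F_v$ are merely continuous and nondecreasing (possibly with flat parts). Phrasing the inverse through the generalized-inverse formula $\widetilde U^{-1}(z)=\inf\{t:\widetilde F_u(t)\ge z\}$ makes both monotonicity and the range bound immediate, after which the argument rests entirely on the two standard facts that the total variation of a monotone function equals its range and that composition with the locally Lipschitz map $x\mapsto x^2$ contracts total variation by the factor $2\|\widetilde D\|_{L^\infty}$.
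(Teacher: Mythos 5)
Your proof is correct and follows essentially the same route as the paper's: bound the quantile functions' total variation by monotonicity plus the range $[-1,1]$, use subadditivity of TV under differences, and apply the rule $\operatorname{TV}(g^2)\le 2\|g\|_\infty\operatorname{TV}(g)$. The only difference is that you prove this last square rule from the partition definition, whereas the paper cites it as standard — a welcome bit of added rigor, but not a different argument.
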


\begin{proof}
Each inverse CDF is nondecreasing with range in $[-1,1]$, hence total variation $=1-(-1)=2$ and sup-norm $\le 1$; the rest follows from
$\operatorname{TV}(g^2)\le 2\|g\|_\infty \operatorname{TV}(g)$.
\end{proof}

\begin{theorem}[Discrete integration error with linear‑interpolated ICDFs]\label{thm:E3_final}
For each angle $k$,
\[
|d_{\text{spat},k}^2 - d_{\text{comp},k}^2|
\;\le\; \frac{16}{L} \;+\; \frac{16}{L}
\;=\; \frac{32}{L}.
\]
Consequently,
\[
\mathcal{E}_3
= \left|\frac{1}{n}\sum_{k=0}^{n-1} d_{\text{spat},k}^2
      - \frac{1}{n}\sum_{k=0}^{n-1} d_{\text{comp},k}^2\right|
\;\le\; \frac{32}{L}.
\]
\end{theorem}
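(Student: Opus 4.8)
The plan is to assemble the per-angle bound from the decomposition displayed immediately before the statement, namely $|d_{\text{spat},k}^2-d_{\text{comp},k}^2|\le \bigl|\int_0^1 A-\widetilde A\bigr| + \bigl|\int_0^1 \widetilde A-\tfrac1L\sum_{j=0}^{L-1}\widetilde A(z_j)\bigr|$, and then to invoke the three lemmas already established. First I would control the ICDF interpolation term directly by Lemma \ref{lem:icdf_interp_term}, which gives $\bigl|\int_0^1 A-\widetilde A\bigr|\le 16/L$; the conceptual content there is the coupling argument showing $d_{W_2}(u,\widetilde u)\le h$ because $u$ and $\widetilde u$ carry the same mass on each grid cell $[t_j,t_{j+1}]$, so that all transport happens within a cell of width $h$.

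Second, I would handle the quadrature term by viewing the left Riemann sum $\tfrac1L\sum_j \widetilde A(z_j)$ as an approximation of $\int_0^1 \widetilde A$. Applying Lemma \ref{lem:BV_quad} with $B=\widetilde A$ reduces the error to $\operatorname{TV}(\widetilde A)/L$, and then Lemma \ref{lem:TV_Atilde} supplies $\operatorname{TV}(\widetilde A)\le 16$, so the quadrature term is at most $16/L$. The only point requiring verification is that $\widetilde A\in BV([0,1])$, which holds because each interpolated inverse CDF $\widetilde U^{-1},\widetilde V^{-1}$ is nondecreasing with range contained in $[-1,1]$, so their difference $\widetilde D$ has total variation at most $4$ and sup-norm at most $2$, whence $\operatorname{TV}(\widetilde D^2)\le 2\|\widetilde D\|_\infty \operatorname{TV}(\widetilde D)\le 16$.

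Adding the two contributions yields $|d_{\text{spat},k}^2-d_{\text{comp},k}^2|\le 16/L+16/L=32/L$, and crucially this bound is \emph{uniform} in the angle index $k$, since none of the constants depend on $\theta_k$. Finally, to pass to $\mathcal{E}_3$ I would write it as the modulus of a difference of two averages, use the triangle inequality to bound it by $\tfrac1n\sum_{k=0}^{n-1} |d_{\text{spat},k}^2-d_{\text{comp},k}^2|$, and then insert the uniform per-angle estimate, so that the average of $32/L$ is again $32/L$.

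Since every analytic ingredient is packaged in the preceding lemmas, there is no genuine technical obstacle remaining in this theorem; it is an assembly step. If I had to name the most delicate piece, it would be the BV estimate of Lemma \ref{lem:TV_Atilde} feeding the quadrature bound of Lemma \ref{lem:BV_quad}, because that is where boundedness of the support $[-1,1]$ and monotonicity of the quantile functions are used to obtain an $O(1/L)$ rate rather than something worse. I would also double-check the decomposition itself—a single triangle inequality after inserting $\pm\int_0^1 \widetilde A$ and $\pm\tfrac1L\sum_j\widetilde A(z_j)$—to ensure there are no hidden cross terms, but this is routine.
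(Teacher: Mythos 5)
Your proposal is correct and follows exactly the paper's own proof: the paper likewise combines Lemma~\ref{lem:icdf_interp_term} (interpolation term $\le 16/L$) with Lemmas~\ref{lem:BV_quad} and~\ref{lem:TV_Atilde} (quadrature term $\le \operatorname{TV}(\widetilde A)/L \le 16/L$) and sums, with the uniform-in-$k$ bound passing to the average over angles. Your write-up is in fact more explicit than the paper's one-line assembly, but the route is identical.
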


\begin{proof}
Combine Lemma~\ref{lem:icdf_interp_term} with Lemmas~\ref{lem:BV_quad} and \ref{lem:TV_Atilde}:
\[
\Big|\int_0^1 A-\widetilde A\Big|\le \frac{16}{L},\qquad
\Big|\int_0^1 \widetilde A - \tfrac{1}{L}\sum_j \widetilde A(z_j)\Big|
\le \frac{\operatorname{TV}(\widetilde A)}{L}\le \frac{16}{L}.
\]
\end{proof}

\subsection{Total Error Bound}

Having analyzed each source of error individually, we now combine them to provide a comprehensive bound on the total error between the true squared sliced Wasserstein distance, $\dswsq(f,g)$, and the final value computed by the algorithm, $\dhatswsq(F,G)$.

The total error is bounded by the sum of the four principal error components:
\begin{align*}
    \bigl|\dswsq(f,g) - \dhatswsq(F,G)\bigr| \le \E_1 + \E_2 + \E_3 
\end{align*}
Combining these gives the final bound:
\[
    \bigl|\dswsq(f,g) - \dhatswsq(F,G)\bigr| \le \frac{8\pi}{n} + 16\min\left\{\frac{4K}{\sqrt \pi \mu_{\min} L} + \,\frac{(3/\pi)^{\frac16}K^{\frac13
 }}{\mu_{\min}}\varepsilon\,,\, \quad \frac{2K^{\frac12}}{\pi^{\frac14} \sqrt L}\ +\,(3/\pi)^{\frac{1}{12}}K^{\frac16}\varepsilon^{\frac12}\right\} + \frac{32}{L}
\]
where $K=\max\{K_g\,, K_f\}$.

\paragraph{Remark (extend to the ramp filtered case).}
In the ramp–filtered pipeline the central slice is multiplied by $|\xi|$, i.e. we replace $P_\theta f$ by $\widetilde\mu_\theta:= h * P_\theta f$ with $\widehat{\widetilde\mu_\theta}(\xi)=|\xi|\,\widehat{P_\theta f}(\xi)$. By Plancherel this is just differentiation in $s$:
\[
\|\widetilde\mu_\theta\|_{L^2}=\|\partial_s P_\theta f\|_{L^2}\le \sqrt{2}\,\|(\nabla f)\!\cdot\!\theta\|_{L^2}\le \sqrt{2\pi}\,K.
\]
Consequently, in the fixed–angle decomposition
\[
\|P_\theta f-\widetilde{g}_L\|_{L^2}\;\le\;
\underbrace{\|P_\theta f-\Pi_M P_\theta f\|_{L^2}}_{\text{bandlimit}}
+\underbrace{\|\Pi_M(P_\theta f-P_\theta f_L)\|_{L^2}}_{\text{spatial}}
+\underbrace{\Big\|\sum_{k=-M}^{M-1}\delta_k e^{ik\pi s}\Big\|_{L^2}}_{\text{NUFFT}},
\]
only the \emph{constants} change when we apply the ramp filter; the \emph{orders} in $L$ and $\epsilon$ do not:
\begin{itemize}
  \item \textbf{Bandlimit (truncation).} After filtering we compare $\partial_s P_\theta f$ to its degree–$M$ trigonometric projection. The same $O(L^{-1})$ rate holds; the norm in the constant is now $\|\partial_s P_\theta f\|_{L^2}$, which is uniformly controlled by $K$ as above.
  \item \textbf{Spatial discretization.} We compare the \emph{filtered} continuous and discrete slices. The same $O(L^{-1})$ (or $O(L^{-1/2})$ without a density floor) rate carries over; the constant again involves $\|\partial_s P_\theta f\|_{L^2}\lesssim K$ instead of $\|P_\theta f\|_{L^2}$.
  \item \textbf{NUFFT term.} The frequency samples are multiplied by $|\omega|$, so the sample–wise perturbation carries an extra $|\omega|$ weight. With a relative tolerance on the (ramped) samples this only changes the \emph{constant} in the $L^2$ reconstruction error; the \emph{order} in $\epsilon$ is the same as before (and in the no–floor regime it remains $O(\sqrt{\epsilon})$ after the $L^1\!\to\!d_{W_2}$ step).
\end{itemize}
Importantly, the other two global pieces of the analysis are \emph{unchanged}: 
\(\mathcal E_1\) (angular quadrature) remains $O(1/n)$ because the Lipschitz-in-angle control is taken on the underlying projections before filtering, and \(\mathcal E_3\) (ICDF interpolation and Riemann–sum) remains $O(1/L)$ since it operates on normalized 1-D probabilities (the positive/negative parts after splitting), for which the same BV/quadrature bounds apply. In short, the ramp filter sharpens slices but only modifies constants via $\|\partial_s P_\theta f\|_{L^2}\lesssim K$; all rates in $n$, $L$, and $\epsilon$ stay the same.

\section{Proof of Theorem ~\ref{thm:view} and \ref{thm:shift}}
\label{app:thm_view_thm_shift}
\subsection{Proof for the stability to the change of viewing directions}
The two projection planes are within two great circles that differ by angle $\theta$. Assume that $l$ is the 3-D rotation axis (common line of the two projection planes), and let $l_1(\alpha)$ and $l_2(\alpha)$ be the line in the two projection images respectively, that has angle $\alpha$ with $l$. For both projections it is sufficient to consider $0\leq \alpha\leq \pi/2$ due to the symmetry. Note that in this setup, $l_1(0), l_2(0)$ are both the common line, and $l_1(\pi/2), l_2(\pi/2)$ are the lines within the two image planes that are orthogonal to the common line. Further, we know that $\angle(l_1(0), l_2(0))=0$, and $\angle(l_1(\pi/2), l_2(\pi/2))=\theta$.

For each $\alpha$, let $g(\alpha;\theta) = \angle(l_1(\alpha), l_2(\alpha))$ where $g(0)=0$ and $g(\pi/2)=\theta$.
Furthermore, we can derive a more general formula
$$g(\alpha;\theta)=\cos^{-1}(\cos^2\alpha+\sin^2\alpha\cos\theta)$$ from the spherical law of cosines. See the illustration below.

\begin{figure}[h]
    \centering
    \includegraphics[width=0.5\linewidth]{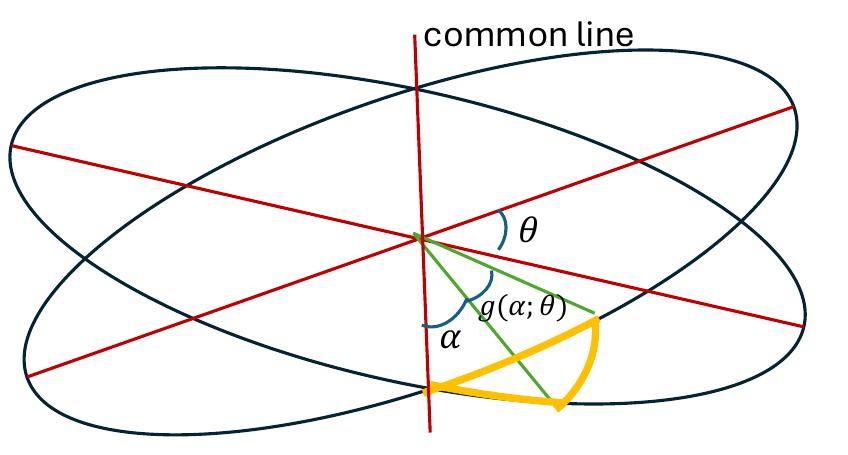}
    \caption{The formula of $g(\alpha; \theta)$ can be found by applying the spherical law of cosines to the yellow triangle on $S^2$.}
    \label{fig:enter-label}
\end{figure}


\noindent We note that
\begin{align}\label{eq:cosg}
\cos(g(\alpha;\theta))=\cos ^2 \alpha+\sin ^2 \alpha \cos \theta=\cos ^2 \alpha+\sin^2\alpha -\sin^2\alpha+\sin ^2 \alpha \cos \theta=1-\sin ^2 \alpha(1-\cos \theta).
\end{align}
Next, due to the known identity $1-\cos(y)=2\sin^2(y/2)$ and by letting both LHS and RHS equal to $x$, we have
$$
y=\cos ^{-1}(1-x)=2 \sin ^{-1}(\sqrt{x / 2}),
$$
which yields a trigonometric identity that involves inverse sine and cosine. Now, by applying this identity,
$$
g(\alpha;\theta)=\cos ^{-1}\left(1-\sin ^2 \alpha(1-\cos \theta)\right)=2 \sin ^{-1}\left(\sqrt{\frac{\sin ^2 \alpha(1-\cos \theta)}{2}}\right)=2 \sin ^{-1}\left(\sin \alpha \sin\left(\frac{\theta}{2}\right)\right).
$$
With this simplified $g(\alpha;\theta)$, we investigate its properties:
For the function $$f(x;k)=\sin^{-1}(k\sin x) ,$$
we know that $f'(x;k)=\frac{k\cos x}{\sqrt{1-k^2\sin^2 x}}$ and 

\begin{align}
f''(x;k)
=\frac{k(k^2-1)\sin x}{(1-k^2\sin^2 x)^{3/2}}\leq 0 ,
\end{align}
for $0\leq k\leq 1$. Therefore, $f(x;k)$ is a concave function of $x$, and it is upper bounded by the linear function $y=f'(0;k)x=kx$ and $f(x;k)$ reaches the maximum when $x=\pi/2$. Thus, by letting $k=\sin(\theta/2)$,
$$g(\alpha;\theta)\leq \min\left\{2\sin\left(\frac{\theta}{2} \right)\alpha,\, \theta\right\}\leq \min\left\{\alpha,\, 1\right\}\theta.$$
With this convenient upper bound, we can control the average of $g(\alpha;\theta)$ over $0\leq \alpha\leq \frac{\pi}{2}$:
$$\frac{2}{\pi}\int_0^{\frac{\pi}{2}}g^p(\alpha;\theta)d\alpha\leq \frac{2}{\pi}\left(\int_0^{1}\alpha^p \theta^p d\alpha+\int_1^{\frac{\pi}{2}} \theta^p d\alpha\right)=\frac{2}{\pi}\left(\frac{1}{p+1}\theta^p+(\frac\pi2-1)\theta^p\right)=\left(1-\frac{2p}{(p+1)\pi}\right)\theta^p.$$
\\
\\
We will then use the following lemma.
\begin{lemma}\label{lem:wtheta}
    $l_1$ and $l_2$ differ by angle $\theta$ and $f$ is any 2-D function supported on the unit disk, then:
$$
d_{W_p}\left(P_{l_1}f, P_{l_2}f\right)\leq \theta.
$$
\end{lemma}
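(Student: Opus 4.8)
The plan is to recognize each line projection $P_{l_i}f$ as the one-dimensional marginal (pushforward) of the 2-D probability measure $f$ under the orthogonal projection onto the direction of $l_i$, and then to exploit the elementary fact that two pushforwards of a \emph{common} source measure are automatically close in Wasserstein distance.

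Concretely, let $e_1,e_2\in S^1$ be the unit direction vectors associated with $l_1,l_2$, so that $\angle(e_1,e_2)=\theta$, and define the scalar maps $T_i(\bfx)=\langle \bfx, e_i\rangle$. Tracking the projection convention in~\cref{eq:radon_transform} (and using that rotations preserve the inner product), one checks that $P_{l_i}f=(T_i)_\# f$, i.e.\ $P_{l_i}f$ is the law of the scalar $T_i(\bfx)$ when $\bfx\sim f$. The precise rotation sign is immaterial here: all that is needed is that the two projection directions are genuine unit vectors separated by angle $\theta$, so that $\norm{e_1-e_2}=2\sin(\theta/2)$.

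The key step is the standard shared-source coupling inequality. Since $f$ is a common source, the map $\bfx\mapsto(T_1(\bfx),T_2(\bfx))$ pushes $f$ forward to a joint measure $\gamma$ on $\R\times\R$ whose two marginals are exactly $P_{l_1}f$ and $P_{l_2}f$; hence $\gamma$ is an admissible transport plan, and its cost bounds the optimal cost:
\[
d_{W_p}^p(P_{l_1}f,P_{l_2}f)\;\le\;\int_\D |T_1(\bfx)-T_2(\bfx)|^p\,f(\bfx)\,d\bfx
=\int_\D |\langle \bfx, e_1-e_2\rangle|^p\,f(\bfx)\,d\bfx.
\]
I would finish with a pointwise estimate on the integrand: by Cauchy--Schwarz and $\norm{\bfx}\le 1$ on $\D$, we have $|\langle \bfx,e_1-e_2\rangle|\le \norm{e_1-e_2}=2\sin(\theta/2)\le\theta$. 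Substituting and using $\int_\D f=1$ gives $d_{W_p}^p(P_{l_1}f,P_{l_2}f)\le\theta^p$, which is the claim.

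The computation is elementary; the only points requiring care are verifying that the chosen plan $\gamma$ really has the correct marginals (an immediate consequence of the pushforward definition) and confirming the geometric identity $\norm{e_1-e_2}=2\sin(\theta/2)$ under the paper's angle convention. There is no genuine obstacle beyond this bookkeeping. In fact the bound $2\sin(\theta/2)\le\theta$ leaves some slack, so the constant $1$ in front of $\theta$ is not tight but is amply sufficient for the lemma (and, via \Cref{prop:integrand_lipschitz}, for the angular quadrature estimate that uses it).
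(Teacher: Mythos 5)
Your proof is correct, and it takes a more direct route than the paper's. The paper argues in two stages: it first writes $P_{l_2}f = P_{l_1}(R_\theta f)$ and shows that projection contracts the Wasserstein distance, $d_{W_p}\big(P_{l_1}f, P_{l_1}(R_\theta f)\big) \le d_{W_p}(f, R_\theta f)$, by marginalizing a 2-D coupling onto the projection coordinate (Fubini); it then bounds the 2-D distance $d_{W_p}(f, R_\theta f) \le 2\sin(\theta/2) \le \theta$ by using the rotation itself as a Monge map on the unit disk. You never introduce the 2-D Wasserstein distance at all: you couple the two 1-D projections directly by pushing $f$ forward under $\bfx \mapsto \big(\langle \bfx, e_1\rangle, \langle \bfx, e_2\rangle\big)$, and bound the transport cost pointwise by Cauchy--Schwarz, $\abs{\langle \bfx, e_1 - e_2\rangle} \le \norm{e_1 - e_2} = 2\sin(\theta/2) \le \theta$. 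In fact your coupling is exactly what results from composing the paper's two steps (it is the marginalization of the plan $(\mathrm{id}, R_\theta)_\# f$ onto the projection coordinates), so the geometric content is identical and both arguments give the same constant; yours is the more economical presentation. What the paper's longer route buys is that its two intermediate facts stand on their own: the projection-contraction inequality holds for \emph{arbitrary} pairs of measures $f,g$ (it is the $W_p$ generalization of Lemma~1 of the cited wavelet-EMD work), and the bound on $d_{W_p}(f, R_\theta f)$ is reused conceptually elsewhere, whereas your shared-source construction is tied to the special situation where both 1-D measures are projections of a common 2-D source — which is all this lemma needs.
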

\begin{proof}
Recall that $l_1$ and $l_2$ differ by angle $\theta$ and $f$ is any 2-D function supported on the unit disk, then:

\begin{align}
d_{W_p}^p\left(P_{l_1}f, P_{l_2}f\right) =d_{W_p}^p\left(P_{l_1}f, P_{l_1}(R_\theta f)\right)=\inf _{ \Gamma_P} \int_{\mathbb R\times \mathbb R}|x_1-x_2|^p d\Gamma_P(x_1, x_2)
\end{align}
where $\Gamma_p$ is the joint density with two marginals $P_{l_1}f$ and $P_{l_1}(R_\theta f)$. On the other hand, in the 2-D space,

\begin{align}
d_{W_p}^p(f, R_\theta f) = \inf _{ \Gamma} \int_{\mathbb R^2\times \mathbb R^2}\|(x_1, y_1)-(x_2, y_2)\|^p d\Gamma((x_1, y_1), (x_2, y_2))
\end{align}
where $\Gamma$ is the joint density of the two 2-D random variables whose marginal distributions are $f$ and $R_\theta f$ respectively. For now, WLOG assume that $l_1$ is the horizontal line, and $P_{l_1}$ takes the marginal over the $y$ coordinate, then:

\begin{align}
d_{W_p}^p(f, R_\theta f) \geq  \inf _{ \Gamma} \int_{\mathbb R^2\times \mathbb R^2}|x_1-x_2|^p d\Gamma((x_1, y_1), (x_2, y_2))= \inf _{ \Gamma_P} \int_{\mathbb R\times \mathbb R}|x_1-x_2|^p d\Gamma_P(x_1, x_2)= d_{W_p}^p\left(P_{l_1}f, P_{l_1}(R_\theta f)\right) ,
\end{align}
where the first equality is obtained by marginalizing the density function of the distribution $\Gamma$ over $y_1, y_2$ and applying Fubini's Theorem. Note that the above result coincides with Lemma 1 of~\cite{rao_wasserstein_2020} in the $W_1$ case, whereas we have more general $W_p$ but with a lower dimension.
\\
\\
Now, consider the Monge formulation of $W_p$, where we know:
$$
d_{W_p}^p(f, R_\theta f) \leq \int_D\|(x,y)-M((x,y))\|^p d F ,
$$
where $D$ is the unit disk and $F$ the distribution corresponding to the density $f$, and $M$ is any transportation map between $f$ and $R_\theta f$.
Then:
$$
d_{W_p}^p\left(f, R_\theta f\right) \leq \int_D\left\|(x, y)^\top-R_\theta(x, y)^\top\right\|^p d F .
$$
Since for any vector $(x, y)$ with $\|(x, y)\| \leq 1$, we have
$$
\left\|(x, y)^\top-R_\theta(x, y)^\top\right\| \leq 2 \sin (\theta / 2)\leq \theta ,
$$
and therefore:
$$
d_{W_p}^p\left(P_{l_1}f, P_{l_2}f\right)\leq d_{W_p}^p(f, R_\theta f) \leq \int_D\theta^p dF=\theta^p.
$$

\end{proof}
In the sliced Wasserstein case, we have:

\begin{align}
d_{W_p}(P_{l_1(\alpha)}I_1, P_{l_2(\alpha)}I_2) \leq g(\alpha;\theta).
\end{align}
By the definition of the sliced $p$-Wasserstein distance, we have:
\begin{align}
\min_{\alpha}d_{SW_p}^p(I_1, \mathbf R_\alpha I_2)\leq \frac{2}{\pi}\int_0^\frac{\pi}{2} d_{W^p}^p(P_{l_1(\alpha)}I_1, P_{l_2(\alpha)}I_2)d\alpha\leq  \frac{2}{\pi}\int_0^{\frac{\pi}{2}}g^p(\alpha;\theta)d\alpha\leq \left(1-\frac{2p}{(p+1)\pi}\right)\theta^p.
\end{align}

\subsection{Proof for the stability to translation}
\begin{lemma}
Let $P_{l_\theta}f$ be the projection of $f$ on the line $l_\theta$ where $\theta=\angle(\bfs, l_\theta)$, then:
$$d_{W_p}(P_{l_\theta}f, P_{l_\theta}T_{\bfs}f) = \cos \theta \|\bfs\| .$$ 
\end{lemma}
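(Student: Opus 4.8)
The plan is to reduce the two–dimensional statement to a one–dimensional fact by showing that the projection operator $P_{l_\theta}$ \emph{intertwines} the planar shift $T_{\bfs}$ with a one–dimensional shift of the line measure. Concretely, I expect to prove the commutation identity
\[
P_{l_\theta}(T_{\bfs}f) = T_{\langle \bfs, \mathbf e\rangle}\bigl(P_{l_\theta}f\bigr),
\qquad \langle \bfs, \mathbf e\rangle = \|\bfs\|\cos\theta,
\]
where $\mathbf e$ is the unit direction of $l_\theta$. Once this is established, the lemma follows because in one dimension the $p$-Wasserstein distance between a probability measure and its rigid translate is exactly the magnitude of the translation.

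First I would fix an orthonormal frame $\{\mathbf e, \mathbf e^\perp\}$ adapted to $l_\theta$ and write every point as $\bfx = t\mathbf e + u\mathbf e^\perp$, so that $P_{l_\theta}f(t) = \int_{\mathbb R} f(t\mathbf e + u\mathbf e^\perp)\,du$. Decomposing the shift as $\bfs = s_\parallel \mathbf e + s_\perp \mathbf e^\perp$ with $s_\parallel = \|\bfs\|\cos\theta$, a single change of variables $u \mapsto u - s_\perp$ (Jacobian $1$) in the integral defining $P_{l_\theta}(T_{\bfs}f)(t)$ absorbs the perpendicular component $s_\perp$ entirely and leaves only the along–line shift $t \mapsto t - s_\parallel$. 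This yields the commutation identity above; the perpendicular part of the shift is invisible to the projection because it merely relabels the variable being integrated out.

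Next I would invoke the closed form of the one–dimensional Wasserstein distance in terms of quantile functions from~\cref{eq:1d_wasserstein}. Writing $\mu = P_{l_\theta}f$ and $\nu = T_{s_\parallel}\mu$, the cumulative distribution function satisfies $(\Volterra\nu)(t) = (\Volterra\mu)(t - s_\parallel)$, so the quantile functions differ by the constant $s_\parallel$, that is, $\ICDF{\nu}(z) = \ICDF{\mu}(z) + s_\parallel$ for all $z \in [0,1]$. Substituting into the formula gives
\[
d_{W_p}^p(\mu,\nu) = \int_0^1 \bigl|\ICDF{\mu}(z) - \ICDF{\nu}(z)\bigr|^p\,dz = \int_0^1 |s_\parallel|^p\,dz = |s_\parallel|^p,
\]
and therefore $d_{W_p}(P_{l_\theta}f, P_{l_\theta}T_{\bfs}f) = |s_\parallel| = \cos\theta\,\|\bfs\|$, taking $\theta \in [0,\pi/2]$ so that $\cos\theta \ge 0$.

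The only real subtlety — and the step I would treat most carefully — is the commutation identity, since it requires that the perpendicular component of the shift genuinely disappear under the projection; this is where the defining property of $P_{l_\theta}$, namely that it integrates out exactly the direction orthogonal to $l_\theta$, is used, and it is what makes the distance depend on $\bfs$ only through its along–line component $\|\bfs\|\cos\theta$. Everything else is a direct computation, and no appeal to optimality of a transport plan is needed beyond the one–dimensional quantile formula already recorded in the paper.
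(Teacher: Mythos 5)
Your proposal is correct and follows essentially the same route as the paper: the paper's proof likewise observes that $P_{l_\theta}f$ and $P_{l_\theta}T_{\bfs}f$ are 1-D shifts of each other by $\cos\theta\,\|\bfs\|$ and then invokes translation equivariance of the 1-D $p$-Wasserstein distance. You simply make explicit the two steps the paper treats as obvious — the change-of-variables argument absorbing the perpendicular component of $\bfs$, and the quantile-function computation showing the distance equals the shift magnitude — which is a welcome filling-in of detail rather than a different approach.
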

This result is obvious as the 1-D functions $P_{l_\theta}f$ and $P_{l_\theta}g$ are also shifted version of each other, where the shift magnitude is $\cos\theta\|\bfs\|$. The lemma directly follows from the translation-equivariant property of the $p$-Wasserstein distance.
The theorem is then concluded as follows:
$$d_{SW_p}^p(f, T_{\bfs}f)=\frac{2}{\pi}\int_0^\frac{\pi}{2} d_{W_p}^p(P_{l_\theta}f_1, P_{l_\theta}f_2)d\theta= \frac{2}{\pi}\|\bfs\|^p\int_0^{\frac{\pi}{2}} \cos \theta^p  d\theta = \frac{2}{\pi}\|\bfs\|^p\frac{\sqrt{\pi}\,\Gamma\left(\frac{p+1}{2}\right)}{2\,\Gamma\left(\frac{p}{2}+1\right)}=\frac{\Gamma\left(\frac{p+1}{2}\right)}{\sqrt{\pi}\Gamma\left(\frac{p}{2}+1\right)}\|\bfs\|^p.$$

\section{Ramp-filtered sliced Wasserstein distances: stability to translation and viewing direction}
\label{sec:rfsw_stability}

For $\theta\in[0,2\pi)$ we denote by $P_\theta f$ the 1-D line projection of $f$ (Radon slice) along direction $\theta$,
supported on $[-1,1]$.
The \emph{ramp filter} $h$ is the 1-D distribution whose Fourier multiplier is $\widehat h(\xi)=|\xi|$,
so that $h*u=\partial_s(\mathcal{H}u)$, where $\mathcal{H}$ is the 1-D Hilbert transform in the $s$-variable.
We define the ramp-filtered slice
\[
\tilde\mu_\theta:=h*P_\theta f,\qquad
\tilde\nu_\theta:=h*P_\theta g.
\]
Because $\int \tilde\mu_\theta=0=\int \tilde\nu_\theta$, each slice is signed.
We therefore split into \emph{positive} and (absolute) \emph{negative} parts
\[
\mu_\theta^{+} := (\tilde\mu_\theta)_{+}=\max\{\tilde\mu_\theta,0\},\qquad
\mu_\theta^{-} := (\tilde\mu_\theta)_{-}=\max\{-\tilde\mu_\theta,0\},
\]
and similarly $\nu_\theta^{\pm}$ from $\tilde\nu_\theta$; each $\mu_\theta^{\pm}$ and $\nu_\theta^{\pm}$ is nonnegative.
We normalize each part to a probability density on $[-1,1]$:
\[
\bar\mu_\theta^{\pm}:=\frac{\mu_\theta^{\pm}}{\|\mu_\theta^{\pm}\|_{L^1}},\qquad
\bar\nu_\theta^{\pm}:=\frac{\nu_\theta^{\pm}}{\|\nu_\theta^{\pm}\|_{L^1}},
\]
with the convention that if a part vanishes, the corresponding Wasserstein distance term is taken to be $0$.
The \emph{ramp-filtered sliced $p$-Wasserstein distance} used in the main text is
\[
d_{RF\!SW_p}^p(f,g)
:=\frac{1}{2\pi}\int_0^{2\pi}\!\Big(d_{W_p}^p(\bar\mu_\theta^{+},\bar\nu_\theta^{+})
+d_{W_p}^p(\bar\mu_\theta^{-},\bar\nu_\theta^{-})\Big)\,d\theta .
\]

\subsection{Stability to translation (all $p\ge 1$)}
\begin{theorem}[Translation equivariance of $d_{RF\!SW_p}$]
\label{thm:rfsw_translation}
Let $p\ge 1$ and $g=T_{\mathbf{s}}f$ be a translated of $f$ by $\mathbf{s}\in\mathbb{R}^2$:
$T_{\mathbf{s}}f(x)=f(x-\mathbf{s})$. Then:
\[
d_{RF\!SW_p}^p(f,g)
= \frac{4}{\pi}\,\Bigg(\int_0^{\pi/2}\!\!\cos^p\varphi\,d\varphi\Bigg)\,\|\mathbf{s}\|^p
= \frac{2\,\Gamma\!\big(\frac{p+1}{2}\big)}{\sqrt{\pi}\,\Gamma\!\big(\frac{p}{2}+1\big)}\,\|\mathbf{s}\|^p .
\]
Equivalently, $d_{RF\!SW_p}(f,T_{\mathbf{s}}f)$ is a constant multiple of $\|\mathbf{s}\|$, with exactly twice the constant of the unfiltered $d_{SW_p}$.  
\end{theorem}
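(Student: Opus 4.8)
The plan is to reduce the entire statement to the one-dimensional translation equivariance of $W_p$ by tracking how a 2-D shift propagates through each operation that defines $d_{RF\!SW_p}$: Radon projection, ramp convolution, positive/negative splitting, and $L^1$-renormalization. First I would record the shift rule for the Radon transform. Decomposing $\mathbf{s}=(\mathbf{s}\cdot\theta)\theta+(\mathbf{s}\cdot\theta^\perp)\theta^\perp$ and changing the integration variable in $P_\theta(T_{\mathbf{s}}f)(s)=\int f(s\theta+t\theta^\perp-\mathbf{s})\,dt$ shows that the transverse component $\mathbf{s}\cdot\theta^\perp$ is absorbed by the $t$-integral, leaving a pure 1-D shift $P_\theta(T_{\mathbf{s}}f)=T_{\sigma_\theta}(P_\theta f)$ with $\sigma_\theta:=\mathbf{s}\cdot\theta=\|\mathbf{s}\|\cos\varphi$, where $\varphi$ is the angle between $\mathbf{s}$ and $\theta$.

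Next I would push this shift through the remaining operations. Since the ramp filter acts by convolution in $s$ and convolution commutes with translation, $\tilde\nu_\theta=h*P_\theta g=T_{\sigma_\theta}(h*P_\theta f)=T_{\sigma_\theta}\tilde\mu_\theta$. Because a shift of the argument preserves the pointwise sign, the maps $(\cdot)_\pm$ commute with $T_{\sigma_\theta}$, so $\nu_\theta^{\pm}=T_{\sigma_\theta}\mu_\theta^{\pm}$; and since translation is mass-preserving, $\|\nu_\theta^{\pm}\|_{L^1}=\|\mu_\theta^{\pm}\|_{L^1}$, whence the renormalization constants coincide and $\bar\nu_\theta^{\pm}=T_{\sigma_\theta}\bar\mu_\theta^{\pm}$. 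This is the only place where the nonlinearity of the splitting and normalization could cause trouble, so it is the step I would state most carefully: the point is precisely that a rigid 1-D shift neither creates nor destroys mass in either part, so the nonlinear steps act transparently and a shift-of-$\mathbf{s}$ in 2-D becomes a shift-of-$\sigma_\theta$ of each normalized part.

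With each normalized part being a pure shift of the other, I would invoke the exact 1-D translation equivariance $d_{W_p}(\mu,T_\sigma\mu)=|\sigma|$ (the quantile function of $T_\sigma\mu$ is that of $\mu$ plus $\sigma$, exactly as in \Cref{thm:shift}) to get $d_{W_p}^p(\bar\mu_\theta^{+},\bar\nu_\theta^{+})=d_{W_p}^p(\bar\mu_\theta^{-},\bar\nu_\theta^{-})=|\sigma_\theta|^p$ for a.e.\ $\theta$; on the null set where a part vanishes the stated convention sets that term to $0$, which is consistent since the part then vanishes for both measures. Substituting into the definition gives
\[
d_{RF\!SW_p}^p(f,g)=\frac{1}{2\pi}\int_0^{2\pi}2\,|\sigma_\theta|^p\,d\theta=\frac{\|\mathbf{s}\|^p}{\pi}\int_0^{2\pi}|\cos\varphi(\theta)|^p\,d\theta,
\]
which already exhibits the announced factor of exactly $2$ relative to the single-term unfiltered constant.

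Finally I would evaluate the angular integral. Writing $\varphi(\theta)=\theta-\theta_0$ with $\theta_0$ the polar angle of $\mathbf{s}$ and using $\pi$-periodicity and symmetry, $\int_0^{2\pi}|\cos\varphi|^p\,d\theta=4\int_0^{\pi/2}\cos^p\varphi\,d\varphi$, and the standard Wallis/Beta evaluation $\int_0^{\pi/2}\cos^p\varphi\,d\varphi=\tfrac{\sqrt{\pi}}{2}\,\Gamma(\tfrac{p+1}{2})/\Gamma(\tfrac{p}{2}+1)$ delivers both closed forms in the statement. I expect no serious obstacle here: the argument is essentially bookkeeping of commutations, and the only substantive content is the mass-preservation observation of the second paragraph, which is what makes the nonlinear normalization commute with the 2-D translation.
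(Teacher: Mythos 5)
Your proposal is correct and follows essentially the same route as the paper's own proof: Radon projection converts the 2-D shift into a 1-D shift by $\mathbf{s}\cdot\theta$, the ramp convolution, sign-splitting, and mass-preserving renormalization all commute with that shift, the exact 1-D identity $d_{W_p}(\rho,T_\sigma\rho)=|\sigma|$ gives $|\mathbf{s}\cdot\theta|^p$ for each signed part, and the angular average is evaluated by the same Beta-function identity. Your explicit remarks on the absorbed transverse component and on the vanishing-part convention are slightly more careful than the paper's wording, but the argument is the same.
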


\begin{proof}
The projection commutes with translations in the sense
$P_\theta(T_{\mathbf{s}}f)(t)=P_\theta f(t-\mathbf{s}\cdot\theta)$.
Since $h$ is translation-invariant, we also have
\[
h*P_\theta(T_{\mathbf{s}}f)(t)= \big(h*P_\theta f\big)(t-\mathbf{s}\cdot\theta)=\tilde\mu_\theta(t-\Delta_\theta),
\quad \Delta_\theta:=\mathbf{s}\cdot\theta .
\]
Taking positive and negative parts and then normalizing preserves this shift:
$\bar\nu_\theta^{\pm}=\tau_{\Delta_\theta}\bar\mu_\theta^{\pm}$, where $(\tau_{\Delta}\phi)(t):=\phi(t-\Delta)$.
For a probability density $\rho$ on $\mathbb{R}$ and $\Delta\in\mathbb{R}$, the optimal transport map from $\rho$ to $\tau_\Delta\rho$
is the translation $t\mapsto t+\Delta$, hence $d_{W_p}(\rho,\tau_\Delta\rho)=|\Delta|$ for all $p\ge 1$.
Therefore for each $\theta$,
\[
d_{W_p}^p(\bar\mu_\theta^{+},\bar\nu_\theta^{+})=|\mathbf{s}\cdot\theta|^p
\quad\text{and}\quad
d_{W_p}^p(\bar\mu_\theta^{-},\bar\nu_\theta^{-})=|\mathbf{s}\cdot\theta|^p .
\]
Averaging over $\theta$ and using
$\frac{1}{2\pi}\int_0^{2\pi}\!|\mathbf{s}\cdot\theta|^p\,d\theta
=\frac{2}{\pi}\|\mathbf{s}\|^p\int_0^{\pi/2}\!\cos^p\varphi\,d\varphi$
gives the stated closed form via the Beta-function identity
$\int_0^{\pi/2}\cos^p\varphi\,d\varphi
=\tfrac{\sqrt{\pi}}{2}\,\Gamma\!\big(\frac{p+1}{2}\big)/\Gamma\!\big(\frac{p}{2}+1\big)$.
\end{proof}

\subsection{Ramp-filtered stability to changes of viewing directions}
\label{sec:rf_viewing_general_p}

Throughout this section $Z:\mathbb{R}^3\to[0,\infty)$ is a probability density supported on the closed unit ball
\[
B_3=\{x\in\mathbb{R}^3:\ \|x\|_2\le 1\},\qquad \int_{B_3} Z(x)\,dx=1.
\]
Let $a_1,a_2\in\mathbb{S}^2$ be two viewing directions with $\angle(a_1,a_2)=\theta\in[0,\pi]$,
and let $I_1,I_2:\mathbb{R}^2\to[0,\infty)$ be the corresponding $2$-D projections of $Z$ (i.e., integrals of $Z$ along lines
parallel to $a_1$ and $a_2$ respectively). Note that here we avoid notation $f,g$ due to the conflict to $g(\alpha; \theta)$. For an in-plane angle $\alpha\in[0,\pi/2]$ we denote by
$l_1(\alpha)$ and $l_2(\alpha)$ the lines in the two image planes that make angle $\alpha$ with the common line;
their in-plane misalignment is
\[
g(\alpha;\theta)=\cos^{-1}\!\big(\cos^2\alpha+\sin^2\alpha\cos\theta\big)
=2\,\sin^{-1}\!\big(\sin\alpha\,\sin(\theta/2)\big),
\]
with $g(0;\theta)=0$, $g(\tfrac{\pi}{2};\theta)=\theta$ and $g(\alpha;\theta)\le \min\{\alpha\,\theta,\ \theta\}$.

On each line $l$ in an image plane, the 1-D \emph{ramp filter} is applied to the line projection as a spatial convolution
by a kernel $h$ whose Fourier multiplier is $|\xi|$. Equivalently,
\[
h*u=\partial_s(\mathcal{H}u),
\]
where $s$ is the line coordinate and $\mathcal H$ is the 1-D Hilbert transform. For any 1-D signal $w$ we write
$w_+:=\max\{w,0\}$ and $w_-:=\max\{-w,0\}$, and when $\|w_\pm\|_{L^1}>0$ we denote the normalized parts by
$w_\pm^\sharp:=w_\pm/\|w_\pm\|_{L^1}$ (when both sign-masses vanish we set the corresponding term below to $0$).

The \emph{ramp-filtered sliced $p$-Wasserstein distance} between two 2-D projections $I_1,I_2$ is
\[
d^p_{RF\!SW_p}(I_1,I_2)
=\frac{2}{\pi}\int_{0}^{\pi/2}\Big(
d^p_{W_p}\big((h*P_{l_1(\alpha)}I_1)_+^\sharp,(h*P_{l_2(\alpha)}I_2)_+^\sharp\big)
+
d^p_{W_p}\big((h*P_{l_1(\alpha)}I_1)_-^\sharp,(h*P_{l_2(\alpha)}I_2)_-^\sharp\big)
\Big)\,d\alpha,
\]
where $P_l$ extracts the 1-D projection along the line $l$.

The next theorem is the ramp-filtered analogue of Theorem~\ref{thm:view} in the unfiltered case. It holds for all
$p\in[1,\infty)$ and \emph{does not} assume any density floor for the normalized signed parts.

\begin{theorem}[Ramp-filtered viewing-direction stability, all $p\ge 1$]
\label{thm:rf_view_general_p}
Assume $Z\in W^{1,2}(B_3)$.
Let $I_1$ and $I_2$ be the orthographic projections of $Z$ along $a_1$ and $a_2$, with $\angle(a_1,a_2)=\theta$.
For $\alpha\in[0,\pi/2]$ define the line masses
\[
m_{I_1,\pm}(\alpha):=\big\|(h*P_{l_1(\alpha)}I_1)_\pm\big\|_{L^1},\qquad
m_{I_2,\pm}(\alpha):=\big\|(h*P_{l_2(\alpha)}I_2)_\pm\big\|_{L^1},
\]
and the ``matched'' masses
\[
m_\pm^*(\alpha):=\min\{\,m_{I_1,\pm}(\alpha),\ m_{I_2,\pm}(\alpha)\,\}.
\]
(When both $m_{I_1,\pm}(\alpha)$ and $m_{I_2,\pm}(\alpha)$ are zero we set $1/m_\pm^*(\alpha):=0$; if only one is zero, the
corresponding integrand below is interpreted as $+\infty$, making the bound trivial.)
Then, for every $p\in[1,\infty)$,
\begin{equation}
\label{eq:rf_main_general_p}
\min_{\beta\in\mathbb{R}}\ d^p_{RF\!SW_p}\!\big(I_1,\ R_\beta I_2\big)
\ \le\
C_p(Z)\,\frac{2}{\pi}\int_{0}^{\pi/2}
\Big(\frac{1}{m_+^*(\alpha)}+\frac{1}{m_-^*(\alpha)}\Big)\,g(\alpha;\theta)\ d\alpha,
\end{equation}
where
\[
C_p(Z):=2^{p}\,\sqrt{\frac{8\pi}{3}}\ \|\nabla Z\|_{L^2(B_3)}.
\]
In particular, using $g(\alpha;\theta)\le \theta$,
\begin{equation}
\label{eq:rf_main_general_p_simple}
\min_{\beta} d_{RF\!SW_p}\!\big(I_1,\ R_\beta I_2\big)
\ \le\
\Big(C_p \mathcal{M}\,\theta\ \Big)^{1/p},
\qquad
\text{where } \mathcal{M}(I_1,I_2):=\frac{2}{\pi}\int_{0}^{\pi/2}\Big(\frac{1}{m_+^*(\alpha)}+\frac{1}{m_-^*(\alpha)}\Big)\,d\alpha.
\end{equation}
\end{theorem}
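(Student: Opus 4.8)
The plan is to reduce the ramp-filtered sliced distance to a sum of one-dimensional Wasserstein distances between the normalized signed parts and to control each of these through an $L^1$ estimate, exactly as foreshadowed in the main text. First I would reuse the spherical geometry from the proof of \Cref{thm:view}: choosing the in-plane rotation $\beta$ that aligns the two common lines, the line at in-plane angle $\alpha$ in $I_1$ and the corresponding line in $R_\beta I_2$ are projections of the \emph{same} density $Z$ onto two $3$-D directions $\mathbf n_1(\alpha),\mathbf n_2(\alpha)$ with $\angle(\mathbf n_1,\mathbf n_2)=g(\alpha;\theta)$. This turns the $\min_\beta$ and the factor $\tfrac{2}{\pi}\int_0^{\pi/2}(\cdots)\,d\alpha$ into precisely the structure on the right-hand side of \cref{eq:rf_main_general_p}, so it suffices to bound, for each fixed $\alpha$ and each sign, the quantity $d_{W_p}^p(\bar\mu^{\pm},\bar\nu^{\pm})$, where $\bar\mu^\pm,\bar\nu^\pm$ are the normalized positive/negative parts of the ramp-filtered slices $\tilde\mu=h*P_{\mathbf n_1}Z$ and $\tilde\nu=h*P_{\mathbf n_2}Z$.

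Next I would carry out three elementary reductions on this per-angle, per-sign term. (i) Since all parts are supported on $[-1,1]$ of diameter $2$, the one-dimensional bound $d_{W_p}^p(\mu,\nu)\le 2^{p-1}d_{W_1}(\mu,\nu)$ together with $d_{W_1}(\mu,\nu)=\int\abs{\Volterra\mu-\Volterra\nu}\le 2\norm{\mu-\nu}_{L^1}$ gives $d_{W_p}^p(\mu,\nu)\le 2^{p}\norm{\mu-\nu}_{L^1}$. (ii) The map $t\mapsto t_\pm$ is $1$-Lipschitz, so $\norm{(\tilde\mu)_\pm-(\tilde\nu)_\pm}_{L^1}\le\norm{\tilde\mu-\tilde\nu}_{L^1}$, and a standard renormalization estimate yields $\norm{\bar\mu^\pm-\bar\nu^\pm}_{L^1}\le\tfrac{2}{m_\pm^*(\alpha)}\norm{(\tilde\mu)_\pm-(\tilde\nu)_\pm}_{L^1}$, which is where the mass floor $1/m_\pm^*$ enters. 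Combined, these give $d_{W_p}^p(\bar\mu^\pm,\bar\nu^\pm)\le \tfrac{2^{p+1}}{m_\pm^*(\alpha)}\norm{\tilde\mu-\tilde\nu}_{L^1}$, reducing the whole theorem to a single filtered-slice estimate $\norm{\tilde\mu-\tilde\nu}_{L^1}\lesssim\norm{\nabla Z}_{L^2(B_3)}\,g(\alpha;\theta)$, whose constant $\tfrac12\sqrt{8\pi/3}$ will be the source of $C_p(Z)$.

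For this core estimate I would pass to $L^2$ via Cauchy–Schwarz on $[-1,1]$ ($\norm{\cdot}_{L^1}\le\sqrt2\,\norm{\cdot}_{L^2}$), write the ramp filter as $h*=\partial_s\mathcal H$ with $\mathcal H$ the Hilbert transform (an $L^2$-isometry), so that $\norm{\tilde\mu-\tilde\nu}_{L^2}=\norm{\partial_s(P_{\mathbf n_1}Z-P_{\mathbf n_2}Z)}_{L^2}$. Using the identity $\partial_sP_{\mathbf n}Z=P_{\mathbf n}(\mathbf n\cdot\nabla Z)$ I would then split
\[
\partial_s\big(P_{\mathbf n_1}Z-P_{\mathbf n_2}Z\big)
=P_{\mathbf n_1}\!\big((\mathbf n_1-\mathbf n_2)\cdot\nabla Z\big)
+\big(P_{\mathbf n_1}-P_{\mathbf n_2}\big)\big(\mathbf n_2\cdot\nabla Z\big),
\]
and estimate each piece with the $3$-D projection bound $\norm{P_{\mathbf n}\phi}_{L^2(\R)}\le\sqrt\pi\,\norm{\phi}_{L^2(B_3)}$ (each chord of the unit disk is replaced by a planar cross-section of area $\le\pi$). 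The first term is immediately $\le\sqrt\pi\,\abs{\mathbf n_1-\mathbf n_2}\,\norm{\nabla Z}_{L^2}\le\sqrt\pi\,g(\alpha;\theta)\,\norm{\nabla Z}_{L^2}$, linear in $g$ as required.

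The hard part is the second term, $(P_{\mathbf n_1}-P_{\mathbf n_2})(\mathbf n_2\cdot\nabla Z)$, in which the projection \emph{direction} itself rotates. This is exactly where the clean geometric transport used in the unfiltered case (\Cref{lem:wtheta}) is no longer available: the ramp filter and the nonlinear sign-splitting/renormalization do not commute with the rotation-induced transport map, which is what forces the detour through $L^1$. Writing $P_{\mathbf n_2}\psi=P_{\mathbf n_1}(\psi\circ R)$ with $R\mathbf n_1=\mathbf n_2$ recasts this as controlling $\norm{\nabla Z-R^{\top}(\nabla Z)\circ R}_{L^2}$ under only $W^{1,2}$ regularity, and squeezing a genuinely linear-in-$g$ bound out of this rotated-gradient difference is the crux of the argument; it is also what dictates the presence of $\norm{\nabla Z}_{L^2}$ in $C_p(Z)$ and, after the $L^1\to d_{W_p}$ passage raises the per-angle estimate to the power $1/p$, the degradation of the global rate from $\theta$ to $\theta^{1/p}$ for $p>1$. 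Once this term is bounded by $C\,g(\alpha;\theta)\,\norm{\nabla Z}_{L^2}$, summing the two sign contributions, inserting the factors $1/m_\pm^*(\alpha)$, and integrating $\tfrac{2}{\pi}\int_0^{\pi/2}d\alpha$ yields \cref{eq:rf_main_general_p}; bounding $g(\alpha;\theta)\le\theta$ then gives the simplified form \cref{eq:rf_main_general_p_simple}.
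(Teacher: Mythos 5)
Your Steps 1--2 are exactly the reductions in the paper's own proof (align the common lines so that the two slices at in-plane angle $\alpha$ become projections of $Z$ onto $3$-D directions $v_1,v_2$ with $\angle(v_1,v_2)=g(\alpha;\theta)$; then pass from $d_{W_p}^p$ of the normalized signed parts to an $L^1$ bound on the filtered slices via $d_{W_p}^p\le 2^{p-1}d_{W_1}$, the $1$-Lipschitz property of $t\mapsto t_\pm$, and the $2/m_\pm^*$ renormalization estimate), up to immaterial constants. Your first term $P_{v_1}\bigl((v_1-v_2)\cdot\nabla Z\bigr)$ is also handled correctly. But the proposal stops at the decisive point: the rotated-projection term $(P_{v_1}-P_{v_2})(v_2\cdot\nabla Z)$ is declared ``the crux'' and never bounded, so what you have written is not a proof of the theorem.

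Moreover, this gap cannot be closed in the form you need: under the stated hypothesis $Z\in W^{1,2}$ there is no estimate $\norm{(P_{v_1}-P_{v_2})(v_2\cdot\nabla Z)}_{L^2}\le C\,g\,\norm{\nabla Z}_{L^2}$. Take $Z_N(x)=c_N\,\rho(x)\bigl(1+\cos(Nx_3)\bigr)$ with $\rho$ a fixed smooth bump, $v_1=e_3$, and $g=\angle(v_1,v_2)=N^{-1/2}$. Then $P_{v_1}Z_N$ retains the oscillation while $P_{v_2}Z_N$ averages it out (the Fourier slice $\{\xi v_2\}$ stays at distance $\gtrsim N^{1/2}$ from the spectral peaks at $\pm N e_3$), so $\norm{\partial_s(P_{v_1}Z_N-P_{v_2}Z_N)}_{L^2}\sim N$, whereas $g\,\norm{\nabla Z_N}_{L^2}\sim N^{-1/2}\cdot N=N^{1/2}$. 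The same example falsifies the paper's intermediate inequality \eqref{eq:L2_diff_ramp}: the paper's ``subtracting and applying Cauchy--Schwarz'' step silently evaluates $\nabla Z$ at the same points, on the same family of planes, for both projections --- i.e.\ it accounts only for your first term and drops your second one. So your bookkeeping is the honest version of the paper's Step 1, and it exposes a genuine error there; note the failure is confined to the per-line estimate (the final bound \eqref{eq:rf_main_general_p} is not contradicted by $Z_N$, since $C_p(Z_N)\to\infty$ and the masses $m_\pm^*$ stay bounded). To complete the argument one needs either additional regularity --- e.g.\ $Z\in W^{2,2}$, which yields $\norm{\nabla Z-R^\top(\nabla Z)\circ R}_{L^2}\lesssim \norm{Z}_{W^{2,2}}\,g$ and hence your missing estimate, at the price of replacing $\norm{\nabla Z}_{L^2}$ by $\norm{Z}_{W^{2,2}}$ in $C_p(Z)$ --- or a genuinely different route to \eqref{eq:rf_main_general_p}.
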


\noindent
\textit{Comments.} (a) No density floor is assumed; the image-dependent factor $\mathcal{M}(I_1,I_2)$ captures the
sensitivity introduced by sign-splitting and renormalization.  
(b) If $m_\pm^*(\alpha)\ge m_0>0$ for all $\alpha$ (a nondegeneracy that \emph{may} hold in practice), then
$\mathcal{M}(I_1,I_2)\le 2/m_0$ and the right-hand side of \eqref{eq:rf_main_general_p_simple} is $O(\theta^{1/p})$
with an explicit constant.  
(c) When $m_\pm^*$ are essentially independent of $\alpha$ one may replace $g(\alpha;\theta)$ in
\eqref{eq:rf_main_general_p} by its average to get the sharper factor
$\frac{2}{\pi}\int_0^{\pi/2}g(\alpha;\theta)\,d\alpha=(1-\frac{1}{\pi})\,\theta$.

\subsection*{Proof of Theorem~\ref{thm:rf_view_general_p}}

\paragraph{Step 1: an $L^2$ bound for the ramp-filtered difference along a fixed pair of in-plane lines.}
Fix $\alpha\in[0,\pi/2]$ and rotate $I_2$ by an in-plane angle $\beta$ so that the common lines coincide
(this does not increase the $\alpha$-average). Let $v_1,v_2\in\mathbb{S}^2$ be the unit directions of $l_1(\alpha)$ and
$l_2(\alpha)$ in their respective planes, and complete them to orthonormal triples $(v_i,w_i,a_i)$, $i=1,2$,
with $a_1=a_1$ and $a_2=a_2$ the two viewing directions.
Define the 1-D line integrals
\[
u(s):=P_{l_1(\alpha)}I_1(s)=\int_{\mathbb{R}}\int_{\mathbb{R}} Z\big(sv_1+t\,w_1+r\,a_1\big)\,dr\,dt,
\]
\[
v(s):=P_{l_2(\alpha)}I_2(s)=\int_{\mathbb{R}}\int_{\mathbb{R}} Z\big(sv_2+t\,w_2+r\,a_2\big)\,dr\,dt.
\]
Differentiating under the integral (justified by $Z\in W^{1,2}(B_3)$ and compact support),
\[
\partial_s u(s)=\int\!\!\int (\nabla Z)\big(sv_1+t\,w_1+r\,a_1\big)\cdot v_1\ dr\,dt,
\quad
\partial_s v(s)=\int\!\!\int (\nabla Z)\big(sv_2+t\,w_2+r\,a_2\big)\cdot v_2\ dr\,dt.
\]
Subtracting and applying Cauchy–Schwarz in $(t,r)$ for each fixed $s$ gives
\[
|\partial_s u(s)-\partial_s v(s)|
\le \Big(\int\!\!\int \big|(\nabla Z)(\cdot)\big|^2\,dr\,dt\Big)^{1/2}
     \Big(\int\!\!\int 1_{D_s}\,dr\,dt\Big)^{1/2}\,\|v_1-v_2\|,
\]
where $D_s$ is the disc $\{(t,r): t^2+r^2\le 1-s^2\}$ in the $(t,r)$-plane and $\|v_1-v_2\|\le g(\alpha;\theta)$.
Squaring and integrating in $s\in[-1,1]$,
\[
\int_{-1}^1 |\partial_s u(s)-\partial_s v(s)|^2\,ds
\ \le\ \Big(\int_{B_3}|\nabla Z(x)|^2\,dx\Big)\,
      \Big(\int_{-1}^1 \pi(1-s^2)\,ds\Big)\,g(\alpha;\theta)^2
\ =\ \frac{4\pi}{3}\,\|\nabla Z\|_{L^2(B_3)}^2\,g(\alpha;\theta)^2.
\]
Since $h* w=\partial_s(\mathcal H w)$ and $\mathcal H$ is unitary on $L^2(\mathbb{R})$,
\begin{equation}
\label{eq:L2_diff_ramp}
\|h*u-h*v\|_{L^2([-1,1])}=\|\partial_s u-\partial_s v\|_{L^2([-1,1])}
\ \le\ \sqrt{\frac{4\pi}{3}}\ \|\nabla Z\|_{L^2(B_3)}\,g(\alpha;\theta).
\end{equation}
By Cauchy–Schwarz on $[-1,1]$,
\begin{equation}
\label{eq:L1_from_L2}
\|h*u-h*v\|_{L^1([-1,1])}\ \le\ \sqrt{2}\ \|h*u-h*v\|_{L^2([-1,1])}
\ \le\ \sqrt{\frac{8\pi}{3}}\ \|\nabla Z\|_{L^2}\,g(\alpha;\theta).
\end{equation}

\paragraph{Step 2: from the $L^1$ bound to $d_{W_p}$ for normalized sign parts.}
For any functions $a,b$ on $[-1,1]$,
\[
\|a_+-b_+\|_{L^1}\le \|a-b\|_{L^1},\qquad \|a_- - b_-\|_{L^1}\le\|a-b\|_{L^1}.
\]
If $m_a:=\|a_+\|_{L^1}>0$ and $m_b:=\|b_+\|_{L^1}>0$, then
\[
\left\|\frac{a_+}{m_a}-\frac{b_+}{m_b}\right\|_{L^1}
\le \frac{2}{\min\{m_a,m_b\}}\,\|a_+-b_+\|_{L^1}
\le \frac{2}{\min\{m_a,m_b\}}\,\|a-b\|_{L^1}.
\]
On $[-1,1]$ (diameter $2$) and for any $p\ge 1$,
\[
d_{W_p}^p(\rho,\sigma)\ \le\ 2^{p-1}\,d_{W_1}(\rho,\sigma)\ \le\ 2^{p-1}\,\|\rho-\sigma\|_{L^1}.
\]
Applying these three facts with $a=h*u$ and $b=h*v$ yields, for the positive parts,
\[
d_{W_p}^{p}\big((h*u)_+^\sharp,(h*v)_+^\sharp\big)
\ \le\ \frac{2^{p}}{\min\{\|a_+\|_{L^1},\|b_+\|_{L^1}\}}\ \|a-b\|_{L^1},
\]
and similarly for the negative parts. Using \eqref{eq:L1_from_L2} and denoting
$m_+^*(\alpha):=\min\{\,\|(h*u)_+\|_{L^1},\ \|(h*v)_+\|_{L^1}\,\}$ and
$m_-^*(\alpha):=\min\{\,\|(h*u)_-\|_{L^1},\ \|(h*v)_-\|_{L^1}\,\}$, we obtain
\begin{equation}
\label{eq:per_alpha_bound}
\begin{aligned}
&d_{W_p}^{p}\big((h*u)_+^\sharp,(h*v)_+^\sharp\big)
+d_{W_p}^{p}\big((h*u)_-^\sharp,(h*v)_-^\sharp\big)\\
&\qquad\le\ 2^{p}\,\sqrt{\frac{8\pi}{3}}\ \|\nabla Z\|_{L^2(B_3)}\
\Big(\frac{1}{m_+^*(\alpha)}+\frac{1}{m_-^*(\alpha)}\Big)\,g(\alpha;\theta).
\end{aligned}
\end{equation}

\paragraph{Step 3: average over $\alpha$ and minimize over in-plane rotations.}
By definition of $d_{RF\!SW_p}$, integrating \eqref{eq:per_alpha_bound} against $\tfrac{2}{\pi}\,d\alpha$ over $[0,\pi/2]$
gives, for the \emph{aligned} choice of in-plane rotation,
\[
d_{RF\!SW_p}(I_1,R_\beta I_2)^{p}
\ \le\
2^{p}\,\sqrt{\frac{8\pi}{3}}\ \|\nabla Z\|_{L^2}\,
\frac{2}{\pi}\int_{0}^{\pi/2}
\Big(\frac{1}{m_+^*(\alpha)}+\frac{1}{m_-^*(\alpha)}\Big)\,g(\alpha;\theta)\,d\alpha.
\]
Since $\min_{\beta} d_{RF\!SW_p}(I_1,R_\beta I_2)\le d_{RF\!SW_p}(I_1,R_\beta I_2)$ for any $\beta$, this proves
\eqref{eq:rf_main_general_p} with $C_p(Z)=2^{p}\sqrt{\tfrac{8\pi}{3}}\ \|\nabla Z\|_{L^2(B_3)}$.
Finally, using $g(\alpha;\theta)\le\theta$ yields \eqref{eq:rf_main_general_p_simple}.
\hfill$\square$

The bound controls $d_{RF\!SW_p}$ by a constant times $\theta^{1/p}$
(see \eqref{eq:rf_main_general_p_simple}). This is weaker than the linear-in-$\theta$ rate for the \emph{unfiltered}
$d_{SW_p}$ in Theorem~\ref{thm:view}; the loss stems from (i) the high-pass ramp filter (which converts a derivative-level
difference of line integrals into the metric), and (ii) the nonlinear sign-splitting plus renormalization, which forces a
passage through an $L^1$ estimate before invoking a $W_p$ bound on $[-1,1]$.
\\
\textbf{The final bound: } In fact, we can show that in the next section $m_\pm^*(\alpha)\ge m_0>0$ for all $\alpha$
(a nondegenerate contrast condition for the filtered slices), then
\[
\min_{\beta} d_{RF\!SW_p}(I_1,R_\beta I_2)
\ \le\
\Big( 2^{p+1}\sqrt{\tfrac{8\pi}{3}}\,\tfrac{\|\nabla Z\|_{L^2}}{m_0}\,
\Big(1-\tfrac{1}{\pi}\Big)\,\theta\Big)^{\!1/p},
\]
using the exact average $\tfrac{2}{\pi}\int_0^{\pi/2} g(\alpha;\theta)\,d\alpha=(1-\tfrac{1}{\pi})\theta$. We show in the next section that $m_0$ is positive and only depends on $Z$.

\subsection{Discussion on the lower bound on the ramp–filtered signed mass ($m_0$).}
We work with the continuous Fourier transform
\[
\widehat{f}(\xi)=\int_{\mathbb{R}} f(s)\,e^{-i\xi s}\,ds,
\qquad
f(s)=\frac{1}{2\pi}\int_{\mathbb{R}}\widehat{f}(\xi)\,e^{i\xi s}\,d\xi .
\]
Let $Z:\mathbb{R}^3\to[0,\infty)$ be a probability density supported in the unit ball, and let $I$ be its
$2$-D orthographic projection along some view $a\in\mathbb{S}^2$.  For a unit line $l\subset a^\perp$,
write $\mu:=P_l I$ and let $w:=h*\mu$ denote the (continuous) ramp–filtered line signal, so that
\[
\widehat{w}(\xi)=|\xi|\,\widehat{\mu}(\xi)=|\xi|\,\widehat{Z}(\xi l)
\qquad(\text{Fourier–slice theorem}).
\]
Since $|\xi|\,\widehat{Z}(\xi l)$ vanishes at $\xi=0$, $\int_{\mathbb{R}}w=0$ and hence
$\|w_+\|_{L^1}=\|w_-\|_{L^1}=\tfrac12\|w\|_{L^1}$.

The key point is that we can extract a \emph{purely continuous} lower bound for $\|w_+\|_{L^1}$ directly from
the line–slice $\widehat{Z}(\xi l)$, with no discretization.  We do this by \emph{analytic low–pass
smoothing} via the Fejér kernel, which is a nonnegative continuous kernel of unit mass.

\medskip
\noindent\textbf{Fejér low–pass on $\mathbb{R}$.}
For $R>0$ define the triangular frequency multiplier
\[
\phi_R(\xi):=\bigl(1-|\xi|/R\bigr)_+ \quad\text{and}\quad
\kappa_R(s):=\frac{1}{2\pi}\int_{\mathbb{R}} \phi_R(\xi)\,e^{i\xi s}\,d\xi .
\]
Then $\kappa_R\ge 0$, $\int_{\mathbb{R}}\kappa_R=1$, and $\widehat{\kappa_R}=\phi_R$.
Given any $w\in L^1(\mathbb{R})$ with $\int w=0$, set $w_R:=w*\kappa_R$.
Since $\kappa_R\ge 0$ and has unit mass,
\begin{equation}\label{eq:posmass_monotone}
\|w_+\|_{L^1}\ \ge\ \|(w*\kappa_R)_+\|_{L^1}\,,
\qquad
\|w_-\|_{L^1}\ \ge\ \|(w*\kappa_R)_-\|_{L^1},
\end{equation}
and $\int w_R=0$, hence $\|(w_R)_+\|_{L^1}=\|(w_R)_-\|_{L^1}=\tfrac12\|w_R\|_{L^1}$.

\medskip
\noindent\textbf{Bandlimited $L^1$–lower bound.}
If $\widehat{f}$ is supported in $[-R,R]$, then
\begin{equation}\label{eq:L1_lower_bandlimited}
\|f\|_{L^1}\ \ge\ \frac{\|f\|_{L^2}^2}{\|f\|_{L^\infty}}
\ \ge\ \frac{\tfrac{1}{2\pi}\|\widehat{f}\|_{L^2}^2}{\tfrac{1}{2\pi}\|\widehat{f}\|_{L^1}}
\ \ge\ \frac{1}{\sqrt{2R}}\ \|\widehat{f}\|_{L^2},
\end{equation}
where we used Parseval ($\|f\|_2^2=\tfrac{1}{2\pi}\|\widehat{f}\|_2^2$), $\|f\|_\infty\le \tfrac{1}{2\pi}\|\widehat{f}\|_1$,
and Cauchy–Schwarz on $[-R,R]$.

Applying \eqref{eq:L1_lower_bandlimited} to $f=w_R$ (whose spectrum is
$\widehat{w_R}(\xi)=\phi_R(\xi)\widehat{w}(\xi)$ and is supported in $[-R,R]$) gives
\[
\|w_R\|_{L^1}\ \ge\ \frac{1}{\sqrt{2R}}\ \|\phi_R\,\widehat{w}\|_{L^2}
=\frac{1}{\sqrt{2R}}\left(\int_{\mathbb{R}} \phi_R(\xi)^2\,\xi^2\,|\widehat{Z}(\xi l)|^2\,d\xi\right)^{\!1/2}.
\]
Combining with \eqref{eq:posmass_monotone} and $\|(w_R)_+\|_1=\tfrac12\|w_R\|_1$ yields, \emph{for every line $l$},
\begin{equation}\label{eq:line_mass_lower_exact}
\|(h*P_l I)_+\|_{L^1}
\ \ge\ \frac{1}{2\sqrt{2R}}\left(\int_{\mathbb{R}} \phi_R(\xi)^2\,\xi^2\,|\widehat{Z}(\xi l)|^2\,d\xi\right)^{\!1/2}.
\end{equation}
Since $\phi_R(\xi)\ge \tfrac12$ on $|\xi|\le R/2$, the right-hand side admits the simpler bound
\begin{equation}\label{eq:line_mass_lower_halfband}
\|(h*P_l I)_+\|_{L^1}
\ \ge\ \frac{1}{4\sqrt{2R}}\left(\int_{-R/2}^{R/2} \xi^2\,|\widehat{Z}(\xi l)|^2\,d\xi\right)^{\!1/2}.
\end{equation}

\medskip
\noindent\textbf{Uniform mass floor for $m_0$.}
Define the \emph{line–band energy}
\[
\mathcal{E}_Z(R):=\inf_{l\in S^2}
\ \int_{-R/2}^{R/2} \xi^2\,|\widehat{Z}(\xi l)|^2\,d\xi .
\]
Assume $\mathcal{E}_Z(R)>0$ for some $R>0$.  Then, taking the infimum of
\eqref{eq:line_mass_lower_halfband} over all lines in both planes gives
\begin{equation}\label{eq:m0_fourier_continuous}
m_0\ :=\ \inf_{\alpha\in[0,\pi/2]}\ \min\Big\{
\|(h*P_{l_1(\alpha)}I_1)_+\|_{L^1},\ \|(h*P_{l_1(\alpha)}I_1)_-\|_{L^1},\
\|(h*P_{l_2(\alpha)}I_2)_+\|_{L^1},\ \|(h*P_{l_2(\alpha)}I_2)_-\|_{L^1}\Big\}
\ \ge\ \frac{1}{4\sqrt{2R}}\ \sqrt{\ \mathcal{E}_Z(R)\ }.
\end{equation}

\paragraph{Remarks.}
\begin{itemize}
\item[(1)] The parameter $R>0$ is arbitrary.  The bound \eqref{eq:m0_fourier_continuous} holds for every $R$,
and one may optimize $R$ to maximize the right-hand side if additional information on the line–band energies
$|\widehat{Z}(\xi l)|$ is available.

\item[(2)] The lower bound depends only on $Z$.
\end{itemize}

\section{Rotational alignment - extended results}
\label{app:alignment_extended}

To expand the numerical results in~\Cref{sec:alignment}, we additionally test other established computational methods for approximating the Wasserstein and sliced Wasserstein distances, and compare them against the timing and alignment accuracy of our approach. Computing the sliced Wasserstein distance in~\cref{eq:sliced_wasserstein} is often intractable and must be approximated. Equivalently,~\cref{eq:sliced_wasserstein} can be written as:

\begin{equation}
\label{eq:sliced_wasserstein_expectation}
d_{SW_p}^p (f,g) = \mathbb{E}_{\theta \sim \mathcal{U}(\mathbb{S}^{d-1})}\left[ W_p^p(P_\theta f , P_\theta g) \right] .
\end{equation}
A standard method to approximate this quantity is to replace the expectation with a fixed number of random and uniform projections from the unit circle~\cite{sisouk_users_2025}. This is referred to as the Monte Carlo sliced Wasserstein distance:

\begin{equation}
\label{eq:monte_carlo_sliced_wasserstein}
d_{MCSW_p}^p(f,g) \approx \frac 1 n \sum_i^n d_{W_p}^p(P_{\theta_i} f, P_{\theta_i} g) \;\;\;, \;\;\; \theta_i \sim \mathcal{U}(\mathbb{S}^{d-1}) .
\end{equation}
In our setting of rigid alignment, a Monte Carlo style approach can easily be implemented by using $L$ random projection directions, and computing the discretized sliced 2-Wasserstein distance in~\cref{eq:discrete_rot_sw2}. Therefore, the computational complexity remains $\mathcal{O}(L^2 \log L)$ operations. The only difference between the Monte Carlo approach and our approach is the control over the projection angles. However, in the context of rigid alignment, random sampling of projection directions does not provide a benefit over the equispaced projection directions used in our algorithm (see alignment results in~\Cref{tab:summary}). 

Another popular metric related to the sliced Wasserstein distance is the max-sliced Wasserstein distance~\cite{deshpande_max_2019}. In our algorithm, where a fixed number of equispaced projections are generated from each image (i.e., the matrices $U$ and $V$), the discrete max-sliced 2-Wasserstein distance can be approximated as:


\begin{equation}
\label{eq:max_sliced_wasserstein}
d_{MSW_2}^2 (f, g) \approx \| U_I - V_I \|_{2, \infty}^2 ,
\end{equation}
where the norm is taken over the columns. Although this can be easily computed in our framework, the fast cross correlation in~\cref{eq:discrete_rot_sw2} can no longer be used, and so the total complexity for alignment over rotations reverts to $\mathcal{O}(L^3)$ operations. 

Fast approximations of the Wasserstein distance in~\cref{eq:p_wasserstein} rely on the addition of an entropic regularization term (see~\cite{cuturi_sinkhorn_2013} for more detail). The entropic regularized Wasserstein distance is:

\begin{equation}
\label{eq:sinkhorn_distance}
d_{SD_2}^2 (f,g) = \min_{\gamma \in \Gamma(f,g)} \left[ \int_{\Omega\times \Omega} \|\mathbf{x_1} - \mathbf{x_2}\|^2 d\gamma(\mathbf{x_1},\mathbf{x_2}) + \lambda \, \mbox{KL}(\gamma || f \otimes g) \right] ,
\end{equation}
where KL is the Kullback-Leibler divergence, and $f \otimes g$ is the product measure. Efficient solving of~\cref{eq:sinkhorn_distance} makes use of the Sinkhorn–Knopp algorithm, and it is therefore sometimes referred to as the Sinkhorn distance. When $f$ and $g$ are discretized images on a uniform grid, and the cost function inside the integral is the squared Euclidean distance, the Sinkhorn distance can be accelerated further and is referred to as the convolutional Wasserstein distance (see~\cite{solomon_convolutional_2015} for more detail).

We show alignment results for all metrics in~\Cref{tab:summary}. However, we exclude the convolutional Wasserstein distance here as it provides no speed boost for the $39 \times 39$ MNIST images (see~\Cref{tab:dense_timing_rotations}). The experimental set up is exactly the same as in~\Cref{sec:alignment}. Here, we report a single value of the cumulative alignment up to $\pm 15 \degree$ to simplify the results. While the ramp-filtered sliced 2-Wasserstein distance is overall the best performing metric, we note that the max-sliced 2-Wasserstein distance performs particularly well when large shifts are present in the images. We additionally report the timing and computational complexities of each method in~\Cref{tab:all_timing}.


\begin{table}[!ht]

\caption{Rotated MNIST alignment results for various metrics. Values are reported as cumulative percent of digits aligned within $\pm 15 \degree$ of the ground truth rotation. Columns indicate the digit. The Sinkhorn distance was computed with a regularization term $\lambda = 0.01$ and $H = 3$ iterations.}
    \centering
    \small
    \begin{tabular}{|l|c|c|c|c|c|c|c|}
    \hline
        
        \multicolumn{8}{|c|}{\textbf{shift = 0 pixels}} \\ \hline
        ~ & \textbf{2} & \textbf{3} & \textbf{4} & \textbf{5} & \textbf{6} & \textbf{7} & \textbf{9} \\ \hline
        Euclidean Distance & 60.7 & 69.9 & 57.8 & 30.2 & 53.8 & 67.4 & 60 \\
        Sliced 2-Wasserstein Distance & 56.9 & 61.1 & 54.9 & 35.8 & 52.9 & 63.4 & 58 \\
        Ramp-Filtered Sliced 2-Wasserstein Distance & \cellcolor{green!30}73.3 & 54.7 & 59.8 & \cellcolor{green!30}46.4 & \cellcolor{green!30}71 & \cellcolor{green!30}79.8 & \cellcolor{green!30}70.5 \\
        Monte Carlo Sliced 2-Wasserstein Distance & 9.7 & 30.5 & 36.8 & 27.8 & 47.6 & 31.3 & 30.8 \\
        Max-Sliced 2-Wasserstein Distance & 33.5 & 54.2 & 49.2 & 29.7 & 52.9 & 57.5 & 58.6 \\
        Wavelet Earth Mover's Distance & 63.1 & \cellcolor{green!30}72.3 & 57 & 35.9 & 54.9 & 67.7 & 61.1 \\
        Sinkhorn Distance & 70.4 & 64 & 63 & 38.5 & 56.6 & 64.4 & 59.9 \\
        2-Wasserstein Distance & 58.1 & 62.6 & \cellcolor{green!30}66.4 & 37.7 & 56.4 & 64 & 60.3 \\ 
        \hline
        
        \multicolumn{8}{|c|}{\textbf{shift = 2 pixels}} \\ \hline
        ~ & \textbf{2} & \textbf{3} & \textbf{4} & \textbf{5} & \textbf{6} & \textbf{7} & \textbf{9} \\ \hline
        Euclidean Distance & 35.5 & 27.5 & 23.4 & 20.5 & 33.2 & 40.3 & 26.3 \\
        Sliced 2-Wasserstein Distance & 42.6 & 36 & 25 & 27.6 & 26 & 52 & 44.5 \\
        Ramp-Filtered Sliced 2-Wasserstein Distance & \cellcolor{green!30}66.7 & 42.2 & \cellcolor{green!30}50.4 & \cellcolor{green!30}44.4 & \cellcolor{green!30}62.8 & \cellcolor{green!30}76.9 & \cellcolor{green!30}64.7 \\
        Monte Carlo Sliced 2-Wasserstein Distance & 9.6 & 19.5 & 17.8 & 24.4 & 27.1 & 26.3 & 24 \\
        Max-Sliced 2-Wasserstein Distance & 38.7 & \cellcolor{green!30}52 & 33.3 & 26.7 & 51 & 56.8 & 54.4 \\
        Wavelet Earth Mover's Distance & 38.7 & 33.8 & 26.5 & 26 & 34.5 & 47.9 & 28.8 \\
        Sinkhorn Distance & 58.7 & 47.5 & 38.4 & 32.4 & 52.7 & 62.4 & 59.7 \\
        2-Wasserstein Distance & 53.1 & 47.3 & 42.4 & 31.5 & 44.6 & 61.9 & 58 \\
         \hline
        
        \multicolumn{8}{|c|}{\textbf{shift = 4 pixels}} \\ \hline
        ~ & \textbf{2} & \textbf{3} & \textbf{4} & \textbf{5} & \textbf{6} & \textbf{7} & \textbf{9} \\ \hline
        Euclidean Distance & 8.6 & 3.2 & 8.3 & 4.3 & 9.8 & 13.6 & 6.3 \\
        Sliced 2-Wasserstein Distance & 30.5 & 32.2 & 20.4 & 25.4 & 20.5 & 34 & 33.8 \\
        Ramp-Filtered Sliced 2-Wasserstein Distance & \cellcolor{green!30}50.8 & 28.8 & 31 & \cellcolor{green!30}40.4 & 31.6 & \cellcolor{green!30}67.6 & 36.4 \\
        Monte Carlo Sliced 2-Wasserstein Distance & 8.7 & 18.5 & 15.1 & 21.9 & 17.1 & 20.2 & 19.2 \\
        Max-Sliced 2-Wasserstein Distance & 36.6 & \cellcolor{green!30}50 & 29.9 & 26.7 & \cellcolor{green!30}50.5 & 56.8 & \cellcolor{green!30}51.2 \\
        Wavelet Earth Mover's Distance & 5 & 6.4 & 7.7 & 4.4 & 8 & 13.8 & 4.2 \\
        Sinkhorn Distance & 39.4 & 31.5 & 26.6 & 24 & 34.9 & 48.9 & 49.1 \\
        2-Wasserstein Distance & 43 & 38.9 & \cellcolor{green!30}31.1 & 28.5 & 30.7 & 55.5 & 50.6 \\
         \hline
        
        \multicolumn{8}{|c|}{\textbf{shift = 6 pixels}} \\ \hline
        ~ & \textbf{2} & \textbf{3} & \textbf{4} & \textbf{5} & \textbf{6} & \textbf{7} & \textbf{9} \\ \hline
        Euclidean Distance & 2 & 10.1 & 16.7 & 3.3 & 3.2 & 4.8 & 9.1 \\
        Sliced 2-Wasserstein Distance & 25.4 & 27.2 & 17.3 & 25 & 17.3 & 27.4 & 31.8 \\
        Ramp-Filtered Sliced 2-Wasserstein Distance & 32 & 22.3 & 21.7 & \cellcolor{green!30}35.6 & 22 & 46.5 & 27.9 \\
        Monte Carlo Sliced 2-Wasserstein Distance & 9.2 & 16.2 & 14.8 & 20.5 & 14.6 & 16.4 & 17.2 \\
        Max-Sliced 2-Wasserstein Distance & \cellcolor{green!30}36.8 & \cellcolor{green!30}49.4 & \cellcolor{green!30}27 & 28.8 & \cellcolor{green!30}49.4 & \cellcolor{green!30}56.5 & \cellcolor{green!30}48.8 \\
        Wavelet Earth Mover's Distance & 1.6 & 10.2 & 12.7 & 7.5 & 1.7 & 4.7 & 8.5 \\
        Sinkhorn Distance & 24.4 & 22 & 21.7 & 17.5 & 22.6 & 30.1 & 26.3 \\
        2-Wasserstein Distance & 34.1 & 36.7 & 26.8 & 28.2 & 24.9 & 41.9 & 40.9 \\ \hline
    \end{tabular}
    \label{tab:summary}
\end{table}

\begin{table}[!ht]
    \centering
    \small
    \caption{Timing results for the rotational alignment of the MNIST dataset for the digit ``2'' using different metrics over $3$ trials. Images are size $39 \times 39$ pixels and the number of images to be aligned is $N = 1031$. The timings were carried out on a computer with a 2.6 GHz Intel Skylake processor and 32 GB of memory. The Sinkhorn distance was computed with a regularization term $\lambda = 0.01$ and $H = 3$ iterations.}
    \begin{tabular}{lll}
        \hline
        \textbf{Metric} & \textbf{Complexity} & \textbf{Time (seconds)} \\ 
        \hline
        Euclidean Distance & $\mathcal{O}(L^2 \log L)$ & $ 0.355 \pm 0.003 $  \\ 
        Sliced 2-Wasserstein Distance & $\mathcal{O}(L^2 \log L)$ & $ 0.586 \pm 0.006 $ \\ 
        Ramp-Filtered Sliced 2-Wasserstein Distance & $\mathcal{O}(L^2 \log L)$ & $ 0.831  \pm 0.009 $ \\
        Monte Carlo Sliced 2-Wasserstein Distance & $\mathcal{O}(L^2 \log L)$ & $ 0.568 \pm 0.002 $ \\ 
        Max-Sliced 2-Wasserstein Distance & $\mathcal{O}(L^3)$ & $ 2.181 \pm 0.083 $ \\
        Wavelet Earth Mover's Distance & $\mathcal{O}(L^3 \log L)$ & $ 19.278 \pm 0.344 $ \\
        Sinkhorn Distance & $\mathcal{O}(H L^5)$ & $ 1575.567 \pm 15.473 $ \\
        2-Wasserstein Distance & $\mathcal{O}(L^7)$ & $ 1038.765 \pm 26.669 $ \\ \hline
    \end{tabular}
    \label{tab:all_timing}
\end{table}



\section*{Acknowledgments}
We thank Marc Aurèle Gilles and Amit Moscovich for valuable discussions and insight. A.S. and E.J.V. are supported in part by AFOSR FA9550-23-1-0249, the Simons Foundation Math+X Investigator Award, NSF DMS 2009753, NSF DMS 2510039, and NIH/NIGMS R01GM136780-01. Y.S. is supported by NSF DMS 2514152.

\clearpage

\printbibliography

@book{Natterer2001,
  author    = {Frank Natterer},
  title     = {The Mathematics of Computerized Tomography},
  year      = {2001},
  publisher = {SIAM},
  series    = {Classics in Applied Mathematics},
  address   = {Philadelphia}
}

@book{Evans2010,
  author    = {Lawrence C. Evans},
  title     = {Partial Differential Equations},
  edition   = {2},
  year      = {2010},
  publisher = {American Mathematical Society},
  series    = {Graduate Studies in Mathematics},
  address   = {Providence, RI}
}

@book{Zygmund2002,
  author    = {Antoni Zygmund},
  title     = {Trigonometric Series},
  edition   = {3},
  year      = {2002},
  publisher = {Cambridge University Press},
  address   = {Cambridge}
}

@article{BarnettMagland2019,
  author  = {Alex H. Barnett and Jeffrey Magland and Ludvig af Klinteberg},
  title   = {A Parallel Nonuniform Fast Fourier Transform Library Based on an Exponential of Semicircle Kernel},
  journal = {SIAM Journal on Scientific Computing},
  year    = {2019},
  volume  = {41},
  number  = {5},
  pages   = {C479--C504}
}

@book{Higham2002,
  author    = {Nicholas J. Higham},
  title     = {Accuracy and Stability of Numerical Algorithms},
  edition   = {2},
  year      = {2002},
  publisher = {SIAM},
  address   = {Philadelphia}
}

@article{contrast,
  title = {Ab-initio Contrast Estimation and Denoising of Cryo-EM Images},
  author = {Shi, Yunpeng and Singer, Amit},
  journal = {Computer Methods and Programs in Biomedicine},
  year = {2022},
  volume = {224},
  pages = {107018},
  doi = {10.1016/j.cmpb.2022.107018},
  url = {https://doi.org/10.1016/j.cmpb.2022.107018}
}

@article{fastPCA,
  title = {Fast Principal Component Analysis for Cryo-EM Images},
  author = {Marshall, Nicholas F. and Mickelin, Oscar and Shi, Yunpeng and Singer, Amit},
  journal = {Biological Imaging},
  year = {2023},
  volume = {3},
  pages = {e2},
  doi = {10.1017/S2633903X23000028},
  url = {https://doi.org/10.1017/S2633903X23000028},
}

@article{gong_radon_2023,
	title = {The {Radon} {Signed} {Cumulative} {Distribution} {Transform} and its applications in classification of {Signed} {Images}},
	url = {http://arxiv.org/abs/2307.15339},
	publisher = {arXiv},
	author = {Gong, Le and Li, Shiying and Pathan, Naqib Sad and Shifat-E-Rabbi, Mohammad and Rohde, Gustavo K. and Rubaiyat, Abu Hasnat Mohammad and Thareja, Sumati},
	month = jul,
	year = {2023}
}

@article{kolouri_radon_2016,
	title = {The {Radon} {Cumulative} {Distribution} {Transform} and {Its} {Application} to {Image} {Classification}},
	volume = {25},
	issn = {1057-7149, 1941-0042},
	url = {http://ieeexplore.ieee.org/document/7358128/},
	doi = {10.1109/TIP.2015.2509419},
	number = {2},
	journal = {IEEE Transactions on Image Processing},
	author = {Kolouri, Soheil and Park, Se Rim and Rohde, Gustavo K.},
	month = feb,
	year = {2016},
	pages = {920--934}
}

@article{park_geometry_2023,
	title = {Geometry and analytic properties of the sliced {Wasserstein} space},
	url = {http://arxiv.org/abs/2311.05134},
	publisher = {arXiv},
	author = {Park, Sangmin and Slepčev, Dejan},
	month = dec,
	year = {2023}
}

@misc{leeb_metrics_2023,
	title = {On metrics robust to noise, perturbations, and changes in projection angle},
	url = {http://arxiv.org/abs/2101.10867},
	publisher = {arXiv},
	author = {Leeb, William},
	month = aug,
	year = {2023}
}

@article{rangan_factorization_2020,
	title = {Factorization of the translation kernel for fast rigid image alignment},
	volume = {36},
	issn = {0266-5611, 1361-6420},
	url = {https://iopscience.iop.org/article/10.1088/1361-6420/ab4e66},
	doi = {10.1088/1361-6420/ab4e66},
	number = {2},
	journal = {Inverse Problems},
	author = {Rangan, Aaditya and Spivak, Marina and Andén, Joakim and Barnett, Alex},
	month = feb,
	year = {2020},
	pages = {024001}
}

@article{rangan_radial_2023,
	title = {Radial recombination for rigid rotational alignment of images and volumes},
	volume = {39},
	issn = {0266-5611, 1361-6420},
	url = {https://iopscience.iop.org/article/10.1088/1361-6420/aca047},
	doi = {10.1088/1361-6420/aca047},
	number = {1},
	journal = {Inverse Problems},
	author = {Rangan, Aaditya V},
	month = jan,
	year = {2023},
	pages = {015003}
}

@article{bonneel_sliced_2015,
	title = {Sliced and {Radon} {Wasserstein} {Barycenters} of {Measures}},
	volume = {51},
	issn = {0924-9907, 1573-7683},
	url = {http://link.springer.com/10.1007/s10851-014-0506-3},
	doi = {10.1007/s10851-014-0506-3},
	number = {1},
	journal = {Journal of Mathematical Imaging and Vision},
	author = {Bonneel, Nicolas and Rabin, Julien and Peyré, Gabriel and Pfister, Hanspeter},
	month = jan,
	year = {2015},
	pages = {22--45}
}

@article{shifat-e-rabbi_invariance_2023,
	title = {Invariance encoding in sliced-{Wasserstein} space for image classification with limited training data},
	volume = {137},
	issn = {00313203},
	url = {https://linkinghub.elsevier.com/retrieve/pii/S0031320322007476},
	doi = {10.1016/j.patcog.2022.109268},
	language = {en},
	journal = {Pattern Recognition},
	author = {Shifat-E-Rabbi, Mohammad and Zhuang, Yan and Li, Shiying and Rubaiyat, Abu Hasnat Mohammad and Yin, Xuwang and Rohde, Gustavo K.},
	month = may,
	year = {2023},
	pages = {109268}
}

@article{rao_wasserstein_2020,
	title = {Wasserstein {K}-{Means} for {Clustering} {Tomographic} {Projections}},
	url = {http://arxiv.org/abs/2010.09989},
	publisher = {arXiv},
	author = {Rao, Rohan and Moscovich, Amit and Singer, Amit},
	month = oct,
	year = {2020}
}

@article{singer_alignment_2024, 
        title={Alignment of density maps in Wasserstein distance}, 
        volume={4},
        DOI={10.1017/S2633903X24000059}, 
        journal={Biological Imaging}, 
        author={Singer, Amit and Yang, Ruiyi}, 
        year={2024},
        pages={e5}
}

@article{riahi_empot_2023,
	title = {{EMPOT}: partial alignment of density maps and rigid body fitting using unbalanced {Gromov}-{Wasserstein} divergence},
	url = {http://arxiv.org/abs/2311.00850},
	publisher = {arXiv},
	author = {Riahi, Aryan Tajmir and Zhang, Chenwei and Chen, James and Condon, Anne and Duc, Khanh Dao},
	month = nov,
	year = {2023}
}

@article{riahi_alignot_2023,
	title = {{AlignOT}: {An} {Optimal} {Transport} {Based} {Algorithm} for {Fast} {3D} {Alignment} {With} {Applications} to {Cryogenic} {Electron} {Microscopy} {Density} {Maps}},
	volume = {20},
	issn = {1545-5963, 1557-9964, 2374-0043},
	url = {https://ieeexplore.ieee.org/document/10298813/},
	doi = {10.1109/TCBB.2023.3327633},
	number = {6},
	journal = {IEEE/ACM Transactions on Computational Biology and Bioinformatics},
	author = {Riahi, Aryan Tajmir and Woollard, Geoffrey and Poitevin, Frédéric and Condon, Anne and Duc, Khanh Dao},
	month = nov,
	year = {2023},
	pages = {3842--3850}
}

@inproceedings{zelesko_earthmover-based_2020,
	title = {Earthmover-{Based} {Manifold} {Learning} for {Analyzing} {Molecular} {Conformation} {Spaces}},
	isbn = {978-1-5386-9330-8},
	url = {https://ieeexplore.ieee.org/document/9098723/},
	doi = {10.1109/ISBI45749.2020.9098723},
	booktitle = {2020 {IEEE} 17th {International} {Symposium} on {Biomedical} {Imaging} ({ISBI})},
	publisher = {IEEE},
	author = {Zelesko, Nathan and Moscovich, Amit and Kileel, Joe and Singer, Amit},
	month = apr,
	year = {2020},
	pages = {1715--1719}
}

@article{rabin_wasserstein_2012,
	address = {Berlin, Heidelberg},
	title = {Wasserstein {Barycenter} and {Its} {Application} to {Texture} {Mixing}},
	volume = {6667},
	isbn = {978-3-642-24784-2},
	url = {http://link.springer.com/10.1007/978-3-642-24785-9_37},
	booktitle = {Scale {Space} and {Variational} {Methods} in {Computer} {Vision}},
	publisher = {Springer Berlin Heidelberg},
	author = {Rabin, Julien and Peyré, Gabriel and Delon, Julie and Bernot, Marc},
	year = {2012},
	doi = {10.1007/978-3-642-24785-9_37},
	pages = {435--446},
}

@book{villani_optimal_2009,
	title = {Optimal {Transport}},
	volume = {338},
	copyright = {http://www.springer.com/tdm},
	isbn = {978-3-540-71050-9},
	url = {http://link.springer.com/10.1007/978-3-540-71050-9},
	publisher = {Springer Berlin Heidelberg},
	author = {Villani, Cédric},
	year = {2009},
	doi = {10.1007/978-3-540-71050-9},
}

@book{santambrogio_optimal_2015,
	title = {Optimal {Transport} for {Applied} {Mathematicians}: {Calculus} of {Variations}, {PDEs}, and {Modeling}},
	volume = {87},
	copyright = {https://www.springernature.com/gp/researchers/text-and-data-mining},
	isbn = {978-3-319-20827-5},
	url = {https://link.springer.com/10.1007/978-3-319-20828-2},
	publisher = {Springer International Publishing},
	author = {Santambrogio, Filippo},
	year = {2015},
	doi = {10.1007/978-3-319-20828-5},
}

@misc{peyre_computational_2020,
	title = {Computational {Optimal} {Transport}},
	url = {http://arxiv.org/abs/1803.00567},
	publisher = {arXiv},
	author = {Peyré, Gabriel and Cuturi, Marco},
	month = mar,
	year = {2020},
}

@inproceedings{zehni_cryoswd_2023,
	title = {{CryoSWD}: {Sliced} {Wasserstein} {Distance} {Minimization} for {3D} {Reconstruction} in {Cryo}-electron {Microscopy}},
	copyright = {https://doi.org/10.15223/policy-029},
	isbn = {978-1-72816-327-7},
	url = {https://ieeexplore.ieee.org/document/10095016/},
	doi = {10.1109/ICASSP49357.2023.10095016},
	booktitle = {{ICASSP} 2023 - 2023 {IEEE} {International} {Conference} on {Acoustics}, {Speech} and {Signal} {Processing} ({ICASSP})},
	publisher = {IEEE},
	author = {Zehni, Mona and Zhao, Zhizhen},
	month = jun,
	year = {2023},
	pages = {1--5},
}

@phdthesis{bonnotte2013unidimensional,
  title={Unidimensional and evolution methods for optimal transportation},
  author={Bonnotte, Nicolas},
  year={2013},
  school={Universit{\'e} Paris Sud-Paris XI; Scuola normale superiore (Pise, Italie)}
}

@inproceedings{shirdhonkar_approximate_2008,
	address = {Anchorage, AK, USA},
	title = {Approximate earth mover's distance in linear time},
	isbn = {978-1-4244-2242-5},
	url = {http://ieeexplore.ieee.org/document/4587662/},
	doi = {10.1109/CVPR.2008.4587662},
	booktitle = {2008 {IEEE} {Conference} on {Computer} {Vision} and {Pattern} {Recognition}},
	publisher = {IEEE},
	author = {Shirdhonkar, Sameer and Jacobs, David W.},
	month = jun,
	year = {2008},
	pages = {1--8},
}

@article{heimowitz_centering_2021,
	title = {Centering {Noisy} {Images} with {Application} to {Cryo}-{EM}},
	volume = {14},
	issn = {1936-4954},
	url = {https://epubs.siam.org/doi/10.1137/20M1365946},
	doi = {10.1137/20M1365946},
	number = {2},
	journal = {SIAM Journal on Imaging Sciences},
	author = {Heimowitz, Ayelet and Sharon, Nir and Singer, Amit},
	month = jan,
	year = {2021},
	pages = {689--716},
}

@article{sorzano_bias_2022,
	title = {On bias, variance, overfitting, gold standard and consensus in single-particle analysis by cryo-electron microscopy},
	volume = {78},
	issn = {2059-7983},
	url = {https://scripts.iucr.org/cgi-bin/paper?S2059798322001978},
	doi = {10.1107/S2059798322001978},
	number = {4},
	journal = {Acta Crystallographica Section D Structural Biology},
	author = {Sorzano, C. O. S. and Jiménez-Moreno, A. and Maluenda, D. and Martínez, M. and Ramírez-Aportela, E. and Krieger, J. and Melero, R. and Cuervo, A. and Conesa, J. and Filipovic, J. and Conesa, P. and Del Caño, L. and Fonseca, Y. C. and Jiménez-de La Morena, J. and Losana, P. and Sánchez-García, R. and Strelak, D. and Fernández-Giménez, E. and De Isidro-Gómez, F. P. and Herreros, D. and Vilas, J. L. and Marabini, R. and Carazo, J. M.},
	month = apr,
	year = {2022},
	pages = {410--423},
}

@inproceedings{cuturi_sinkhorn_2013,
 author = {Cuturi, Marco},
 booktitle = {Advances in Neural Information Processing Systems},
 editor = {C.J. Burges and L. Bottou and M. Welling and Z. Ghahramani and K.Q. Weinberger},
 publisher = {Curran Associates, Inc.},
 title = {Sinkhorn Distances: Lightspeed Computation of Optimal Transport},
 url = {https://proceedings.neurips.cc/paper_files/paper/2013/file/af21d0c97db2e27e13572cbf59eb343d-Paper.pdf},
 volume = {26},
 year = {2013}
}

@article{avants_symmetric_2008,
	title = {Symmetric diffeomorphic image registration with cross-correlation: {Evaluating} automated labeling of elderly and neurodegenerative brain},
	volume = {12},
	copyright = {https://www.elsevier.com/tdm/userlicense/1.0/},
	issn = {13618415},
	shorttitle = {Symmetric diffeomorphic image registration with cross-correlation},
	url = {https://linkinghub.elsevier.com/retrieve/pii/S1361841507000606},
	doi = {10.1016/j.media.2007.06.004},
	number = {1},
	journal = {Medical Image Analysis},
	author = {Avants, B and Epstein, C and Grossman, M and Gee, J},
	month = feb,
	year = {2008},
	pages = {26--41},
}

@article{toader_methods_2023,
	title = {Methods for {Cryo}-{EM} {Single} {Particle} {Reconstruction} of {Macromolecules} {Having} {Continuous} {Heterogeneity}},
	volume = {435},
	issn = {00222836},
	url = {https://linkinghub.elsevier.com/retrieve/pii/S0022283623000761},
	doi = {10.1016/j.jmb.2023.168020},
	number = {9},
	journal = {Journal of Molecular Biology},
	author = {Toader, Bogdan and Sigworth, Fred J. and Lederman, Roy R.},
	month = may,
	year = {2023},
	pages = {168020},
}

@article{lintott_galaxy_2008,
	title = {Galaxy {Zoo}: morphologies derived from visual inspection of galaxies from the {Sloan} {Digital} {Sky} {Survey}},
	volume = {389},
	issn = {00358711, 13652966},
	url = {https://academic.oup.com/mnras/article-lookup/doi/10.1111/j.1365-2966.2008.13689.x},
	doi = {10.1111/j.1365-2966.2008.13689.x},
	number = {3},
	journal = {Monthly Notices of the Royal Astronomical Society},
	author = {Lintott, Chris J. and Schawinski, Kevin and Slosar, Anže and Land, Kate and Bamford, Steven and Thomas, Daniel and Raddick, M. Jordan and Nichol, Robert C. and Szalay, Alex and Andreescu, Dan and Murray, Phil and Vandenberg, Jan},
	month = sep,
	year = {2008},
	pages = {1179--1189},
}

@article{singer_computational_2020,
	title = {Computational {Methods} for {Single}-{Particle} {Electron} {Cryomicroscopy}},
	volume = {3},
	issn = {2574-3414, 2574-3414},
	url = {https://www.annualreviews.org/doi/10.1146/annurev-biodatasci-021020-093826},
	doi = {10.1146/annurev-biodatasci-021020-093826},
	number = {1},
	journal = {Annual Review of Biomedical Data Science},
	author = {Singer, Amit and Sigworth, Fred J.},
	month = jul,
	year = {2020},
	pages = {163--190},
}

@article{tang_conformational_2023,
	title = {Conformational heterogeneity and probability distributions from single-particle cryo-electron microscopy},
	volume = {81},
	issn = {0959440X},
	url = {https://linkinghub.elsevier.com/retrieve/pii/S0959440X23001008},
	doi = {10.1016/j.sbi.2023.102626},
	journal = {Current Opinion in Structural Biology},
	author = {Tang, Wai Shing and Zhong, Ellen D. and Hanson, Sonya M. and Thiede, Erik H. and Cossio, Pilar},
	month = aug,
	year = {2023},
	pages = {102626},
}

@article{kolouri_optimal_2017,
  author={Kolouri, Soheil and Park, Se Rim and Thorpe, Matthew and Slepcev, Dejan and Rohde, Gustavo K.},
  journal={IEEE Signal Processing Magazine}, 
  title={Optimal Mass Transport: Signal processing and machine-learning applications}, 
  year={2017},
  volume={34},
  number={4},
  pages={43-59},
  doi={10.1109/MSP.2017.2695801}
}

@article{dutt_fast_1993,
	title = {Fast {Fourier} {Transforms} for {Nonequispaced} {Data}},
	volume = {14},
	issn = {1064-8275, 1095-7197},
	url = {http://epubs.siam.org/doi/10.1137/0914081},
	doi = {10.1137/0914081},
	number = {6},
	journal = {SIAM Journal on Scientific Computing},
	author = {Dutt, A. and Rokhlin, V.},
	month = nov,
	year = {1993},
	pages = {1368--1393},
}

@article{barnett_parallel_2019,
	title = {A {Parallel} {Nonuniform} {Fast} {Fourier} {Transform} {Library} {Based} on an “{Exponential} of {Semicircle}" {Kernel}},
	volume = {41},
	issn = {1064-8275, 1095-7197},
	url = {https://epubs.siam.org/doi/10.1137/18M120885X},
	doi = {10.1137/18M120885X},
	number = {5},
	journal = {SIAM Journal on Scientific Computing},
	author = {Barnett, Alexander H. and Magland, Jeremy and Af Klinteberg, Ludvig},
	month = jan,
	year = {2019},
	pages = {C479--C504},
}

@article{yang_cryo-em_2008,
	title = {Cryo-{EM} image alignment based on nonuniform fast {Fourier} transform},
	volume = {108},
	copyright = {https://www.elsevier.com/tdm/userlicense/1.0/},
	issn = {03043991},
	url = {https://linkinghub.elsevier.com/retrieve/pii/S0304399108000533},
	doi = {10.1016/j.ultramic.2008.03.006},
	number = {9},
	journal = {Ultramicroscopy},
	author = {Yang, Zhengfan and Penczek, Pawel A.},
	month = aug,
	year = {2008},
	pages = {959--969},
}

@article{rubner_earth_2000,
	title = {The {Earth} {Mover}'s {Distance} as a {Metric} for {Image} {Retrieval}},
	volume = {40},
	issn = {1573-1405},
	url = {https://doi.org/10.1023/A:1026543900054},
	doi = {10.1023/A:1026543900054},
	number = {2},
	journal = {International Journal of Computer Vision},
	author = {Rubner, Yossi and Tomasi, Carlo and Guibas, Leonidas J.},
	month = nov,
	year = {2000},
	pages = {99--121},
}

@INPROCEEDINGS{kolouri_generalized_2022,
  author={Kolouri, Soheil and Nadjahi, Kimia and Shahrampour, Shahin and Şimşekli, Umut},
  booktitle={ICASSP 2022 - 2022 IEEE International Conference on Acoustics, Speech and Signal Processing (ICASSP)}, 
  title={Generalized Sliced Probability Metrics}, 
  year={2022},
  pages={4513-4517},
  doi={10.1109/ICASSP43922.2022.9746016}
}

@INPROCEEDINGS{deshpande_max_2019,
  author={Deshpande, Ishan and Hu, Yuan-Ting and Sun, Ruoyu and Pyrros, Ayis and Siddiqui, Nasir and Koyejo, Sanmi and Zhao, Zhizhen and Forsyth, David and Schwing, Alexander G.},
  booktitle={2019 IEEE/CVF Conference on Computer Vision and Pattern Recognition (CVPR)}, 
  title={Max-Sliced Wasserstein Distance and Its Use for GANs}, 
  year={2019},
  pages={10640-10648},
  doi={10.1109/CVPR.2019.01090}
}

@ARTICLE{reddy_fft_1996,
  author={Reddy, B.S. and Chatterji, B.N.},
  journal={IEEE Transactions on Image Processing}, 
  title={An FFT-based technique for translation, rotation, and scale-invariant image registration}, 
  year={1996},
  volume={5},
  number={8},
  pages={1266-1271},
  doi={10.1109/83.506761}
}

@article{solomon_convolutional_2015,
author = {Solomon, Justin and de Goes, Fernando and Peyr\'{e}, Gabriel and Cuturi, Marco and Butscher, Adrian and Nguyen, Andy and Du, Tao and Guibas, Leonidas},
title = {Convolutional wasserstein distances: efficient optimal transportation on geometric domains},
year = {2015},
issue_date = {August 2015},
publisher = {Association for Computing Machinery},
address = {New York, NY, USA},
volume = {34},
number = {4},
issn = {0730-0301},
doi = {10.1145/2766963},
journal = {ACM Trans. Graph.},
month = jul,
articleno = {66}
}

@ARTICLE{bonneel_survey_2023,
  title     = "A survey of optimal transport for computer graphics and computer
               vision",
  author    = "Bonneel, Nicolas and Digne, Julie",
  journal   = "Comput. Graph. Forum",
  publisher = "Wiley",
  volume    =  42,
  number    =  2,
  pages     = "439--460",
  month     =  may,
  year      =  2023,
}

@inproceedings{kolouri_generalized_2019,
 author = {Kolouri, Soheil and Nadjahi, Kimia and Simsekli, Umut and Badeau, Roland and Rohde, Gustavo},
 booktitle = {Advances in Neural Information Processing Systems},
 editor = {H. Wallach and H. Larochelle and A. Beygelzimer and F. d\textquotesingle Alch\'{e}-Buc and E. Fox and R. Garnett},
 pages = {},
 publisher = {Curran Associates, Inc.},
 title = {Generalized Sliced Wasserstein Distances},
 volume = {32},
 year = {2019}
}

@InProceedings{Kolouri_2016_CVPR,
author = {Kolouri, Soheil and Zou, Yang and Rohde, Gustavo K.},
title = {Sliced Wasserstein Kernels for Probability Distributions},
booktitle = {Proceedings of the IEEE Conference on Computer Vision and Pattern Recognition (CVPR)},
year = {2016}
}

@misc{Deshpande_2018_generative,
      title={Generative Modeling using the Sliced Wasserstein Distance}, 
      author={Ishan Deshpande and Ziyu Zhang and Alexander Schwing},
      year={2018},
      archivePrefix={arXiv},
      primaryClass={cs.CV},
      url={https://arxiv.org/abs/1803.11188}, 
}

@article{flamary2021pot,
  author  = {R{\'e}mi Flamary and Nicolas Courty and Alexandre Gramfort and Mokhtar Z. Alaya and Aur{\'e}lie Boisbunon and Stanislas Chambon and Laetitia Chapel and Adrien Corenflos and Kilian Fatras and Nemo Fournier and L{\'e}o Gautheron and Nathalie T.H. Gayraud and Hicham Janati and Alain Rakotomamonjy and Ievgen Redko and Antoine Rolet and Antony Schutz and Vivien Seguy and Danica J. Sutherland and Romain Tavenard and Alexander Tong and Titouan Vayer},
  title   = {POT: Python Optimal Transport},
  journal = {Journal of Machine Learning Research},
  year    = {2021},
  volume  = {22},
  number  = {78},
  pages   = {1-8},
  url     = {http://jmlr.org/papers/v22/20-451.html}
}

@misc{sisouk_users_2025,
	title = {A {User}'s {Guide} to {Sampling} {Strategies} for {Sliced} {Optimal} {Transport}},
	copyright = {arXiv.org perpetual, non-exclusive license},
	url = {https://arxiv.org/abs/2502.02275},
	doi = {10.48550/ARXIV.2502.02275},
	publisher = {arXiv},
	author = {Sisouk, Keanu and Delon, Julie and Tierny, Julien},
	year = {2025},
}

@article{lakshmanan_nonequispaced_2023,
	title = {Nonequispaced {Fast} {Fourier} {Transform} {Boost} for the {Sinkhorn} {Algorithm}},
	volume = {58},
	issn = {1068-9613, 1068-9613},
	url = {https://hw.oeaw.ac.at?arp=0x003e223f},
	doi = {10.1553/etna_vol58s289},
	journal = {ETNA - Electronic Transactions on Numerical Analysis},
	author = {Lakshmanan, Rajmadan and Pichler, Alois and Potts, Daniel},
	year = {2023},
	pages = {289--315},
}

@article{cramer_composition_1928,
	title = {On the composition of elementary errors: {First} paper: {Mathematical} deductions},
	volume = {1928},
	issn = {0346-1238, 1651-2030},
	shorttitle = {On the composition of elementary errors},
	url = {http://www.tandfonline.com/doi/abs/10.1080/03461238.1928.10416862},
	doi = {10.1080/03461238.1928.10416862},
	number = {1},
	journal = {Scandinavian Actuarial Journal},
	author = {Cramér, Harald},
	month = jan,
	year = {1928},
	pages = {13--74},
}

\end{document}